\documentclass[11pt,twoside]{article}

\usepackage[T1]{fontenc}
\usepackage[utf8]{inputenc}
\usepackage{microtype}

\usepackage{amsmath}
\usepackage{amssymb}
\usepackage{amsthm}

\usepackage{graphicx}
\usepackage{booktabs}
\usepackage{array}
\usepackage{float}

\usepackage[margin=1.25in]{geometry}

\usepackage{caption}
\usepackage{enumitem}

\usepackage{authblk}

\usepackage{xcolor}

\usepackage{hyperref}
\hypersetup{hidelinks}

\usepackage{tocloft}
\usepackage{etoc}
\newtheorem{theo}{Theorem}[section]
\newtheorem{lem}[theo]{Lemma}

\newtheorem{cor}[theo]{Corollary}

\theoremstyle{definition}

\theoremstyle{plain}
\newtheorem{rem}[theo]{Remark}

\makeatletter
\newcommand{\@giventhatstar}[2]{#1\;\middle|\;#2}
\newcommand{\@giventhatnostar}[3][]{#1#2\;#1|\;#3#1}
\newcommand{\giventhat}{%
  \@ifstar\@giventhatstar\@giventhatnostar
}
\makeatother

\title{Extreme classification: beating chance with \\ one training example from each class
} 
\author[1]{Kevin Bleakley}
\author[2]{Aaditya Ramdas}

\affil[1]{Inria and Laboratoire de Mathématiques d'Orsay, France}
\affil[2]{Department of Statistics, Stanford University, Stanford, CA, USA}

\begin{document}

\maketitle

\begin{abstract}

\noindent We study a minimal classification problem: Given independent labeled observations $X\sim P$ and $Z\sim Q$ from two unknown distributions $P,Q$, and given an independent target $Y$ drawn with equal probability from $P$ or $Q$, can one classify $Y$ strictly better than chance whenever $P\neq Q$? The one-nearest-neighbor rule succeeds for every pair of multivariate Gaussian distributions with distinct means and a common positive-definite covariance matrix but can perform strictly worse than chance even for smooth densities on the real line. We construct a fixed randomized kernel rule whose expected accuracy is exactly $1/2+\operatorname{MMD}_k^2(P,Q)/4$, and obtain characteristic kernels on countably generated measurable spaces from countable families of measurable binary questions. 
We also prove that a deterministic order rule on $\mathbb R$ beats chance for every pair of distinct Borel probability measures. A measurable encoding then gives a deterministic distribution-free rule which beats chance on every countably generated measurable space, in particular every separable metric space. 
Finally, we show that no rule works for every distinct pair of distributions and every unknown unbalanced class prior; under adaptive target-class selection, every rule other than a fair coin is strictly worse than chance for some finitely supported pair.
\end{abstract}

\section{Introduction}

How little labeled information is needed to classify an observation better than chance? We study perhaps the smallest non-trivial version of this question. Suppose \(P\) and \(Q\) are two distinct, unknown probability distributions on the same measurable space. We observe one labeled reference point drawn independently from each:
$X\sim P$ and
$Z\sim Q$.
We then observe an unlabeled target \(Y\), which is generated from \(P\) with probability \(1/2\) and from \(Q\) with probability \(1/2\). The task is to predict which distribution generated \(Y\), using only the triple \((X,Y,Z)\). Before observing the data, neither distribution is more likely than the other, and an uninformed guess has expected accuracy \(1/2\). Our central question is whether the three observations themselves contain enough information to improve strictly on this baseline whenever \(P\neq Q\).

For this balanced experiment, let $L\in\{P,Q\}$ denote the unobserved target label, with
\[
\mathbb P(L=P)=\mathbb P(L=Q)=\tfrac12,
\qquad L\ \text{independent of }(X,Z).
\]
Conditional on $L$, the target $Y$ is a fresh draw from its corresponding distribution, independent of $X$ and $Z$. Any random seed used by the classifier is independent of this entire experiment. Unless otherwise stated, accuracy is averaged over $X,Z,L,Y$ and the classifier's randomization, not conditional on the realized references. When $P=Q$, the symbols $P$ and $Q$ still denote two distinct class labels with the same observation law.

The balanced prior removes the possibility of beating $1/2$ merely by exploiting class frequencies. If the class probabilities are known and unequal, always predicting the more likely class already beats $1/2$. 
Our principal existence claim is that there exists a classifier, fixed independently of $P$ and $Q$, that beats chance:
\[
\exists\,\text{ classifier }\mathcal C:\quad\forall\,P\neq Q,
\qquad \operatorname{Accuracy}(\mathcal C;P,Q)>\tfrac12,
\]
where all products carry their product sigma-algebras and the measurable rule is fixed for the underlying space independently of $P$ and $Q$. We prove this on every countably generated measurable space, including every separable metric space, for certain randomized and deterministic rules $\mathcal C$. In contrast, no fixed rule has this guarantee simultaneously for all distinct distributions and all unknown unbalanced priors on the class of $Y$. An adversary allowed to select the target class after observing $X$ and $Z$ also precludes a universal strict improvement; see Section~\ref{subsec:unknown-adversarial}.

Let us start by considering an easier setting in which \(P\) and \(Q\) are completely known. For a deterministic measurable classifier $\mathcal{C}$, let
$A = \{y : \mathcal{C}(y) = P\}$.
Conversely, every measurable set $A$ defines such a classifier by predicting $P$ on $A$ and $Q$ on $A^c$. Under the  \(1/2\)--\(1/2\) class prior, the accuracy of the classifier associated with \(A\) is
\[
\begin{aligned}
\operatorname{Acc}(A)
&=
\frac{1}{2}P(A)
+
\frac{1}{2}Q(A^c)
\\
&=
\frac{1}{2}
+
\frac{1}{2}\bigl(P(A)-Q(A)\bigr).
\end{aligned}
\]
Taking the supremum over all measurable sets \(A\), this gives the Bayes accuracy:
\begin{equation}\label{eq:bayes_acc}
\operatorname{Acc}_{\mathrm{Bayes}}
=
\frac{1}{2}
+
\frac{1}{2}
\sup_A \bigl(P(A)-Q(A)\bigr) = \frac{1}{2}
+
\frac{1}{2}\operatorname{TV}(P,Q),
\end{equation}
where the total variation distance between $P,Q$ is denoted
\[
\operatorname{TV}(P,Q)
:=
\sup_A |P(A)-Q(A)|.
\]
Equivalently, the Bayes error is:
\begin{equation}\label{eq:bayes}
R^*
=
\frac{1}{2}
\bigl(1-\operatorname{TV}(P,Q)\bigr).
\end{equation}
Thus, in the equal-prior setting the advantage over random guessing of an optimal classifier that knows both distributions is exactly one half of the TV distance, and no other rule can do better than this.

The setting we consider here is much more restrictive: \(P\) and \(Q\) are unknown, and only one labeled draw from each is available. In Euclidean space,  one immediately obvious classifier is the one-nearest neighbor (1-NN) rule $\mathcal{C}^{\text{1-NN}}$: predict that \(Y\) came from \(P\) when it is closer to \(X\) than to \(Z\), and predict \(Q\) when it is closer to \(Z\) (with equidistant ties resolved by an independent fair coin throughout this article).
Cover and Hart \cite{cover1967nearest} showed, under mild regularity conditions on the distributions $P$ and $Q$, that the limiting error \(R_{\text{1-NN},\infty}\) of the one-nearest-neighbor classifier satisfies
$
R^*
\leq
R_{\text{1-NN},\infty}
\leq
2R^*(1-R^*)$.
 Substituting Eq.~\ref{eq:bayes} into the upper bound and defining $\operatorname{Acc}_{\text{1-NN},\infty} := 1 - R_{\text{1-NN},\infty}$\, gives
\begin{equation}\label{eq:cover}
\operatorname{Acc}_{\text{1-NN},\infty}
\geq
\frac{1}{2}
+
\frac{1}{2}\operatorname{TV}(P,Q)^2.
\end{equation}
Next, since \(\operatorname{TV}(P,Q)=0\) if and only if \(P(A)=Q(A)\) for every measurable set \(A\), or equivalently if and only if \(P=Q\), it follows that, under Cover and Hart's regularity conditions, whenever \(P\neq Q\), asymptotic 1-NN classification is guaranteed to perform strictly better than chance. The right-hand side of Eq.~\ref{eq:cover} involves squared TV, whereas Eq.~\ref{eq:bayes_acc} is an exact identity involving TV itself. The former is an asymptotic lower bound under the usual growing-training-sample protocol, not a finite-sample identity for our experiment.

Despite the fact that the setting considered here is extremely non-asymptotic  ($n=2$ is fixed) with one labeled observation from each distribution, we would like to know: can we still beat chance with the 1-NN rule---or with other rules? And if so: can we bound the expected accuracy from below like in Equations~\ref{eq:bayes_acc} and~\ref{eq:cover}?
One might hope that the answer is yes for the 1-NN rule: perhaps, whenever \(P\neq Q\), a target observation \(Y\) is on average more likely to lie closer to the observation drawn from its own distribution than to the observation drawn from the other. We will show that this intuition is correct when $P$ and $Q$ correspond to certain Gaussian families with unknown parameters, including all pairs of distinct nondegenerate one-dimensional Gaussian distributions. In general, however, it is false: there exist probability distributions \(P\neq Q\), even on \(\mathbb{R}\), for which the 1-NN rule has expected accuracy strictly less than \(1/2\).

We will see, however, that this is a failure of the 1-NN decision rule rather than the information contained in the observations. 
For each fixed characteristic kernel taking values in $[0,1]$, we give a single randomized similarity-based rule that uses only $(X,Y,Z)$ and has balanced expected accuracy strictly greater than $1/2$ for every pair of distinct Borel probability measures on $\mathbb R^d$.

Here is one such rule: suppose \(k\) is a positive semidefinite kernel taking values in \([0,1]\), where \(k(u,v)\) can be interpreted as a measure of similarity between \(u\) and \(v\): larger values indicate greater similarity. Consider the randomized rule that predicts \(P\) with probability
\[
\psi(X,Y,Z)
:=
\frac{1}{2}
+
\frac{1}{2}
\bigl(
k(X,Y)-k(Y,Z)
\bigr),
\]
and \(Q\) otherwise. The difference
$
k(X,Y)-k(Y,Z)
$
lies in \([-1,1]\) and so \(\psi(X,Y,Z)\) always lies in \([0,1]\) and is  therefore  always a valid probability. This rule is more likely to predict \(P\) when \(Y\) is more similar to the reference point \(X\), more likely to predict \(Q\) when \(Y\) is more similar to \(Z\), and predicts each class with probability \(1/2\) when the two similarities are equal.
We will see that the expected accuracy of this rule is
\begin{equation}\label{eq:MMD_acc}
\operatorname{Acc_{Rand}}
=
\frac{1}{2}
+
\frac{1}{4}\operatorname{MMD}_k^2(P,Q),
\end{equation}
where $\operatorname{MMD}_k^2(P,Q)$ is the (squared) maximum mean discrepancy \cite{gretton2012kernel} between the distributions $P$ and $Q$ with respect to the kernel $k$. 
If the kernel \(k\) is \emph{characteristic} (e.g., Gaussian), we know that
$
\operatorname{MMD}_k(P,Q)=0$ if and only if
$
P=Q$ \cite{gretton2012kernel,sriperumbudur2010hilbert}.
Consequently, for every fixed pair \(P\neq Q\), the randomized rule has expected accuracy strictly greater than \(1/2\). We also see that the right-hand side of Eq.~\ref{eq:MMD_acc} again has the same general form seen in Eq.~\ref{eq:bayes_acc} and Eq.~\ref{eq:cover}: 
\begin{equation}\label{eq:discrepancy}
\frac{1}{2} + \text{const}\cdot discrepancy(P,Q) .
\end{equation}

It turns out that similar results can be obtained on non-Euclidean spaces. On  separable metric spaces, we can construct randomized classifiers based on countable dictionaries of measurable binary ``questions''. Informally, such rules randomly choose a yes--no question about the observations and compare the answer for the target \(Y\) with the answers for the two labeled observations $X$ and $Z$. With an appropriate collection of questions, the resulting classifier again achieves expected accuracy strictly greater than \(1/2\) for every pair of distinct probability measures on the space in question. Examples include distributions over functions and distributions over labeled graphs, which highlights that the basic phenomenon under investigation is not fundamentally about geometric proximity.

Randomization is not necessary for unrestricted existence: the order rule in Section~\ref{subsec:order_rule}, combined with the measurable encoding in Corollary~\ref{cor:deterministic-encoding}, gives a deterministic rule on the same class of spaces. The encoding is an existence construction and need not respect the original geometry or admit an efficient implementation; the kernel and random-question constructions remain useful explicit alternatives.

Connections between classification and two-sample testing also underlie classification-accuracy tests studied by Kim et al.~\cite{kim2021classification}. Here we ask a different, fixed-sample question: whether a rule using exactly one reference from each class has population accuracy strictly above chance, with no separation margin assumed. The MMD identity connects this question to the distributional-discrepancy framework of Gretton et al.~\cite{gretton2012kernel}.

The article is structured as follows. Section~\ref{gaussian} establishes the Gaussian 1-NN results, including for arbitrary common positive-definite covariance matrices, and constructs discrete and smooth counterexamples. Section~\ref{random} proves the exact kernel accuracy identity, compares it with 1-NN, and shows that clipping can destroy the universal guarantee. Section~\ref{beyond} develops random questions on countably generated measurable spaces and gives function and graph examples. Section~\ref{extensions} proves the deterministic real-line result, transfers it to general countably generated spaces, gives a multiclass construction, and studies unknown priors and adaptive target-class selection.

\section{The partial success and general failure of 1-NN}\label{gaussian}

In this section we show that in Euclidean space, the intuitive 1-NN rule beats chance for certain general classes of probability distribution, notably Gaussian distributions. We then show via counter-examples that unfortunately, 1-NN does not beat chance in general.  

\subsection{Gaussian distributions with equal variance}\label{equal2}

Suppose that we are told that $P$ and $Q$ are one-dimensional Gaussian distributions with the same variance, but we do not know the values of the variance or the two means. Formally, suppose that 
\[
X \sim \mathcal{N}(\mu_P,\sigma^2), \qquad Z \sim \mathcal{N}(\mu_Q,\sigma^2),
\] 
and that $Y$ is drawn from the 50--50 mixture of the two:
\[
Y \sim \tfrac12 \mathcal{N}(\mu_P,\sigma^2) + \tfrac12 \mathcal{N}(\mu_Q,\sigma^2).
\]
In this Gaussian setting, to simplify notation and proofs, we write $\phi_{m,\sigma^2}$ for the Gaussian probability density function (pdf) with mean $m$ and variance $\sigma^2$, and $\Phi_{m,\sigma^2}$ for the corresponding cumulative distribution function (cdf).

Even without knowing the actual values of $\mu_P$, $\mu_Q$, or $\sigma^2$, the following result shows that the 1-NN rule is---on average---better than a coin flip if $\mu_P \neq \mu_Q$. (If $\mu_P = \mu_Q$, the two Gaussian distributions are identical and a coin flip is indeed optimal.)

\begin{theo}\label{theo1}
Let $P=\mathcal N(\mu_P,\sigma^2)$ and $Q=\mathcal N(\mu_P+\epsilon,\sigma^2)$, where $\mu_P\in\mathbb R$, $\sigma^2>0$, and $\epsilon\neq0$ are unknown. Draw $X\sim P$ and $Z\sim Q$ independently, and generate the target under the balanced experiment. The 1-NN rule
\begin{equation}\label{eqth1}
\mathcal C^{\mathrm{1\text{-}NN}}(x,y,z)=
\begin{cases}
 P,&|x-y|<|z-y|,\\
 Q,&|x-y|>|z-y|,
\end{cases}
\end{equation}
with fair tie-breaking, has expected accuracy strictly greater than $1/2$.
\end{theo}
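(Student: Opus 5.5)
The plan is to reduce the 1-NN event to a sign condition on two jointly Gaussian variables that turn out to be independent, which gives a closed form for the accuracy. Ties $|X-Y|=|Z-Y|$ occur with probability zero, since $(X,Y,Z)$ has a density, so tie-breaking can be ignored. For real numbers, $|x-y|<|z-y|$ is equivalent to $(x-y)^2<(z-y)^2$, that is, to
\[
(x-z)(x+z-2y)<0 .
\]
So I will introduce $U:=X-Z$ and $V:=X+Z-2Y$. The 1-NN rule predicts $P$ exactly when $UV<0$, and predicts $Q$ when $UV>0$.

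The key step is that, conditional on the target label $L$, the pair $(U,V)$ is jointly Gaussian with
\[
\operatorname{Cov}(U,V)=\operatorname{Var}(X)-\operatorname{Var}(Z)=\sigma^2-\sigma^2=0 .
\]
This uses that $Y$ is independent of $(X,Z)$ and that the two variances are equal. Hence $U$ and $V$ are independent. Their laws are as follows.
\begin{itemize}
\item $U\sim\mathcal N(-\epsilon,2\sigma^2)$ regardless of $L$.
\item If $L=P$, then $\mathbb E V=\mu_P+(\mu_P+\epsilon)-2\mu_P=\epsilon$.
\item If $L=Q$, then $\mathbb E V=-\epsilon$.
\item In both cases $\operatorname{Var}V=6\sigma^2$.
\end{itemize}

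Set $a:=\mathbb P(U<0)=\Phi_{0,1}\bigl(\epsilon/(\sqrt2\sigma)\bigr)$ and $b:=\mathbb P(V>0\mid L=P)=\Phi_{0,1}\bigl(\epsilon/(\sqrt6\sigma)\bigr)$. By the symmetry of $V$ under $L=Q$, we have $\mathbb P(V>0\mid L=Q)=1-b$. Using independence of $U$ and $V$, the two conditional accuracies are
\[
\mathbb P(UV<0\mid L=P)=ab+(1-a)(1-b),
\qquad
\mathbb P(UV>0\mid L=Q)=(1-a)(1-b)+ab .
\]
These coincide, so averaging over $L$ gives
\[
\operatorname{Acc}=ab+(1-a)(1-b)=\tfrac12+2\bigl(a-\tfrac12\bigr)\bigl(b-\tfrac12\bigr).
\]

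To finish, note that $a-\tfrac12$ and $b-\tfrac12$ both have the sign of $\epsilon$, and both are nonzero because $\epsilon\neq0$ and $\sigma>0$. Their product is therefore strictly positive, and the accuracy is strictly greater than $1/2$ for every unknown $\mu_P$, $\sigma^2$ and $\epsilon\neq0$.

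The main obstacle is noticing the change of variables $(U,V)$ together with the vanishing covariance. This is exactly where the equal-variance hypothesis enters. Once independence is established, the rest is bookkeeping with two Gaussian cdf values.
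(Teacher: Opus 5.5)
Your proof is correct and follows essentially the same route as the paper. You use the same factorization of the distance comparison into a product of two jointly Gaussian linear combinations (your first factor is the paper's $W=Z-X$ up to sign), the same zero-covariance-implies-independence step where the equal variances enter, and the same identity $\tfrac12+\tfrac12(2p-1)(2q-1)$. The only minor difference is that you compute the $L=Q$ conditional accuracy directly via $\mathbb E V=-\epsilon$, whereas the paper obtains it from the $L=P$ case using a distance-preserving reflection that swaps the two laws.
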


\noindent
The proof---deferred to Section~\ref{equal} in the Appendix---first conditions on the distribution generating \(Y\) and, by symmetry, reduces to the case \(L=P\). In that case, the nearest-neighbor event is \((Z-X)(Z+X-2Y)>0\). Since \(X\) and \(Z\) have the same variance, these two jointly Gaussian factors have zero covariance and are therefore independent. Moreover, they have the same non-zero mean, so they are more likely to have the same sign than opposite signs, which leads to the result.

Theorem~\ref{theo1} extends to any dimension with an arbitrary common positive-definite covariance matrix; isotropy is not required.

\begin{cor}\label{cor:common-covariance}
Let $P=\mathcal N(\boldsymbol\mu,\Sigma)$ and $Q=\mathcal N(\boldsymbol\mu+\boldsymbol\epsilon,\Sigma)$ on $\mathbb R^d$, where $d\ge1$, $\Sigma$ is positive definite, and $\boldsymbol\epsilon\neq\mathbf0$. The means and covariance matrix are unknown. Under the balanced experiment, Euclidean 1-NN has expected accuracy strictly greater than $1/2$. In fact, both class-conditional accuracies are equal and strictly greater than $1/2$.
\end{cor}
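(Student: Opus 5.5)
The plan is to fix the target label, reduce the nearest-neighbour event to the sign of an inner product of two independent Gaussian vectors, and then diagonalise $\Sigma$ so that the sign question becomes one about a sum of independent products.

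\textbf{Reduction.} Condition on $L=P$, so $X,Y\sim\mathcal N(\boldsymbol\mu,\Sigma)$ and $Z\sim\mathcal N(\boldsymbol\mu+\boldsymbol\epsilon,\Sigma)$ are independent. As in the proof of Theorem~\ref{theo1}, the rule predicts $P$ exactly when $\|Z-Y\|^2>\|X-Y\|^2$, i.e. when $U^\top V>0$, where $U=Z-X$ and $V=Z+X-2Y$. These are jointly Gaussian with
\[
\operatorname{Cov}(U,V)=\operatorname{Cov}(Z)-\operatorname{Cov}(X)=0,
\]
so $U\sim\mathcal N(\boldsymbol\epsilon,2\Sigma)$ and $V\sim\mathcal N(\boldsymbol\epsilon,6\Sigma)$ are independent. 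Ties have probability zero.

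For $L=Q$, apply the affine reflection $v\mapsto 2\boldsymbol\mu+\boldsymbol\epsilon-v$. It is a Euclidean isometry, swaps the two class laws, and swaps the roles of $X$ and $Z$. The same computation then gives the same pair of laws for $(U,V)$. So both class-conditional accuracies equal $\mathbb P(U^\top V>0)$, and it remains to show this exceeds $1/2$.

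\textbf{Diagonalisation.} A naive conditioning, such as writing $\mathbb P(U^\top v>0)=\Phi\bigl(\boldsymbol\epsilon^\top v/\sqrt{2v^\top\Sigma v}\bigr)$ and pairing $v$ with $-v$, fails. The sign of the integrand is governed by $\boldsymbol\epsilon^\top v$, but the asymmetry of the density of $V$ is governed by $\boldsymbol\epsilon^\top\Sigma^{-1}v$, and these disagree when $\Sigma$ is anisotropic.

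Instead, I will work in an orthonormal eigenbasis of $\Sigma$, with eigenvalues $\lambda_i>0$. Then $U^\top V=\sum_i T_i$ with $T_i=U_iV_i$. All coordinates are independent, with $U_i\sim\mathcal N(\epsilon_i,2\lambda_i)$ and $V_i\sim\mathcal N(\epsilon_i,6\lambda_i)$. Flipping the sign of a coordinate axis, I may assume $\epsilon_i\ge0$, and some $\epsilon_{i_0}>0$.

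\textbf{Each term is positively biased.} For $t>0$, pairing $u$ with $-u$ in the product density formula gives
\[
f_{T_i}(t)-f_{T_i}(-t)=\int_0^\infty \bigl(\phi_U(u)-\phi_U(-u)\bigr)\bigl(\phi_V(t/u)-\phi_V(-t/u)\bigr)\,\frac{du}{u}.
\]
Since both means are $\epsilon_i\ge 0$, each factor is nonnegative, and both are strictly positive when $\epsilon_i>0$. Hence $T_i\overset{d}{=}\xi_iR_i$, where $R_i=|T_i|$ and, conditionally on $R_i$, the sign $\xi_i=+1$ with probability $p_i(R_i)\ge 1/2$. Moreover $p_{i_0}>1/2$ on a set of positive probability.

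\textbf{Monotone coupling (main step).} I expect this to be the main obstacle. The function $\mathbb P(\sum_i\xi_iR_i>0)-\mathbb P(\sum_i\xi_iR_i<0)$ is nondecreasing in each $p_i$, because changing a single sign from $-$ to $+$ can only increase the sum. I will couple each $\xi_i$ with a fair sign $\xi_i^0$ via a uniform variable so that $\xi_i\ge\xi_i^0$. With all fair signs the sum is symmetric, so the difference is $0$, and the monotone coupling gives a value $\ge 0$.

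For strictness, I will use the index $i_0$. On an event of positive probability, $\xi_{i_0}=+1$ while $\xi_{i_0}^0=-1$. Because the remaining terms have continuous distributions, on this event the flip moves $\sum_i\xi_iR_i$ from negative to positive with positive probability. This gives $\mathbb P(U^\top V>0)>1/2$, and hence both class accuracies are equal and strictly above $1/2$.
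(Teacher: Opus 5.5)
Your proof is correct. After the reduction you share with the paper, it takes a genuinely different route. Both arguments reach the sign of $U^{\mathsf T}V$ with independent $U\sim\mathcal N(\boldsymbol\epsilon,2\Sigma)$ and $V\sim\mathcal N(\boldsymbol\epsilon,6\Sigma)$, and both use a reflection to handle the class $Q$.

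\emph{The paper's route.} It polarizes: setting $H_\pm=(U\pm V/\sqrt3)/\sqrt2$ (the paper's $V_\pm$) gives $U^{\mathsf T}V=\frac{\sqrt3}{2}\bigl(\|H_+\|^2-\|H_-\|^2\bigr)$. Here $H_\pm$ are independent, both have covariance $2\Sigma$, and their means are $c_\pm\boldsymbol\epsilon$ with $c_+>c_->0$. After diagonalizing $\Sigma$, the cdf of $\mathcal N(m,s^2)^2$ strictly decreases in $|m|$. This gives strict stochastic dominance of $\|H_+\|^2$ over $\|H_-\|^2$, and hence probability above $1/2$.

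\emph{Your route.} You split $U^{\mathsf T}V$ into independent coordinate products. The paired product-density formula shows each product is positively biased. A monotone sign coupling then compares the sum with a symmetric one.

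\emph{What each buys.} The paper reduces everything to a classical one-parameter ordering of noncentral squares. However, it depends on the $1/\sqrt3$ rescaling that equalizes the two covariances. Yours needs no polarization, only independent coordinates in a common basis with coordinatewise means of equal sign. It would therefore extend to commuting but non-proportional covariances, and the coupling lemma (a sum of independent positively biased terms is positively biased) is reusable.

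One justification should be corrected. In the strictness step, the reason is not that the other terms have continuous distributions. Write $S=\sum_i\xi_iR_i$, $S^0=\sum_i\xi^0_iR_i$ and $M=\sum_{j\neq i_0}\xi^0_jR_j$; note that $M$ is independent of $(R_{i_0},\xi_{i_0},\xi^0_{i_0})$. What you need is $\mathbb P\bigl(\xi_{i_0}=1,\ \xi^0_{i_0}=-1,\ |M|<R_{i_0}\bigr)>0$, since on this event $S^0=M-R_{i_0}<0<M+R_{i_0}\le S$. This probability equals $\mathbb E\bigl[(p_{i_0}(R_{i_0})-\tfrac12)\,\mathbb P(|M|<R_{i_0}\mid R_{i_0})\bigr]$. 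It is positive because $p_{i_0}(r)>1/2$ for every $r>0$, $M$ is a.s. finite, and $R_{i_0}=|U_{i_0}V_{i_0}|$ exceeds any level with positive probability. Continuity is needed only to get $\mathbb P(S^0=0)=0$, so that $\mathbb P(S^0>0)=1/2$ and $\mathbb P(S>0)=\tfrac12+\mathbb P(S>0\ge S^0)>\tfrac12$.
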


\noindent The proof is in Section~\ref{general}. It converts the distance comparison into a comparison of squared norms of two independent Gaussian vectors with the same covariance and differently scaled means. Orthogonal diagonalization and stochastic ordering establish a strict inequality. This change of coordinates is used only in the proof: the classifier does not need to know $\Sigma$.

\subsection{Gaussian distributions with unequal variances}\label{different2}

We next ask whether the one-dimensional 1-NN decision rule is still valid if we know that the variances are different for the two Gaussians. Indeed, certain steps in the proof of Theorem~\ref{theo1} fail once the two variances are different.
For instance, the symmetry argument whereby 
\[
P^* := \mathbb{P}\big(| X - Y | < | Z - Y | \,\big|\, L=P\big) > 1/2
\]
would have previously been equivalent to 
\[
P^{**} =\mathbb{P}\big(| X - Y | > | Z - Y | \,\big|\, L=Q\big) > 1/2
\]
no longer holds in general. Indeed, it turns out---surprisingly---to be possible that one of these two terms can in fact be less than $1/2$. This occurs, for example, when
 $X \sim \phi_{0,1}$ and $Z \sim \phi_{0.1,0.5}$ as in Fig.~\ref{TwoGaussians};  the 1-NN rule when  
$Y$ is drawn from $\phi_{0,1}$ is correct only around 44.57\% of the time. That is, if you draw once from $\phi_{0,1}$ and once from  $\phi_{0.1,0.5}$,  a second draw from $\phi_{0,1}$ will---more than half the time---be closer to the point from the other distribution. 
\begin{figure}[htbp!]
\begin{center}
\includegraphics[height = 6.5cm]{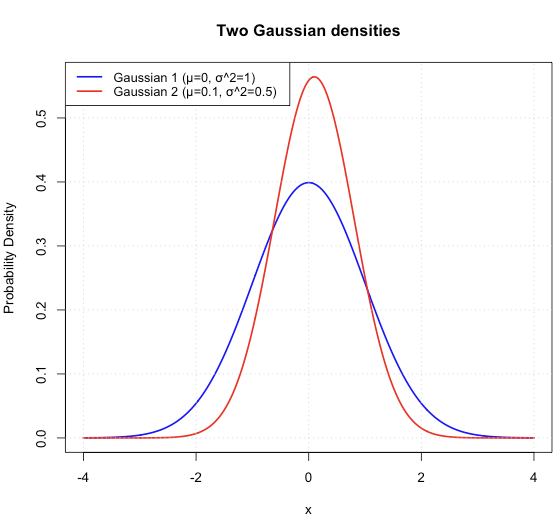}\caption{The densities of $\mathcal N(0,1)$ and $\mathcal N(0.1,0.5)$. The first class's conditional 1-NN accuracy is approximately $0.4457$, whereas the balanced accuracy is approximately $0.5084$; see Section~\ref{unequal}.}
\label{TwoGaussians}
\end{center}
\end{figure}
Nevertheless, their average is strictly greater than $1/2$ whenever the variances differ, even if the means coincide. Thus success in the balanced experiment does not require success within each target class separately.

\begin{theo}\label{theo2}
Let $P=\mathcal N(\mu_P,\sigma_X^2)$ and $Q=\mathcal N(\mu_P+\epsilon,\sigma_Z^2)$, where $\mu_P,\epsilon\in\mathbb R$, $\sigma_X^2>0$, $\sigma_Z^2>0$, and $\sigma_X^2\neq\sigma_Z^2$. Under the balanced experiment, the 1-NN rule \eqref{eqth1} has expected accuracy strictly greater than $1/2$. The parameters are unknown, and $\epsilon=0$ is permitted.
\end{theo}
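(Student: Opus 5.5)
Write $p_P:=\mathbb P(|X-Y|<|Z-Y|\mid L=P)$ and $p_Q:=\mathbb P(|X-Y|<|Z-Y|\mid L=Q)$. Ties have probability zero, so the balanced accuracy is $\tfrac12\bigl(p_P+1-p_Q\bigr)$. It therefore suffices to show $p_P>p_Q$.

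\textbf{Reduction to bivariate Gaussians.} As in the proof of Theorem~\ref{theo1}, the event $|X-Y|<|Z-Y|$ equals $\{DW>0\}$, where $D:=Z-X$ and $W:=Z+X-2Y$. Since $(X,Y,Z)$ are independent Gaussians in each conditional case, $(D,W)$ is bivariate Gaussian. Set $s:=\sigma_X^2+\sigma_Z^2$. Then $D\sim\mathcal N(\epsilon,s)$ in both cases, and $\operatorname{Cov}(D,W)=\sigma_Z^2-\sigma_X^2=:c$ in both cases. The only differences are in $W$:
\[
W\mid\{L=P\}\sim\mathcal N\bigl(\epsilon,\,s+4\sigma_X^2\bigr),\qquad W\mid\{L=Q\}\sim\mathcal N\bigl(-\epsilon,\,s+4\sigma_Z^2\bigr).
\]
Thus $p_P$ and $p_Q$ are each the probability that a bivariate Gaussian lies in the first or third quadrant. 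Without loss of generality (translate so that $\mu_P=0$ and reflect $\mathbb R$ if needed) I take $\epsilon\ge 0$.

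\textbf{The case $\epsilon=0$.} Sheppard's formula gives $\mathbb P(DW>0)=\tfrac12+\tfrac1\pi\arcsin\rho$, with
\[
\rho_P=\frac{c}{\sqrt{s(s+4\sigma_X^2)}},\qquad \rho_Q=\frac{c}{\sqrt{s(s+4\sigma_Z^2)}}.
\]
If $\sigma_Z>\sigma_X$, then $c>0$ and the denominator of $\rho_P$ is smaller, so $\rho_P>\rho_Q$. If $\sigma_Z<\sigma_X$, then $c<0$ and $|\rho_P|<|\rho_Q|$, so again $\rho_P>\rho_Q$. Since $\arcsin$ is strictly increasing, $p_P>p_Q$, which settles the case $\epsilon=0$.

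\textbf{The case $\epsilon\neq0$ (main obstacle).} Now the means are nonzero and the orthant probabilities have no closed form. I would condition on $D=d$. Then $W\mid D=d$ is Gaussian with mean $m_\pm+c(d-\epsilon)/s$, where $m_\pm=\pm\epsilon$, and with residual variances
\[
v_P=s+4\sigma_X^2-c^2/s,\qquad v_Q=s+4\sigma_Z^2-c^2/s.
\]
This gives
\[
p_P-p_Q=\mathbb E_D\Bigl[\operatorname{sgn}(D)\Bigl(\Phi\bigl(\tfrac{\epsilon+c(D-\epsilon)/s}{\sqrt{v_P}}\bigr)-\Phi\bigl(\tfrac{-\epsilon+c(D-\epsilon)/s}{\sqrt{v_Q}}\bigr)\Bigr)\Bigr].
\]
I would split this difference into two effects. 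The first is the mean shift from $-\epsilon$ to $+\epsilon$ at fixed variance, which is the mechanism of Theorem~\ref{theo1} and favours $P$. The second is the change of variance together with the $c$-dependent slope; at $\epsilon=0$ this is exactly the Sheppard computation above, and it also favours $P$.

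Concretely, I would interpolate by a parameter $t\in[0,1]$ and differentiate in $t$, aiming to show each derivative term is nonnegative, with strict positivity coming from the variance term. If the interpolation does not close, my fallback is to write $p_P-p_Q=\int g(y)\bigl(\phi_{\mu_P,\sigma_X^2}-\phi_{\mu_P+\epsilon,\sigma_Z^2}\bigr)(y)\,dy$ with $g(y)=\mathbb P(|X-y|<|Z-y|)=\Phi\bigl(\tfrac{m(y)-\cdot}{\cdot}\bigr)$, and exploit the explicit form of $g$. I expect the sign control when $\epsilon\neq0$ and $c<0$ to be the hardest part, since there the two effects must be shown not to cancel.
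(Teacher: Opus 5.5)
Your reduction to $(D,W)$, the covariance bookkeeping, and the $\epsilon=0$ case via Sheppard's formula are all correct; that part coincides with the paper's Step~2. The gap is the case $\epsilon\neq0$. There you give a plan (interpolate in $t$, hope each derivative term is nonnegative, fall back on an integral representation), not an argument. You also leave open exactly the case you call hardest. As written, the theorem is proved only for $\epsilon=0$.

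The missing idea is to decouple the two classes rather than compare $p_P$ and $p_Q$ at a fixed $\epsilon$. Show that each class-conditional \emph{success} probability is nondecreasing in $|\epsilon|$, i.e.\ $p_P(\epsilon)\ge p_P(0)$ and $1-p_Q(\epsilon)\ge 1-p_Q(0)$. Then $p_P(\epsilon)-p_Q(\epsilon)\ge p_P(0)-p_Q(0)>0$ by your Sheppard computation; this is the paper's route.

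For the monotonicity, write success under $L=Q$ as $\{D(-W)>0\}$, so in both classes the relevant pair $(A,B)$ has common mean $(\epsilon,\epsilon)$. Let $(A_0,B_0)$ be the centred parts. Differentiating $\mathbb P(A_0>-\epsilon,B_0>-\epsilon)+\mathbb P(A_0<-\epsilon,B_0<-\epsilon)$ in $\epsilon$ gives
\[
f_{A_0}(-\epsilon)\bigl[2\,\mathbb P(B_0>-\epsilon\mid A_0=-\epsilon)-1\bigr]+f_{B_0}(-\epsilon)\bigl[2\,\mathbb P(A_0>-\epsilon\mid B_0=-\epsilon)-1\bigr].
\]
For $\epsilon>0$, the first bracket is positive iff $\operatorname{Cov}(A,B)<\operatorname{Var}(A)$, and the second iff $\operatorname{Cov}(A,B)<\operatorname{Var}(B)$. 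This holds because the conditional mean of the other coordinate at the boundary point is $-\epsilon\operatorname{Cov}(A,B)/\operatorname{Var}(\cdot)$.

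Under $L=P$ the two margins are $s-c=2\sigma_X^2$ and $s+4\sigma_X^2-c=6\sigma_X^2$. Under $L=Q$, where the covariance is $-c$, they are $s+c=2\sigma_Z^2$ and $s+4\sigma_Z^2+c=6\sigma_Z^2$. These are positive whatever the sign of $c$. So your worry about $\epsilon\neq0$ with $c<0$ disappears, and no interpolation between the two classes is needed.

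The paper writes the same derivative through the partial derivatives of the bivariate normal cdf. Its inner arguments are proportional to $\epsilon(v-c)=2\sigma_X^2\epsilon$ and $\epsilon(w-c)=6\sigma_X^2\epsilon$.
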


\noindent Section~\ref{unequal} gives a self-contained proof using bivariate Gaussian sign probabilities. Each class-conditional success probability is minimized when the means coincide, and their average at that minimum exceeds $1/2$ when the variances differ.  Together with Theorem~\ref{theo1}, this proves that 1-NN beats chance for every pair of distinct nondegenerate Gaussian distributions on $\mathbb R$.

\subsection{Counterexample for the 1-NN rule in \texorpdfstring{$
\mathbb{R}^d$}{R\textasciicircum d} for \texorpdfstring{$d \geq 1$}{d >= 1}}\label{counterex}

Unfortunately, the 1-NN rule turns out not to have an expected accuracy greater than $1/2$ for general distributions $P$ and $Q$ in Euclidean space.  
For the following two discrete distributions in $\mathbb R$:
\begin{align*}
    P &= \frac{1}{4}\delta_{2} + \frac{3}{20}\delta_{5} + \frac{3}{5}\delta_{7} , \\
    Q &= \frac{7}{50}\delta_{0} + \frac{6}{25}\delta_{4} + \frac{31}{50}\delta_{7} ,
\end{align*}
exact enumeration with the stated fair tie convention gives
\[
 \mathbb P(\mathrm{Correct}\mid L=P)=\frac{5041}{10000},\qquad
 \mathbb P(\mathrm{Correct}\mid L=Q)=\frac{1498}{3125},
\]
so that
\begin{equation}\label{eq:discrete-counterexample}
 \operatorname{Acc}_{\mathrm{1\text{-}NN}}=\frac{49173}{100000}=0.49173<\frac12.
\end{equation}
For completeness, the class-conditional probabilities of a correct decision given the target value are
\[
\begin{array}{c|ccc}
 y\text{ under }P&2&5&7\\\hline
 \mathbb P(\mathrm{Correct}\mid Y=y,L=P)&\frac{529}{1000}&\frac{91}{200}&\frac{253}{500}
\end{array}
\]
\[
\begin{array}{c|ccc}
 y\text{ under }Q&0&4&7\\\hline
 \mathbb P(\mathrm{Correct}\mid Y=y,L=Q)&\frac{253}{500}&\frac{213}{500}&\frac{247}{500}
\end{array}
\]
Each entry is obtained by summing $P\{x\}Q\{z\}$ over the nine reference pairs, counting a tie as $1/2$. Weighting the entries by the displayed target masses gives \eqref{eq:discrete-counterexample}. The only ties occur when $x=z=7$, whose reference probability is $0.372$.

This failure persists for explicit smooth densities. Let $P_\tau=P*\mathcal N(0,\tau^2)$ and $Q_\tau=Q*\mathcal N(0,\tau^2)$, using the same smoothing variance. Couple each observation to its original atom by adding an independent $\mathcal N(0,\tau^2)$ perturbation. When the reference atoms differ, the possible target atoms are $\{0,2,4,5,7\}$, and every nonzero distance gap
\[
 \bigl||x-y|-|z-y|\bigr|
\]
is an integer at least $1$. There are no midpoint ties: the midpoints of distinct possible reference atoms are $1,3,9/2,5/2,6,7/2,11/2$, none of which is a possible target atom. If all three perturbations have magnitude less than $0.2$, each distance changes by less than $0.4$, so every strict comparison is preserved. When both reference atoms equal $7$, the perturbed references are identically distributed and exchangeable, independently of the target; their expected class-conditional success is exactly $1/2$. A union bound therefore gives
\begin{equation}\label{eq:smoothing-bound}
 \left|\operatorname{Acc}_{\mathrm{1\text{-}NN}}(P_\tau,Q_\tau)-0.49173\right|
 \le 6\Phi_{0,1}(-0.2/\tau).
\end{equation}
Taking $\tau=0.05$ gives accuracy below $0.491931<1/2$: indeed, $\Phi_{0,1}(-4)\le\phi_{0,1}(4)/4$, since $\int_4^\infty\phi_{0,1}(t)\,dt\le\frac14\int_4^\infty t\phi_{0,1}(t)\,dt$. Both mixture densities are smooth and everywhere positive. Their means are $5.45$ and $5.30$, so they remain distinct. Embedding the one-dimensional observations as $(u,0,\ldots,0)$ gives a counterexample in every $d\ge2$; the embedded laws need not have $d$-dimensional densities.

A second counter-example for $d \geq 3$ with an easy geometric failure explanation is provided in Section~\ref{counter} in the Appendix.

\section{Randomized decision rules for arbitrary distributions in \texorpdfstring{$\mathbb{R}^d$}{R\textasciicircum d}}\label{random}

\subsection{Setting and the randomized classifier}

Even though the 1-NN rule can fail in Euclidean space, it turns out that there exist other---randomized---rules which still provably perform better than chance as soon as $P \neq Q$ in $\mathbb{R}^d$ for any $d \geq 1$. The idea is that rather than having an abrupt decision rule based on distances, decisions are smoothed out by probabilistically---rather than deterministically---predicting the class of $Y$ via a Bernoulli trial $\mathcal{B}er(\psi)$, where $\psi$ is a function of $(X,Y,Z)$ that outputs values in $[0,1]$.

Suppose $P$ and $Q$ are two arbitrary distinct probability distributions. We observe three independent random variables as before: $X \sim P$, $Z \sim Q$, and $Y \sim \frac{1}{2}P + \frac{1}{2}Q$.
With the fair tie convention, the 1-NN probability of predicting $P$ is
\[
 \psi_{\mathrm{1NN}}(X,Y,Z)
 =\mathbf1_{\{\|X-Y\|<\|Z-Y\|\}}
  +\tfrac12\mathbf1_{\{\|X-Y\|=\|Z-Y\|\}}.
\]
We replace this probability by a kernel-based randomized rule $\psi(X,Y,Z)$.
 This rule will involve any characteristic kernel that outputs values in $[0,1]$. The main result in Theorem~\ref{theo_randomized} below holds for any such kernel; all that can change is the value of maximum mean discrepancy (MMD) in the final formula (since it goes hand in glove with the chosen kernel), not the actual formula. 
We now define the rule $\psi(X, Y, Z)$ whose output will be the probability with which we predict  $Y$ was generated from $P$:
\begin{equation}
\psi(X, Y, Z) := \frac{1}{2} + \frac{1}{2} \big( k(X, Y) - k(Y, Z) \big).
\label{eq:psi}
\end{equation}
This rule assigns a higher probability to predicting $P$ when $Y$ is more similar---in the eyes of the kernel function---to $X$ than to $Z$. Because $k(u,v) \in [0,1]$, the difference $k(X,Y) - k(Y,Z)$ is in $[-1,1]$. Therefore, $\psi(X,Y,Z) \in [0, 1]$, ensuring it is a valid probability. We now show that this randomized rule performs on average better than chance if $P \neq Q$ in $\mathbb{R}^d$ for any $d \geq 1$.

Here and below a positive semidefinite kernel is real and symmetric. For a bounded measurable kernel, write
\[
 \operatorname{MMD}_k^2(P,Q)
 :=\mathbb E_{P\times P}[k]-2\mathbb E_{P\times Q}[k]
       +\mathbb E_{Q\times Q}[k],
\]
where every pair inside an expectation consists of independent draws. Positive semidefiniteness makes this quantity nonnegative. One direct justification is to average the nonnegative kernel quadratic form of $n$ independent $P$ observations with weights $1/n$ and $n$ independent $Q$ observations with weights $-1/n$, take expectations, and let $n\to\infty$; the diagonal corrections are $O(1/n)$ because $k$ is bounded. A kernel is \emph{characteristic} when this quantity vanishes if and only if $P=Q$; this agrees with injectivity of the kernel mean embedding \cite{sriperumbudur2010hilbert}. Gaussian kernels on $\mathbb R^d$ have this property. The kernel is fixed independently of the observations.

\begin{theo}\label{theo_randomized}
Let $P$ and $Q$ be distinct Borel probability measures on $\mathbb{R}^d$, and let
$k:\mathbb{R}^d\times\mathbb{R}^d\to[0,1]$
be a fixed measurable characteristic positive semidefinite kernel, chosen independently of the observations. Define
\[
\psi(X,Y,Z)
:=
\frac{1}{2}
+
\frac{1}{2}\bigl(k(X,Y)-k(Y,Z)\bigr).
\]
Independently generate
$X\sim P$,\,
$Z\sim Q$, and
$Y\sim 1/2\, P+ 1/2\, Q$.
Consider the randomized classifier that predicts that $Y$ was generated from
$P$ with probability $\psi(X,Y,Z)$ and from $Q$ with probability
$1-\psi(X,Y,Z)$. Then its probability of correct classification is
\[
\mathbb{P}(\mathrm{Correct})
=
\frac{1}{2}
+
\frac{1}{4}\operatorname{MMD}_k^2(P,Q)
>
\frac{1}{2}.
\]
\end{theo}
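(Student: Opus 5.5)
The plan is to compute the probability of a correct decision by conditioning on the target label $L$ and using independence to reduce everything to expectations of $k$ under product measures. Given $(X,Y,Z)$ and $L$, the classifier is correct with probability $\psi(X,Y,Z)$ when $L=P$ and with probability $1-\psi(X,Y,Z)$ when $L=Q$. Since $\psi$ is bounded and measurable (as $k$ is), Fubini applies. Hence
\[
\mathbb P(\mathrm{Correct})=\tfrac12\,\mathbb E\bigl[\psi(X,Y,Z)\,\big|\,L=P\bigr]+\tfrac12\,\mathbb E\bigl[1-\psi(X,Y,Z)\,\big|\,L=Q\bigr].
\]
Substituting the definition of $\psi$, the constant parts give $\tfrac12\cdot\tfrac12+\tfrac12\cdot\tfrac12=\tfrac12$. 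The remainder is
\[
\tfrac14\Bigl(\mathbb E[k(X,Y)-k(Y,Z)\mid L=P]-\mathbb E[k(X,Y)-k(Y,Z)\mid L=Q]\Bigr).
\]

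Next, I will evaluate the four expectations. Conditional on $L=P$, the triple $(X,Y,Z)$ has law $P\times P\times Q$. Thus $\mathbb E[k(X,Y)\mid L=P]=\mathbb E_{P\times P}[k]$, and, using the symmetry of $k$, $\mathbb E[k(Y,Z)\mid L=P]=\mathbb E_{P\times Q}[k]$. Conditional on $L=Q$, the law is $P\times Q\times Q$. This gives $\mathbb E[k(X,Y)\mid L=Q]=\mathbb E_{P\times Q}[k]$ and $\mathbb E[k(Y,Z)\mid L=Q]=\mathbb E_{Q\times Q}[k]$.

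Combining these, the bracket equals
\[
\mathbb E_{P\times P}[k]-2\,\mathbb E_{P\times Q}[k]+\mathbb E_{Q\times Q}[k]=\operatorname{MMD}_k^2(P,Q),
\]
which is exactly the definition given before the theorem. This yields the identity $\mathbb P(\mathrm{Correct})=\tfrac12+\tfrac14\operatorname{MMD}_k^2(P,Q)$.

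For the strict inequality, positive semidefiniteness gives $\operatorname{MMD}_k^2(P,Q)\ge0$. Since $k$ is characteristic, it vanishes only when $P=Q$. Because $P\neq Q$, the quantity is strictly positive.

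No step is a real obstacle. The only points requiring care are two pieces of bookkeeping. First, one must justify exchanging the classifier's internal randomization with the expectation over the data: condition on $(X,Y,Z,L)$, so that the Bernoulli draw is independent with success probability $\psi$ or $1-\psi$. Second, the cross terms must be correctly identified as $\mathbb E_{P\times Q}[k]$ in both conditional cases, which uses the symmetry of $k$ together with the independence of $Y$ from $(X,Z)$ given $L$.
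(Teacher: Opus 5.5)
Your proposal is correct and matches the paper's proof: condition on $L$, identify the four kernel expectations under $P\times P\times Q$ and $P\times Q\times Q$, recombine them into $\operatorname{MMD}_k^2(P,Q)$, and invoke the characteristic property for strictness. One small aside: symmetry of $k$ is not actually needed for the cross terms, since in both cases the first argument of $k$ is drawn from $P$ and the second from $Q$.
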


\noindent \textbf{Proof}.
The overall probability of a correct classification, $\mathbb{P}(\text{Correct})$, is the average of the rule's expected success when $Y$ is drawn from $P$ and when $Y$ is drawn from $Q$:
\begin{equation}
\mathbb{P}(\text{Correct}) = \frac{1}{2} \mathbb{E}[\psi(X, Y, Z) \mid L=P] + \frac{1}{2} \mathbb{E}[1 - \psi(X, Y, Z) \mid L=Q].
\end{equation}
Let $\mathbb{E}_{XXZ}$ denote the expectation over the joint distribution where $X,Y \sim P$ and $Z \sim Q$, and  $\mathbb{E}_{XZZ}$ be the corresponding expectation over the joint distribution where $X \sim P$ and $Y,Z \sim Q$. We evaluate each term separately.

\vspace{0.3cm}
\noindent \textit{Part 1: Expected accuracy when $Y \sim P$.} We have:
\begin{align*}
\mathbb{E}_{XXZ}[\psi(X, Y, Z)] &= \mathbb{E}_{XXZ} \left[ \frac{1}{2} + \frac{1}{2}k(X, Y) - \frac{1}{2}k(Y, Z) \right] \\
&= \frac{1}{2} + \frac{1}{2}\mathbb{E}_{XXZ}[k(X, Y)] - \frac{1}{2}\mathbb{E}_{XXZ}[k(Y, Z)].
\end{align*}
Because $X$ and $Y$ are independent draws from $P$, $\mathbb{E}_{XXZ}[k(X,Y)]$ is the expected kernel value between two independent draws from $P$, denoted $\mathbb{E}_{P \times P}[k]$. Because $Y \sim P$ and $Z \sim Q$, $\mathbb{E}_{XXZ}[k(Y,Z)]$ is the expected kernel value between a draw from $P$ and a draw from $Q$, denoted $\mathbb{E}_{P \times Q}[k]$. Thus, 
\begin{equation}\label{part1}
\mathbb{E}_{XXZ}[\psi(X, Y, Z)] = \frac{1}{2} + \frac{1}{2}\mathbb{E}_{P \times P}[k] - \frac{1}{2}\mathbb{E}_{P \times Q}[k].
\end{equation}

\vspace{0.3cm}
\noindent \textit{Part 2: Expected accuracy when $Y \sim Q$.} 
The probability of correctly guessing $Q$ is $1 - \psi$:
\[
1 - \psi(X, Y, Z) = \frac{1}{2} - \frac{1}{2}k(X, Y) + \frac{1}{2}k(Y, Z).
\]
Taking the expectation under the scenario where $Y \sim Q$:
\[
\mathbb{E}_{XZZ}[1 - \psi(X, Y, Z)] = \frac{1}{2} - \frac{1}{2}\mathbb{E}_{XZZ}[k(X, Y)] + \frac{1}{2}\mathbb{E}_{XZZ}[k(Y, Z)].
\]
Because $X \sim P$ and $Y \sim Q$, $\mathbb{E}_{XZZ}[k(X,Y)] = \mathbb{E}_{P \times Q}[k]$. Because $Y$ and $Z$ are independent draws from $Q$, $\mathbb{E}_{XZZ}[k(Y,Z)] = \mathbb{E}_{Q \times Q}[k]$. Thus, 
\begin{equation}\label{part2}
\mathbb{E}_{XZZ}[1 - \psi(X, Y, Z)] = \frac{1}{2} - \frac{1}{2}\mathbb{E}_{P \times Q}[k] + \frac{1}{2}\mathbb{E}_{Q \times Q}[k].
\end{equation}

\vspace{0.3cm}
\noindent \textit{Part 3: Recombination and MMD.} 
Substituting Equations \ref{part1} and \ref{part2} back into the total probability of correctness, we obtain:
\begin{align*}
\mathbb{P}(\text{Correct}) &= \frac{1}{2} \left[ \frac{1}{2} + \frac{1}{2}\mathbb{E}_{P \times P}[k] - \frac{1}{2}\mathbb{E}_{P \times Q}[k] \right] + \frac{1}{2} \left[ \frac{1}{2} - \frac{1}{2}\mathbb{E}_{P \times Q}[k] + \frac{1}{2}\mathbb{E}_{Q \times Q}[k] \right] \\
&= \frac{1}{4} + \frac{1}{4}\mathbb{E}_{P \times P}[k] - \frac{1}{4}\mathbb{E}_{P \times Q}[k] + \frac{1}{4} - \frac{1}{4}\mathbb{E}_{P \times Q}[k] + \frac{1}{4}\mathbb{E}_{Q \times Q}[k].
\end{align*}
Combining the constants and grouping the kernel expectations then gives:
\begin{equation}\label{MMD_form}
\mathbb{P}(\text{Correct}) = \frac{1}{2} + \frac{1}{4} \Big( \mathbb{E}_{P \times P}[k] - 2\mathbb{E}_{P \times Q}[k] + \mathbb{E}_{Q \times Q}[k] \Big).
\end{equation}
The bracketed term is exactly the squared MMD associated with the kernel $k$:
\[
\text{MMD}^2(P, Q) = \mathbb{E}_{P \times P}[k] - 2\mathbb{E}_{P \times Q}[k] + \mathbb{E}_{Q \times Q}[k].
\]
Substituting this into Equation \ref{MMD_form} yields:
\begin{equation}
\mathbb{P}(\text{Correct}) = \frac{1}{2} + \frac{1}{4} \text{MMD}^2(P, Q).
\end{equation}
Since the kernel $k$ is characteristic, we know that $\text{MMD}^2(P, Q) = 0$ if and only if $P = Q$. Therefore, since 
$P \neq Q$ here, $\text{MMD}^2(P, Q) > 0$. Thus:
$\mathbb{P}(\text{Correct}) > 1/2$. $\square$

\begin{rem}\label{rem:no-uniform-margin}
The guarantee is pointwise in $(P,Q)$, not a uniform margin. Fix $R\neq P$ and set $Q_\eta=(1-\eta)P+\eta R$ for $0<\eta<1$. Then $Q_\eta\neq P$ and bilinearity gives
\[
 \operatorname{MMD}_k^2(P,Q_\eta)=\eta^2\operatorname{MMD}_k^2(P,R).
\]
Thus the kernel rule's advantage tends to zero as $\eta\downarrow0$. More generally, no classifier can have a positive margin uniformly over all distinct pairs, since the Bayes bound \eqref{eq:bayes_acc} gives
\[
 \operatorname{Acc}(\mathcal C;P,Q_\eta)
 \le\tfrac12+\tfrac\eta2\operatorname{TV}(P,R)\longrightarrow\tfrac12.
\]
The independent references cannot improve on the Bayes accuracy for known $P,Q$: conditional on any reference pair, the target still follows the same balanced experiment. 
\end{rem}

\begin{rem}\label{rem:conditional-accuracy}
Theorem~\ref{theo_randomized} averages over the references. For fixed $x,z$, its conditional accuracy is
\[
 \tfrac12+\tfrac14\int\bigl(k(x,y)-k(y,z)\bigr)\,d(P-Q)(y),
\]
which need not exceed $1/2$. For example, on $\{0,1\}$ take $k(u,v)=\mathbf1_{\{u=v\}}$, $P\{1\}=0.8$, and $Q\{1\}=0.2$. For the positive-probability reference pair $x=0,z=1$, the conditional accuracy is $0.2$. Likewise, a bandwidth or kernel chosen using the same observations is not covered automatically by the fixed-kernel identity.
\end{rem}

\subsection{Comparing the randomized rule with 1-NN  in \texorpdfstring{$\mathbb{R}$}{R} for Gaussian distributions}\label{better2}

Though the randomized rule always beats chance, it can still be uniformly worse than the 1-NN rule if we know specific things about $P$ and $Q$. Let us suppose in the following section that $k$ is the standard Gaussian kernel $k(u,v) = \exp\left(-\|u-v\|^2\right)$. Then, if we
consider two one-dimensional Gaussian distributions with different means and the same variance, the 1-NN rule uniformly beats the randomized rule. To show this, we first calculate exactly how much better than chance---the \emph{advantage}---each rule is as a function of $\epsilon$ and $\sigma^2$. These calculations are found in Section~\ref{better} of the Appendix. For the 1-NN rule, we obtain:
\begin{equation}
\text{Advantage}_{\text{1NN}} = \frac{1}{2} \text{erf}\left(\frac{\epsilon}{2\sigma}\right) \text{erf}\left(\frac{\epsilon}{2\sqrt{3}\sigma}\right),
\end{equation}
while for the Gaussian kernel-based randomized rule, we get:
\begin{equation}
\text{Advantage}_{\text{Rand}} = \frac{1 - \exp\left(-\frac{\epsilon^2}{1+4\sigma^2}\right)}{2\sqrt{1+4\sigma^2}}.
\end{equation}
The result is then as follows.

\begin{cor}\label{cor:gaussian-dominance}
For one-dimensional Gaussians with positive mean separation $\epsilon > 0$ and shared variance $\sigma^2 > 0$, the 1-NN rule strictly dominates the randomized classifier in expected probability of success. That is,
\begin{equation}
\text{erf}\left(\frac{\epsilon}{2\sigma}\right) \text{erf}\left(\frac{\epsilon}{2\sqrt{3}\sigma}\right) > \frac{1 - \exp\left(-\frac{\epsilon^2}{1+4\sigma^2}\right)}{\sqrt{1+4\sigma^2}}.
\end{equation}
\end{cor}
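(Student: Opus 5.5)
The plan is to reduce the inequality to a single scale-free parameter and then compare the two sides on two overlapping ranges of that parameter. Put $u:=\epsilon/\sigma>0$ and $c:=1+4\sigma^2>1$. The left-hand side is then
\[
g(u):=\operatorname{erf}(u/2)\operatorname{erf}\bigl(u/(2\sqrt3)\bigr),
\]
which depends on $u$ alone. The right-hand side is
\[
r(u,\sigma):=\frac{1-\exp\bigl(-u^2\sigma^2/(1+4\sigma^2)\bigr)}{\sqrt{1+4\sigma^2}},
\]
which still depends on $\sigma$. It therefore suffices to prove $g(u)>\sup_{\sigma>0}r(u,\sigma)$ for every $u>0$. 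If the supremum is not attained, a separate remark is needed to keep the inequality strict; the bounds below are strict in any case.

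\emph{Small and moderate $u$.} Using $1-e^{-t}\le t$ gives
\[
r(u,\sigma)\le u^2\,\frac{\sigma^2}{(1+4\sigma^2)^{3/2}}.
\]
Write $x=\sigma^2$. The function $x/(1+4x)^{3/2}$ is maximized where $(1+4x)-6x=0$, that is at $x=1/2$, with value $1/(6\sqrt3)$. Hence $r(u,\sigma)\le u^2/(6\sqrt3)\approx0.0962\,u^2$ for every $\sigma$.

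On the other side, near $0$ we have $g(u)\sim u^2/(\pi\sqrt3)\approx0.184\,u^2$. I would make this global on an interval $(0,u_0]$. Since $\operatorname{erf}(t)/t$ is decreasing, $g(u)/u^2$ is decreasing, so it is enough to check the single value $g(u_0)/u_0^2>1/(6\sqrt3)$. I expect this to hold up to roughly $u_0\approx3$, to be confirmed by evaluating $\operatorname{erf}$ numerically with rigorous bounds.

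\emph{Large $u$.} Here the left side is at least $1-2\operatorname{erfc}(u/(2\sqrt3))$, which is exponentially close to $1$. For the right side I would use two bounds: $r\le(1+4\sigma^2)^{-1/2}$, and $r\le1-e^{-u^2\sigma^2}$ (valid because $c>1$). If $\sigma^2\ge\delta$, the first bound gives $r\le(1+4\delta)^{-1/2}$. If $\sigma^2<\delta$, the second gives $r\le1-e^{-u^2\delta}$. Choosing $\delta$ of order $(\log u)/u^2$ balances the two, so $\sup_\sigma r\le1-C(\log u)/u^2$ for some constant $C$. For $u\ge u_0$ this polynomial deficit is much larger than the exponential deficit $2\operatorname{erfc}(u/(2\sqrt3))$ on the left.

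The main obstacle is making the two ranges overlap with explicit constants. The large-$u$ bound is asymptotic, and it has to be made quantitative down to $u_0\approx3$, where both sides are moderate: at $u=3$, $g\approx0.73$. If the crude $\delta$-splitting is too lossy there, I would instead compute $\sup_\sigma r(u,\sigma)$ more sharply on a finite grid of $u$ values in the intermediate window. On each cell of the grid I would use monotonicity: $g$ is increasing in $u$ and $r$ is increasing in $u$. This lets me compare $g$ at the left endpoint of a cell with the sup of $r$ at the right endpoint, which reduces that window to finitely many verified numerical inequalities.
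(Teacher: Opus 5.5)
Your route is genuinely different from the paper's and is viable, but as written it is an outline of a computer-assisted proof, and one of its numerical expectations is wrong. The individual ingredients are correct: the bound $r\le u^2/(6\sqrt3)$ from $1-e^{-t}\le t$ and maximizing $\sigma^2/(1+4\sigma^2)^{3/2}$ at $\sigma^2=1/2$; the monotonicity of $g(u)/u^2$ from concavity of $\operatorname{erf}$; the large-$u$ splitting via $r\le(1+4\sigma^2)^{-1/2}$ and $r\le1-e^{-u^2\sigma^2}$; and the cell comparison $g(a)>\sup_\sigma r(b,\sigma)$ on $[a,b]$.

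The slip is at $u_0=3$. There $g(3)\approx0.753$, so $g(3)/9\approx0.084<1/(6\sqrt3)\approx0.096$, and the small-$u$ argument only reaches $u_0\approx2.65$. At the other end, the lower bound $1-2\operatorname{erfc}(u/(2\sqrt3))$ fails at $u=4$ for every $\delta$: it would need both $\delta>0.145$ and $\delta<0.099$. It first succeeds only around $u\approx4.4$. So a window of roughly $[2.65,4.4]$ must be covered by certified numerics. At each grid point you also need a rigorous upper bound on the inner supremum over $\sigma\in(0,\infty)$. For example, write $C=(1+4\sigma^2)^{-1/2}\in(0,1)$ and enclose $C\bigl(1-e^{-(1-C^2)u^2/4}\bigr)$ on subintervals of $C$. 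The margins are comfortable (at $u=3$ the supremum is about $0.478$ against $g\approx0.753$), so this would close, but none of it is carried out yet.

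The paper avoids all of this with two observations.
\begin{itemize}
\item With $x=\epsilon/(2\sigma)$, $C=(1+4\sigma^2)^{-1/2}$ and $f(t)=1-e^{-t}$, the right side is exactly $C\,f\bigl((1-C^2)x^2\bigr)$. The prefactor and the exponent are coupled through $s=1-C^2$, which your $(u,\sigma)$ parametrization hides.
\item The bound $\operatorname{erf}(z)^2>1-e^{-z^2}$ (integrate $e^{-(v^2+w^2)}$ over $[-z,z]^2$ versus the inscribed disk) gives $\operatorname{erf}(x)\operatorname{erf}(x/\sqrt3)>\sqrt{f(x^2)f(x^2/3)}$.
\end{itemize}
Then $f(x^2)>f(sx^2)$ by monotonicity. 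The inequality $f(x^2/3)>C^2f(sx^2)$ follows from monotonicity when $s\le1/3$, and from concavity of $f$ with $f(0)=0$ plus $3C^2(1-C^2)\le3/4$ when $s>1/3$. Multiplying the two inequalities finishes the proof for all $(\epsilon,\sigma)$ at once, with no inner supremum and no numerics.

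What your approach buys in exchange is sharp information at the two ends, which the paper's argument does not give. Near $0$ the constants are $1/(\pi\sqrt3)$ versus $1/(6\sqrt3)$. At infinity the left side's deficit is exponentially small, while the right side's is of order $(\log u)/u^2$.
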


\noindent The proof is found in Section~\ref{better} in the Appendix.

\begin{rem}
The comparison holds for every fixed Gaussian-kernel scale $\gamma>0$. Rescale all observations by $\sqrt\gamma$; this leaves their 1-NN comparisons unchanged and turns $\exp(-\gamma(u-v)^2)$ into the standard Gaussian kernel. Equivalently, set $C=(1+4\gamma\sigma^2)^{-1/2}$ in Section~\ref{better}. Since both advantage formulas are even in $\epsilon$, the conclusion also holds for negative nonzero mean separation.
\end{rem}

\begin{rem}
This result does not extend in general to unequal variances. For $P=\mathcal N(0,25)$ and $Q=\mathcal N(1,0.025)$,  $\operatorname{Acc}_{\mathrm{1\text{-}NN}}\approx0.66742825$ while
$ \operatorname{Acc}_{\mathrm{Rand}}
 \approx0.69461934$.
\end{rem}

\subsection{The clipped randomized rule}\label{clipped2}

Recall that the reference version of the randomized rule is based on the probability 
\begin{equation}
\psi(X, Y, Z) := \frac{1}{2} + w \cdot \big( k(X, Y) - k(Y, Z) \big)
\end{equation}
with $w =1/2$. It is easy to see that for $0 < w < 1/2$ the rule still performs better than chance but worse on average than the rule with $w = 1/2$, since 
\begin{equation}
\mathbb{P}(\text{Correct}) = \frac{1}{2} + \frac{w}{2} \cdot \text{MMD}^2(P, Q).
\end{equation}
 The question is: \emph{What happens for} $w > 1/2$?  ``Probabilities'' $\psi$ outside $[0,1]$ become ``possible'' and so one must define a strategy to deal with this. One option is to clip  outputs to $[0,1]$ as follows:
\begin{equation}\label{clipped}
\psi_{clip}(X, Y, Z) := \max \left\{0, \min\{1, \frac{1}{2} + w \cdot \big( k(X, Y) - k(Y, Z) \big) \} \right\}.
\end{equation}
For the standard Gaussian kernel, the rule induced by $\psi_{clip}$ tends to the 1-NN rule as $w \rightarrow \infty$.
\begin{cor}
Assuming the 1-NN rule resolves equidistant ties uniformly at random, and assuming usage of the standard Gaussian kernel in $\mathbb{R}^d$, the expected accuracy of the clipped randomized rule induced by $\psi_{clip}$  tends to the expected accuracy of the 1-NN rule as $w \rightarrow \infty$ for  arbitrary distributions $P$ and $Q$.
\end{cor}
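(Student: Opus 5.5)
The plan is to prove pointwise convergence of the conditional success probability and then apply dominated convergence. Fix $(x,y,z)$ and write $D(x,y,z)=k(x,y)-k(y,z)=e^{-\|x-y\|^2}-e^{-\|y-z\|^2}$ for the standard Gaussian kernel. The map $t\mapsto e^{-t^2}$ is strictly decreasing on $[0,\infty)$. Hence $D>0$ exactly when $\|x-y\|<\|z-y\|$, $D<0$ exactly when $\|x-y\|>\|z-y\|$, and $D=0$ exactly at equidistant ties. These three cases give the pointwise limits of $\psi_{clip}$ as $w\to\infty$.

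\begin{itemize}
\item If $D>0$, then $\tfrac12+wD\ge1$ as soon as $w\ge 1/(2D)$, so $\psi_{clip}=1$ eventually.
\item If $D<0$, then $\psi_{clip}=0$ eventually.
\item If $D=0$, then $\psi_{clip}=\tfrac12$ for every $w$.
\end{itemize}

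Therefore $\psi_{clip}(x,y,z)\to\psi_{\mathrm{1NN}}(x,y,z)$ at every point of $(\mathbb R^d)^3$, where $\psi_{\mathrm{1NN}}$ is the fair-tie 1-NN probability displayed at the start of this section. The same holds for $1-\psi_{clip}\to1-\psi_{\mathrm{1NN}}$.

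Next, write the accuracy as in the proof of Theorem~\ref{theo_randomized}:
\[
\mathbb P(\mathrm{Correct})=\tfrac12\mathbb E_{XXZ}[\psi_{clip}]+\tfrac12\mathbb E_{XZZ}[1-\psi_{clip}].
\]
The 1-NN accuracy has the same form with $\psi_{\mathrm{1NN}}$ in place of $\psi_{clip}$; this holds because a Bernoulli draw with parameter $\psi_{\mathrm{1NN}}$ reproduces the fair tie-breaking. The integrands are measurable and bounded by $1$, and both expectations are taken against probability measures on $(\mathbb R^d)^3$. Dominated convergence along any sequence $w_n\to\infty$ therefore gives convergence of each term, and hence of the accuracy.

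The only delicate step is the tie set. Convergence there is exact rather than a limit, because $\psi_{clip}$ equals $\tfrac12$ identically on $\{D=0\}$ for every $w$. This is precisely why the statement assumes fair tie-breaking. With any other tie convention, the limit would differ by the contribution of the tie set, which can carry positive mass, for example for discrete distributions such as the counterexample of Section~\ref{counterex}. No regularity of $P$ or $Q$ is needed, since the convergence is pointwise everywhere, not merely almost everywhere.
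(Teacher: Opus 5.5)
Your proof is correct and is essentially the paper's argument: pointwise (indeed eventually exact) convergence of $\psi_{clip}$ to the fair-tie 1-NN probability, which follows from strict monotonicity of the Gaussian kernel in distance, and then bounded convergence along sequences $w_n\to\infty$ applied to the class-conditional decomposition of the accuracy. One caveat on your closing aside, which does not affect the proof: a different tie convention \emph{can}, but need not, make the 1-NN accuracy differ from this limit; in the discrete example of Section~\ref{counterex}, ties occur exactly when $x=z=7$, independently of the target class, so every tie rule that does not depend on $y$ gives the same accuracy.
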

\noindent The proof is deferred to Section~\ref{clippy} in the Appendix.
Consequently, clipping can destroy the distribution-free guarantee. Apply the convergence result to either counterexample in Section~\ref{counterex}. Its limiting accuracy is strictly below $1/2$, so for all sufficiently large finite $w$ the clipped rule also has accuracy below $1/2$.

\section{Beyond Euclidean geometry}\label{beyond}
\label{sec:borel_interrogation}

The core mechanism does not require Euclidean geometry. It applies to every countably generated measurable space $(\Omega,\mathcal B)$; separable metric spaces with their Borel sigma-algebras are important examples.

\subsection{The random-question framework}\label{beats2}
Let $(\Omega,\mathcal B)$ be a measurable space whose sigma-algebra is countably generated. Fix a countable generating $\pi$-system
\[
 \mathcal G=\{A_n\}_{n\ge1}.
\]
Such a system can be formed by taking all finite intersections of a countable family of generators and including $\Omega$. On a separable metric space, finite intersections of elements of a countable topological base give one such choice. Repetitions in an enumeration are harmless.

Let $P$ and $Q$ be distinct probability measures on $(\Omega,\mathcal B)$, and generate $X,Z,L,Y$ under the balanced experiment. Because $\mathcal G$ is a generating $\pi$-system, the uniqueness theorem for finite measures implies that some $n$ satisfies $P(A_n)\neq Q(A_n)$. Interpret $A_n$ as a binary question. Fix a probability mass function $\pi$ on $\mathbb N$ with full support, for example $\pi(n)=2^{-n}$. The canonical random-question classifier proceeds as follows:
\begin{enumerate}
    \item Independently of $X,Y,Z,L$, sample an index $i\sim\pi$, and set
    $A=A_i$.
    \item Evaluate the binary indicators
    $I_Y=\mathbf{1}_{A}(Y)$,
    $I_X=\mathbf{1}_{A}(X)$, and
    $I_Z=\mathbf{1}_{A}(Z)$.
    \item Predict $P$ if $I_Y=I_X\neq I_Z$; predict $Q$ if
    $I_Y=I_Z\neq I_X$; otherwise, predict $P$ or $Q$ according to an
    independent fair coin flip.
\end{enumerate}
This random-question classifier has an expected accuracy above chance.

\begin{theo}
\label{theo:borel_accuracy}

Let $(\Omega,\mathcal B)$ be a countably generated measurable space, let $\mathcal G=\{A_n\}_{n\ge1}$ be a generating $\pi$-system for $\mathcal B$, and let $\pi$ be a probability mass function on $\mathbb N$ with full support. For any distinct probability measures $P\neq Q$ on $(\Omega,\mathcal B)$, under the balanced experiment the expected accuracy of the corresponding random-question classifier is

\[
\mathbb{P}(\mathrm{Correct})
=
\frac{1}{2}
+
\frac{1}{2}
\sum_{n=1}^{\infty}
\pi(n)\bigl(P(A_n)-Q(A_n)\bigr)^2
>
\frac{1}{2}.
\]
\end{theo}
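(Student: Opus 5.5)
The plan is to condition on the sampled index $i=n$, compute the conditional accuracy exactly, and then average over $\pi$. The cleanest route is to notice that, for a fixed question $A$, the classifier is the randomized kernel rule of Theorem~\ref{theo_randomized} with the indicator kernel
\[
k_A(u,v)=\mathbf 1_A(u)\mathbf 1_A(v)+\mathbf 1_{A^c}(u)\mathbf 1_{A^c}(v)=\mathbf 1_{\{\mathbf 1_A(u)=\mathbf 1_A(v)\}}.
\]
This kernel takes values in $\{0,1\}$ and is positive semidefinite, being the sum of two rank-one kernels. To confirm the match, check the three cases of step~3. If $I_Y=I_X\neq I_Z$, then $k_A(X,Y)-k_A(Y,Z)=1$, so $\psi=1$ and we predict $P$. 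If $I_Y=I_Z\neq I_X$, then $\psi=0$ and we predict $Q$. Otherwise, because the indicators are binary, we must have $I_X=I_Y=I_Z$, so both similarities equal $1$ and $\psi=1/2$, which is the fair coin.

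The computation in Parts~1--3 of the proof of Theorem~\ref{theo_randomized} used only that $k$ is bounded, measurable and takes values in $[0,1]$. Characteristicness entered only in the final strictness step. So the conditional accuracy given $i=n$ is
\[
\tfrac12+\tfrac14\operatorname{MMD}_{k_{A_n}}^2(P,Q).
\]
Write $p=P(A_n)$ and $q=Q(A_n)$. Then
\[
\mathbb E_{P\times P}[k_{A_n}]=p^2+(1-p)^2,\qquad
\mathbb E_{P\times Q}[k_{A_n}]=pq+(1-p)(1-q),\qquad
\mathbb E_{Q\times Q}[k_{A_n}]=q^2+(1-q)^2.
\]
Combining these gives
\[
\operatorname{MMD}_{k_{A_n}}^2(P,Q)=(p-q)^2+\bigl((1-p)-(1-q)\bigr)^2=2(p-q)^2,
\]
so the conditional accuracy is $\tfrac12+\tfrac12(p-q)^2$.

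Since $i$ is drawn independently of $(X,Y,Z,L)$, the tower property gives
\[
\mathbb P(\mathrm{Correct})=\sum_n\pi(n)\Bigl(\tfrac12+\tfrac12\bigl(P(A_n)-Q(A_n)\bigr)^2\Bigr).
\]
Because $\pi$ sums to one and every term is nonnegative and bounded, this series converges and equals the stated formula.

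The only substantive step is strictness. Because $\mathcal G$ is a $\pi$-system generating $\mathcal B$ and $P,Q$ are probability measures, the $\pi$--$\lambda$ uniqueness theorem gives some $n$ with $P(A_n)\neq Q(A_n)$. That term appears in the sum with weight $\pi(n)>0$, since $\pi$ has full support. The sum is therefore strictly positive, and the accuracy exceeds $1/2$.

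I expect no real obstacle beyond bookkeeping. The measurability of the overall rule in $(i,X,Y,Z)$ is immediate, and the interchange of sum and expectation is justified by boundedness.
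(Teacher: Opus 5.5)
Your argument is correct, and its skeleton is the same as the paper's: condition on the selected index, show the conditional accuracy is $\tfrac12+\tfrac12\bigl(P(A_n)-Q(A_n)\bigr)^2$, average over $\pi$, and get strictness from the $\pi$--$\lambda$ uniqueness theorem together with full support of $\pi$. The difference is how you obtain the per-question accuracy.

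The paper enumerates the three disjoint ways of being correct when $L=P$: $p^2(1-q)$, $(1-p)^2q$, and half the tie probability $pq+(1-p)(1-q)$. It simplifies this and uses the $p\leftrightarrow q$ symmetry for $L=Q$. You instead recognize the fixed-question rule as the kernel rule with the agreement kernel $k_{A_n}$ and import the identity from Parts~1--3 of Theorem~\ref{theo_randomized}. You correctly note that those parts use only measurability and $[0,1]$-valuedness, not characteristicness or Euclidean structure.

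The paper does the same computation, $\operatorname{MMD}^2_{k_n}=2(p_n-q_n)^2$, but only later, in the proof of the binary-agreement-kernel corollary. There it runs the logic in the opposite direction, deducing the MMD form of the accuracy from this theorem. Your route is shorter and exposes the kernel interpretation immediately. It is not circular, since Theorem~\ref{theo_randomized} is proved independently. The paper's route is self-contained and elementary, and it keeps this theorem independent of the kernel section.

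One slip in your case check: the ``otherwise'' case is not $I_X=I_Y=I_Z$. It is exactly $I_X=I_Z$, which also includes $I_X=I_Z\neq I_Y$, where both similarities equal $0$ rather than $1$. Your conclusion $\psi=\tfrac12$ still holds, because $I_X=I_Z$ forces $k_{A}(X,Y)=k_{A}(Y,Z)$. The sentence as written is nonetheless false and should be corrected.
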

\noindent
The proof is found in Section~\ref{proofbeyond} in the Appendix.
This generalization via countable measurable questions  
links directly to the earlier randomized kernel rule. In fact, the
random-question classifier is exactly the randomized kernel classifier
associated with a particular binary agreement kernel.

\begin{cor}
Under the assumptions of Theorem~\ref{theo:borel_accuracy}, for each
$n\in\mathbb N$ define
\[
k_n(u,v)
:=
\mathbf{1}_{A_n}(u)\mathbf{1}_{A_n}(v)
+
\bigl(1-\mathbf{1}_{A_n}(u)\bigr)
\bigl(1-\mathbf{1}_{A_n}(v)\bigr),
\]
and define the \textit{binary agreement kernel}
\[
k_\pi(u,v)
:=
\sum_{n=1}^{\infty}\pi(n)k_n(u,v).
\]
Conditional on the observations $(X,Y,Z)$, the probability that the
random-question classifier predicts that $Y$ was generated from $P$ is
\[
\frac{1}{2}
+
\frac{1}{2}
\bigl(
k_\pi(X,Y)-k_\pi(Y,Z)
\bigr).
\]
Thus, as a conditional distribution over labels given $(X,Y,Z)$, the random-question classifier is precisely the randomized kernel classifier associated with $k_\pi$. Moreover, $k_\pi$ is $(\mathcal B\otimes\mathcal B)$-measurable, takes values in $[0,1]$, is positive semidefinite and characteristic, and

\begin{equation}
\mathbb{P}(\mathrm{Correct})
=
\frac{1}{2}
+
\frac{1}{4}\mathrm{MMD}_{k_\pi}^2(P,Q).
\label{eq:Pcorrect}
\end{equation}
\end{cor}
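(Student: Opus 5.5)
The plan is to first compute, conditional on $(X,Y,Z)$ and the chosen index $i=n$, the probability that the classifier predicts $P$, and express it through the agreement kernel $k_n$. Given $A=A_n$, each indicator is $0$ or $1$, so $k_n(u,v)=\mathbf 1\{I_u=I_v\}$. There are four configurations to check. If $I_Y=I_X\neq I_Z$, then $k_n(X,Y)-k_n(Y,Z)=1$ and the rule predicts $P$ with probability $1$. If $I_Y=I_Z\neq I_X$, the difference is $-1$ and the probability is $0$. If all three indicators agree, the difference is $0$. Otherwise $I_X=I_Z\neq I_Y$, and the difference is again $0$. In the last two cases the fair coin gives probability $1/2$. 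So in every case the conditional probability is $\frac12+\frac12(k_n(X,Y)-k_n(Y,Z))$. Averaging over $i\sim\pi$, which is independent of the observations, and interchanging the sum with the linear expression (all sums converge absolutely since $0\le k_n\le1$) gives $\frac12+\frac12(k_\pi(X,Y)-k_\pi(Y,Z))$.

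Next I would verify the listed properties of $k_\pi$. For measurability, each $k_n$ is a finite combination of products of measurable indicators, hence $\mathcal B\otimes\mathcal B$-measurable, and $k_\pi$ is a countable nonnegative series of these, hence measurable. For the range, $k_\pi\in[0,1]$ because $\sum\pi(n)=1$. For positive semidefiniteness, write $k_n(u,v)=\langle\varphi_n(u),\varphi_n(v)\rangle$ with $\varphi_n(u)=(\mathbf 1_{A_n}(u),1-\mathbf 1_{A_n}(u))\in\mathbb R^2$. Each $k_n$ is then PSD, and a convex combination of PSD kernels is PSD: quadratic forms on finite point sets are limits of partial sums of nonnegative terms.

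The main step is the characteristic property, and I would handle it by computing the MMD explicitly. By Fubini and boundedness, $\operatorname{MMD}^2_{k_\pi}(P,Q)=\sum_n\pi(n)\operatorname{MMD}^2_{k_n}(P,Q)$. For a single $n$, write $p=P(A_n)$ and $q=Q(A_n)$. Then
\[
\mathbb E_{P\times P}[k_n]=p^2+(1-p)^2,\qquad \mathbb E_{P\times Q}[k_n]=pq+(1-p)(1-q),\qquad \mathbb E_{Q\times Q}[k_n]=q^2+(1-q)^2 .
\]
Combining these, $\operatorname{MMD}^2_{k_n}=(p-q)^2+\bigl((1-p)-(1-q)\bigr)^2=2(p-q)^2$. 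Hence
\[
\operatorname{MMD}^2_{k_\pi}(P,Q)=2\sum_n\pi(n)\bigl(P(A_n)-Q(A_n)\bigr)^2 .
\]
If $P\neq Q$, the $\pi$-$\lambda$ uniqueness theorem applied to the generating $\pi$-system $\mathcal G$ gives some $n$ with $P(A_n)\neq Q(A_n)$. Since $\pi(n)>0$ by full support, the MMD is positive. Conversely, it is clearly zero when $P=Q$. So $k_\pi$ is characteristic.

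Finally, \eqref{eq:Pcorrect} follows in one of two ways. One route is to rerun the computation in the proof of Theorem~\ref{theo_randomized}, which uses only boundedness and measurability of the kernel and not the Euclidean structure. The other is to substitute the MMD formula above into Theorem~\ref{theo:borel_accuracy}: $\frac14\cdot 2\sum_n\pi(n)(P(A_n)-Q(A_n))^2$ matches the expression there, and the two routes agree. I expect the only delicate point to be justifying the interchange of sums and expectations, which is routine by dominated convergence since all terms are bounded by $1$.
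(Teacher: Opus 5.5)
Your proposal is correct and follows the paper's proof essentially step by step. The shared steps are the case check giving $\frac12+\frac12\bigl(k_n(X,Y)-k_n(Y,Z)\bigr)$ for each selected question, averaging over $n\sim\pi$, the term-by-term identity $\operatorname{MMD}^2_{k_n}(P,Q)=2\bigl(P(A_n)-Q(A_n)\bigr)^2$, the uniqueness theorem for characteristicness, and substitution into Theorem~\ref{theo:borel_accuracy}. The only cosmetic difference is the certificate for positive semidefiniteness: you use the feature map $\bigl(\mathbf 1_{A_n}(u),1-\mathbf 1_{A_n}(u)\bigr)\in\mathbb R^2$ for each $n$, whereas the paper writes $k_\pi(u,v)=\frac12+\frac12\langle\Phi(u),\Phi(v)\rangle_{\ell^2}$ with unit-norm $\pm1$ features $\Phi(u)=\bigl(\sqrt{\pi(n)}\,(2\mathbf 1_{A_n}(u)-1)\bigr)_{n\ge1}$, which gives boundedness and positive semidefiniteness at once.
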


\noindent The proof is found in Section~\ref{binary} in the Appendix.

\begin{rem}
The theorem and corollary are also valid when the generating $\pi$-system $\mathcal G$ is finite, with the same proofs except that the infinite sum is replaced by a finite one.
\end{rem}

\begin{rem}
The two-cell partitions $\{A_n,A_n^c\}$ make $k_\pi$ a random-partition kernel in the sense of Davies and Ghahramani~\cite{davies2014random}. Here the full-support, measure-determining dictionary guarantees characteristicness as well as the exact classification identity. A merely generating family need not itself determine measures unless it has suitable closure properties: for example, the uniform laws on $\{(0,0),(1,1)\}$ and on $\{(0,1),(1,0)\}$ have the same probabilities for each coordinate event on $\{0,1\}^2$, although they differ on the intersection $\{(1,1)\}$. This is why finite intersections are included in the examples below.
\end{rem}

\noindent \textbf{Example 1. \emph{Predicting the class of continuous functions}.}
Consider the space
$\Omega=C([0,1])$
of real-valued continuous functions on \([0,1]\), equipped with the supremum metric
\[
d(f,g)=\|f-g\|_\infty
=
\sup_{t\in[0,1]}|f(t)-g(t)|.
\]
This space is separable. Let \(\mathcal B\) denote its Borel
\(\sigma\)-algebra.
Let
$\mathbb Q_{[0,1]}=\mathbb Q\cap[0,1]$.
For each \(q\in\mathbb Q_{[0,1]}\) and \(c\in\mathbb Q\), define
$E_{q,c}
=
\{h\in C([0,1]):h(q)\le c\}$.
Let
\[
\mathcal G
=
\left\{
\bigcap_{i=1}^{k}E_{q_i,c_i}
:\;
k \geq 1,\;
q_i\in\mathbb Q_{[0,1]},\;
c_i\in\mathbb Q
\right\}.
\]
The family \(\mathcal G\) is countable and is a \(\pi\)-system, and
it can be shown that
$\sigma(\mathcal G)=\mathcal B$.
Fix an enumeration
$\mathcal G=\{A_n\}_{n=1}^{\infty}$.
A random question \(A_n\) therefore corresponds to a finite set of 
rational time-threshold pairs \((q_i,c_i)\) and then asking whether a function
\(h\) satisfies all of the inequalities:
\[
h(q_i)\le c_i,
\qquad
i=1,\dots,k.
\]
The answer is ``yes'' if all \(k\) inequalities hold and ``no'' if at
least one fails.

\medskip
\noindent\textbf{Illustrative process model and question sampling.}
To illustrate this framework, we define two probability distributions on
\(C([0,1])\) by linearly interpolating two discrete-time stochastic
processes. Let
$t_j=j/100$ for
$j=0,\dots,100$.
For the first process, let \(\widetilde X\) be a \emph{Gaussian random walk},
initialized at \(\widetilde X_0=0\), with
\[
\widetilde X_j=\widetilde X_{j-1}+\xi_j,
\qquad
\xi_j\overset{\mathrm{i.i.d.}}{\sim}\mathcal N(0,0.25).
\]
For the second process, let \(\widetilde Z\) be a \emph{mean-reverting Gaussian
autoregressive process of order one}, initialized at
\(\widetilde Z_0=0\), with
\[
\widetilde Z_j=0.5\widetilde Z_{j-1}+\eta_j,
\qquad
\eta_j\overset{\mathrm{i.i.d.}}{\sim}\mathcal N(0,0.25).
\]
The two noise sequences are mutually independent. Let
\(\mathcal I(\widetilde X)\) and \(\mathcal I(\widetilde Z)\) denote the
continuous functions obtained by setting
$$\mathcal I(\widetilde X)(t_j)=\widetilde X_j \quad \text{and} \quad
\mathcal I(\widetilde Z)(t_j)=\widetilde Z_j ,$$
and interpolating linearly between consecutive grid points. Define
$P$ as the law of $\mathcal I(\widetilde X)$ and
$Q$ as the law of $\mathcal I(\widetilde Z)$. 
The interpolation map from $\mathbb R^{101}$ into $C([0,1])$ is continuous in the supremum norm, so the two laws are Borel probability measures. They are distinct: at $t_2=0.02$, their variances are $0.5$ and $0.3125$, respectively.

On each trial, we independently generate
$X\sim P$,
$Z\sim Q$, and
$Y\sim 1/2\, P+ 1/2\, Q$,
retaining the component label that generated \(Y\). We also independently
sample a random question \(A\in\mathcal G\) as follows:

\begin{itemize}
    \item \textit{Question complexity.}
    Draw
$k\sim\operatorname{Geom}(0.3)$
    on \(\{1,2,\dots\}\), so that
    $$\mathbb P(k=\ell)=0.3(0.7)^{\ell-1} .$$

    \item \textit{Rational evaluation times.}
For a fully specified dictionary ordering, take $e_1=0$, $e_2=1$, followed by all reduced fractions $a/b\in(0,1)$ in increasing order of $b\ge2$, and then of $a$ within each denominator. Independently for each

    \(i=1,\dots,k\), draw
    $J_i\sim\operatorname{Zeta}(1.5)$
    and set
    $q_i=e_{J_i}$.

    \item \textit{Rational thresholds.}
    Independently for each \(i=1,\dots,k\), draw
    $D_i\sim\operatorname{Zeta}(1.5)$
and
    $V_i\sim\mathcal N(0,2.5^2)$,
    and define
    $N_i$ as  $V_i D_i$ rounded to the nearest integer, and then
    $c_i:= N_i/D_i$.
    
\end{itemize}

Every positive integer $k$ and every rational evaluation time have positive probability. For a rational threshold $c=a/b$, the event $D_i=b$ has positive probability, and rounding $bV_i$ to $a$ also has positive probability because the Gaussian density is positive on the interval $((a-1/2)/b,(a+1/2)/b)$. Thus every finite conjunction in $\mathcal G$ has positive probability, even though different parameter lists can describe the same event. The induced distribution on the dictionary has full support.
For the sampled question
\[
A=\bigcap_{i=1}^{k}E_{q_i,c_i},
\]
the classifier computes
$I_X=\mathbf 1_A(X)$,\,
$I_Y=\mathbf 1_A(Y),$ and
$I_Z=\mathbf 1_A(Z)$.
If
$I_Y=I_X\neq I_Z$,
it predicts \(P\), while if
$I_Y=I_Z\neq I_X$,
it predicts \(Q\). If \(I_X=I_Z\), the selected question does not
distinguish the two reference functions, and the classifier predicts
\(P\) or \(Q\) using an independent fair coin flip.

\medskip

\noindent\textbf{Illustrative trials.}
Figure~\ref{questions} shows nine example triples and sampled questions. These selected panels illustrate how the rule operates; they are not an estimate of its population accuracy for the fully specified enumeration above. The strict population advantage follows from Theorem~\ref{theo:borel_accuracy}. No aggregate simulation rate is inferred from the selected trials.
\begin{figure}[htb!]
\centering
\includegraphics[width=\textwidth]{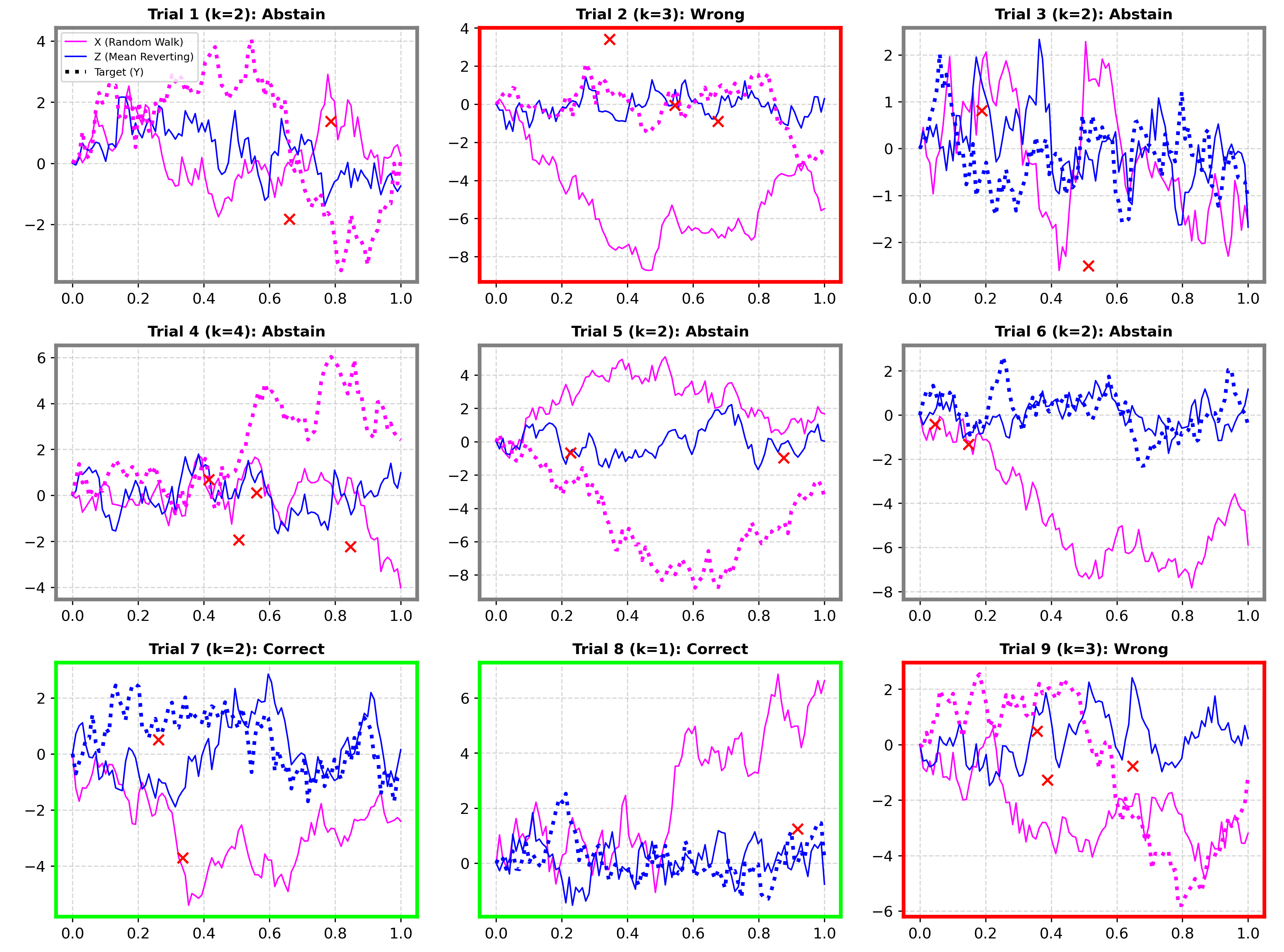}
\caption{Illustrative random-question decisions for continuous functions. Solid curves are the labeled references, the dotted curve is the target, and crosses indicate the sampled time-threshold pairs $(q_i,c_i)$. The decision uses the conjunction of all comparisons $h(q_i)\le c_i$, not an individual cross. ``Abstain'' means that the reference indicators coincide and the final label is chosen by a fair coin. These are qualitative examples, not calibrated pointwise-accuracy plots.}
\label{questions}
\end{figure}
For each curve $h$, the question returns $1$ precisely when all crosses lie on or above the curve at their respective times. In Trial~1, all three indicators are $0$, so the rule uses its fair coin. In Trial~2, the target agrees with the blue reference rather than the pink reference and the displayed prediction is incorrect. In Trials~7 and~8, the reference indicators differ and the target agrees with the correct reference. The outcome of a multi-cross question depends on the entire conjunction, so it cannot be assigned to a single location in the time-threshold plane.\\

\noindent \textbf{Example 2. \emph{Labeled edge-inclusion subgraphs}}. This example, deferred to Section~\ref{Graphs} in the Appendix, shows that the random question framework also applies to edge distribution classes on finite graphs.

\section{Extensions}\label{extensions}

\subsection{Deterministic rules on countably generated spaces}
\label{subsec:order_rule}

Here we show that on countably generated spaces, there also exist \emph{deterministic} universal rules for balanced experiments which beat chance.
We first show that in
\(\mathbb R\), the total order permits a deterministic rule that uses
only the relative order of the three observations. We then highjack this result to prove the existence of universal deterministic rules in any countably generated space. 

In \(\mathbb R\), let us first suppose that
the distributions \(P\) and \(Q\) are absolutely continuous with respect to Lebesgue measure, with
densities \(f\) and \(g\), respectively. 
Write
\[
F(t)=\int_{-\infty}^t f(s)\,ds
\qquad\text{and}\qquad
G(t)=\int_{-\infty}^t g(s)\,ds
\]
for their cumulative distribution functions. Let
\[
X\sim P,
\qquad
Z\sim Q,
\qquad
Y\sim \frac12P+\frac12Q
\]
be generated independently.
Consider the deterministic order rule
\[
C^{\mathrm{ord}}(X,Y,Z)
=
\begin{cases}
P,
&
X<Z\ \text{and}\ Y\leq X,
\\[1mm]
P,
&
X>Z\ \text{and}\ Y>Z,
\\[1mm]
Q,
&
\text{otherwise}.
\end{cases}
\]
Thus, if \(X<Z\), the rule predicts \(P\) if and only if \(Y\leq X\),
whereas if \(X>Z\), it predicts \(P\) if and only if \(Y>Z\).
Since \(P\) and \(Q\) are absolutely continuous, ties occur with
probability zero, and the placement of the weak and strict inequalities
does not affect the rule's accuracy.
The following lemma gives its exact accuracy.
\begin{lem}
\label{lem:order_rule_ac}
Let \(P\) and \(Q\) be absolutely continuous probability distributions
on \(\mathbb R\), with densities \(f\) and \(g\) and cumulative
distribution functions \(F\) and \(G\). Then the deterministic order
rule satisfies
\[
\mathbb P(\mathrm{Correct})
=
\frac12
+
\frac38
\int_{\mathbb R}
\bigl(F(t)-G(t)\bigr)^2
\bigl(f(t)+g(t)\bigr)\,dt.
\]
Equivalently,
\[
\mathbb P(\mathrm{Correct})
=
\frac12
+
\frac38
\int_{\mathbb R}
\bigl(F(t)-G(t)\bigr)^2\,d(P+Q)(t).
\]
In particular, if \(P\neq Q\), then
\[
\mathbb P(\mathrm{Correct})>\frac12.
\]
\end{lem}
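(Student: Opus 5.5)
The plan is to condition on the target label and write each class-conditional success probability as a one-dimensional integral against $f$ or $g$. Absolute continuity makes all ties null events, so $F,G$ are continuous and the placement of strict and weak inequalities can be ignored. With $L=P$, so that $Y\sim P$ independently of $X\sim P$ and $Z\sim Q$, the rule is correct on $\{Y<X<Z\}\cup\{Z<\min(X,Y)\}$. Integrating over $X$ in the first event and over $Z$ in the second gives
\[
\mathbb P(\mathrm{Correct}\mid L=P)=\int F(1-G)\,f\,dt+\int (1-F)^2\,g\,dt .
\]
With $L=Q$, so that $Y\sim Q$, the rule is correct on $\{X<\min(Y,Z)\}\cup\{Y<Z<X\}$, which gives
\[
\mathbb P(\mathrm{Correct}\mid L=Q)=\int (1-G)^2 f\,dt+\int G(1-F)\,g\,dt .
\]

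Next I will subtract $1/2$ from the average of these two expressions and reduce everything to the target form. Expanding, every term is a quadratic polynomial in $(F,G)$ integrated against $dP=f\,dt$ or $dQ=g\,dt$. The reduction uses only the continuous-CDF identities
\[
\int dP=\int dQ=1,\quad \int F\,dP=\int G\,dQ=\tfrac12,\quad \int F^2\,dP=\int G^2\,dQ=\tfrac13,
\]
together with the integration-by-parts relations
\[
\int G\,dP+\int F\,dQ=1,\qquad \int FG\,dP+\tfrac12\int F^2\,dQ \text{ and its mirror}.
\]
More concretely, $d(FG^2)=G^2\,dF+2FG\,dG$, which gives $\int G^2\,dP+2\int FG\,dQ=1$, and symmetrically $\int F^2\,dQ+2\int FG\,dP=1$. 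Using these, I will rewrite the surplus as a combination of $\int (F-G)^2\,dP$ and $\int (F-G)^2\,dQ$, and check that the coefficient is $3/8$ on each. The identities should let me eliminate the mixed and linear terms, such as $\int G\,dP$, $\int FG\,dQ$ and $\int G^2\,dP$, in favour of these squared differences. This bookkeeping is the step most likely to go wrong, so I will organize it by collecting the coefficients of each monomial integral and then matching them against the expansion of $\tfrac38\int(F^2-2FG+G^2)\,d(P+Q)$ after the same reductions.

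For strict positivity, suppose $P\neq Q$. Then the set $U=\{t:F(t)\neq G(t)\}$ is nonempty and open, because $F$ and $G$ are continuous. Take a connected component $(a,b)$ of $U$, where $a$ or $b$ may be infinite. At each finite endpoint $F=G$, and at infinite endpoints the limits of $F-G$ are $0$. If $(P+Q)((a,b))=0$, then both $F$ and $G$ are constant on $(a,b)$, so $F-G$ would equal its endpoint value $0$ there, which is a contradiction. Hence $P+Q$ charges a set on which $(F-G)^2>0$, the integral is strictly positive, and the accuracy strictly exceeds $1/2$.
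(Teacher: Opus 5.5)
Your proof is correct. Your first step matches the paper's: your two class-conditional integrals equal the paper's $\pi_P$ and $1-\pi_Q$. Where you diverge is the algebraic reduction.

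The paper subtracts to get $2\mathbb P(\mathrm{Correct})-1=\int D(1-G)f-\int D(1-F)g$ with $D=F-G$. It then substitutes $F=(S+D)/2$ and $G=(S-D)/2$, and finishes with $\int DD'=0$ plus one integration by parts, $\int S(D^2)'=-\int S'D^2$.

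Your monomial bookkeeping also closes. Write $c_P=\int FG\,dP$ and $c_Q=\int FG\,dQ$. Your listed identities reduce both the surplus and $\frac38\int(F-G)^2\,d(P+Q)$ to $1-\frac32(c_P+c_Q)$, so only one unknown survives. Two small corrections:
\begin{itemize}
\item The relation you wrote as $\int FG\,dP+\tfrac12\int F^2\,dQ$ should end with $=\tfrac12$.
\item ``Coefficient $3/8$ on each'' is not a well-defined check, because $\int(F-G)^2\,dP=\int(F-G)^2\,dQ$ for continuous CDFs (integrate $d\bigl((F-G)^3\bigr)$). Matching canonical forms, as you also propose, is the right test.
\end{itemize}

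What your route buys: every identity you use needs only continuity of $F$ and $G$. These identities are that $F(X)$ is uniform, and ordering probabilities for three independent draws. For example, $\int G^2\,dP+2\int FG\,dQ=1$, with $X\sim P$ and $Z_1,Z_2\sim Q$ independent, says that exactly one of $X,Z_1,Z_2$ is the largest. Your argument therefore proves the formula verbatim for all atomless laws. The paper reaches that case only through the Lebesgue--Stieltjes argument of Theorem~\ref{theo:order_rule}.

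Conversely, the paper's $D,S$ organization is what extends to atoms. There, each of your moment identities would pick up jump corrections, such as $\int F\,dP=\frac12+\frac12\sum_tP(\{t\})^2$.

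For strict positivity, the paper argues locally. It takes $x$ with $D(x)>0$, sets $c=D(x)/2$ and $a=\sup\{t\le x:D(t)=c\}$, and notes that $(P-Q)((a,x])=c>0$ while $D>c$ on $(a,x]$. Your connected-component argument is the one the paper uses for the atomic case, and it is equally valid here.
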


\noindent The proof is found in Section~\ref{ordered} of the Appendix. 

\begin{rem}
The assumption of absolute continuity is needed only to write the calculation using densities and ordinary derivatives. The same identity remains valid for arbitrary atomless distributions. Indeed, when $P$ and $Q$ are atomless, their cumulative distribution functions are continuous functions of bounded variation, and the argument carries over by replacing density integrals with Lebesgue–Stieltjes integrals and ordinary integration by parts with integration by parts for continuous functions of bounded variation---see Theorem~\ref{theo:order_rule_general} below.
\end{rem}

\begin{rem}
In the atomless setting above, the advantage over chance can be expressed in
terms of a population Cram\'er--von Mises discrepancy. Let
$H=1/2\,(P+Q)$
denote the equally weighted mixture measure of \(P\) and \(Q\). Then
\[
\int_{\mathbb R}
(F(t)-G(t))^2\,d(P+Q)(t)
=
2\int_{\mathbb R}
(F(t)-G(t))^2\,dH(t),
\]
and hence
\[
\mathbb P(\mathrm{Correct})
= \frac12 +
\frac34
\int_{\mathbb R}
(F(t)-G(t))^2\,dH(t).
\]
The integral on the right is the population analogue of the two-sample Cram\'er--von Mises criterion \cite{anderson1962cramer}, with equal mixture weights. Thus, as before, the classification advantage is
a constant multiple of a distributional discrepancy.
\end{rem}

\noindent When atoms are present, the same calculation acquires additional jump
terms. The exact formula is more cumbersome, so we state here only the
resulting classification guarantee and defer the full statement and proof to
Section~\ref{ordered} of the Appendix.

\begin{theo}
\label{theo:order_rule_general}
Let \(P\) and \(Q\) be arbitrary probability distributions on
\(\mathbb R\). Then the deterministic order rule satisfies
$\mathbb P(\mathrm{Correct})\geq 1/2$
with equality if and only if \(P=Q\). 
\end{theo}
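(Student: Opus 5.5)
The approach is to compute the accuracy exactly by conditioning on the two references, which reduces everything to one-dimensional Lebesgue--Stieltjes integrals of $D:=F-G$. Here $F(t)=P((-\infty,t])$ and $G(t)=Q((-\infty,t])$ are the right-continuous distribution functions. Fix $X=x$ and $Z=z$.
\begin{itemize}
\item If $x<z$, the rule predicts $P$ exactly on $\{Y\le x\}$. By the known-distribution computation behind \eqref{eq:bayes_acc}, the conditional accuracy is $\tfrac12+\tfrac12\bigl(F(x)-G(x)\bigr)$.
\item If $x>z$, it predicts $P$ exactly on $\{Y>z\}$, giving $\tfrac12-\tfrac12 D(z)$.
\item If $x=z$, it always predicts $Q$, giving exactly $\tfrac12$.
\end{itemize}
Averaging over the independent pair $(X,Z)$ yields
\[
\mathbb P(\mathrm{Correct})-\tfrac12=\tfrac12\,\mathcal A,
\qquad
\mathcal A:=\int D(x)\bigl(1-G(x)\bigr)\,dP(x)-\int D(z)\bigl(1-F(z)\bigr)\,dQ(z).
\]
This formula holds for arbitrary $P,Q$, atoms included. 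In the atomless case it should reduce to the $\tfrac34\int D^2\,d(P+Q)$ of Lemma~\ref{lem:order_rule_ac}, and I will use that as a consistency check.

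Next I rewrite $\mathcal A=\int D\,d(P-Q)-\Bigl(\int DG\,dP-\int DF\,dQ\Bigr)$ and treat each piece with the Stieltjes product rule for right-continuous functions of bounded variation. For such functions $u,v$ the rule reads $d(uv)=u_-\,dv+v\,du$, where $u_-$ denotes left limits. For the first piece I apply it to $D\cdot D$, using $dD=d(P-Q)$ and $D(\pm\infty)=0$. This gives
\[
\int D\,d(P-Q)=\tfrac12\sum_a\bigl(P\{a\}-Q\{a\}\bigr)^2\ \ge0 .
\]
For the second piece I would write $G\,dP-F\,dQ=G\,dF-F\,dG$ and integrate $D\,(G\,dF-F\,dG)$ by parts, expressing everything through $D$, $D^2$ and $D^3$-type exact differentials plus jump corrections. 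The target is an identity
\[
\mathcal A=\tfrac34\int D^2\,d(P+Q)+\sum_a c_a\ \ \text{(corrected form of the main integral)},
\]
where the atomic corrections $c_a$ are explicit polynomials in $P\{a\},Q\{a\},D(a-),D(a)$. In particular $\mathcal A\ge0$. An alternative that may be cleaner is to write $D(x)=\mathbb P(X'\le x)-\mathbb P(Z'\le x)$ with independent copies $X'\sim P$, $Z'\sim Q$. Then $\mathcal A$ becomes a signed combination of order-pattern probabilities among the four variables $X,X',Z,Z'$, and I would group these patterns into perfect squares atom by atom.

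For the equality case, suppose $\mathcal A=0$. Nonnegativity of every term forces $D=0$ for $(P+Q)$-almost every $t$, and also forces each atom correction to vanish. I would check that the latter gives $P\{a\}=Q\{a\}$ at every atom. Then $D$ is right-continuous and constant on every interval of $(P+Q)$-measure zero, since neither $F$ nor $G$ increases there, and it vanishes at $\pm\infty$. So $D$ vanishes on the support of $P+Q$ and hence everywhere, which means $F=G$ and $P=Q$. Conversely, $P=Q$ gives $D\equiv0$ and $\mathcal A=0$.

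The main obstacle is the atomic bookkeeping in the second step: the jump terms from the product rule must be organized so that they are visibly nonnegative rather than merely summing to something nonnegative. If the direct expansion is unwieldy, I would first handle finitely supported $P,Q$ by an exact finite-sum computation with atoms on a common grid, where each atom contributes a quadratic form in $(D(a-),D(a))$ that must be shown positive semidefinite. I would then treat the general case by splitting into atomic and continuous parts, or by applying the same grouping to the Stieltjes sums directly.
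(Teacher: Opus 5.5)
Your reduction is correct and matches the paper's starting point. Conditioning on $(X,Z)$ gives exactly the paper's $J=\pi_P-\pi_Q$, and your route through the known-distribution formula is slightly quicker. Your identity $\int D\,d(P-Q)=\tfrac12\sum_a(\Delta D(a))^2$ is also the paper's.

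\textbf{The gap.} You never prove the actual content of the theorem, $\mathcal A\ge0$ when atoms are present, and the target form you state cannot deliver it. Take $P=\delta_0$ and $Q=\tfrac12\delta_0+\tfrac12\delta_1$. Then $\mathcal A=\tfrac14$ but $\tfrac34\int D^2\,d(P+Q)=\tfrac9{32}$. So in $\mathcal A=\tfrac34\int D^2\,d(P+Q)+\sum_a c_a$ the corrections necessarily sum to $-\tfrac1{32}$, whatever the $c_a$ are, and no bookkeeping makes them nonnegative.

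\textbf{The missing idea.} It is the paper's substitution $S=F+G$. From $F=\tfrac12(S+D)$ and $G=\tfrac12(S-D)$ one gets $G\,dF-F\,dG=\tfrac12(S\,dD-D\,dS)$, hence
\[
\mathcal A=\int D\,dD+\tfrac12\int D^2\,dS-\tfrac12\int SD\,dD .
\]
Now apply your product rule to $D^2\cdot S$ with the left limit on $D^2$:
\[
d(SD^2)=D_-^2\,dS+S\,d(D^2),\qquad d(D^2)=(2D-\Delta D)\,dD .
\]
Since $SD^2\to0$ at $\pm\infty$, this gives $2\int SD\,dD=\sum_a S(a)(\Delta D(a))^2-\int D_-^2\,dS$. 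Therefore
\[
\mathcal A=\tfrac12\int D^2\,d(P+Q)+\tfrac14\int D_-^2\,d(P+Q)+\tfrac14\sum_a\bigl(2-F(a)-G(a)\bigr)(\Delta D(a))^2 ,
\]
and every term is visibly nonnegative. The atomless constant $\tfrac34$ must be split as $\tfrac12$ on $D^2$ and $\tfrac14$ on $D_-^2$. No $D^3$ terms are needed, and neither is the finite-support detour.

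\textbf{The equality case also needs a repair.} In the correct identity the jump term carries the factor $2-F(a)-G(a)$, which vanishes at a largest atom. In the example above, $a=1$ has $\Delta D(1)=-\tfrac12$ but a zero jump term. So vanishing of the atom corrections does not by itself give $P\{a\}=Q\{a\}$.

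The paper instead uses that both $D$ and $D_-$ vanish $(P+Q)$-a.e. An atom $a$ with $\Delta D(a)\neq0$ would then satisfy $D(a)=D_-(a)=0$, a contradiction.

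Alternatively, your own first-piece identity suffices. If $D=0$ $(P+Q)$-a.e., then $\int D\,d(P-Q)=0$, because $P-Q$ is absolutely continuous with respect to $P+Q$. Hence $\sum_a(\Delta D(a))^2=0$ and $D$ is continuous.

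Once $D$ is continuous, your argument finishes the proof: $D$ is constant on $(P+Q)$-null intervals and vanishes at $\pm\infty$. This is the same as the paper's connected-component argument.
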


Corollary~\ref{cor:deterministic-encoding} below asserts that we can take advantage of this one-dimensional result to prove that there exist universal deterministic rules which beat chance in balanced experiments in arbitrary countably generated spaces.

\begin{cor}
\label{cor:deterministic-encoding}
Let $(\Omega,\mathcal B)$ be a countably generated measurable space. There is a fixed measurable deterministic classifier on $\Omega^3$ with balanced expected accuracy strictly greater than $1/2$ for every pair of distinct probability measures $P,Q$ on $(\Omega,\mathcal B)$. In particular, this holds for every separable metric space with its Borel sigma-algebra.
\end{cor}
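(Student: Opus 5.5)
We will reduce the problem to Theorem~\ref{theo:order_rule_general} through an injective measurable encoding of $\Omega$ into $\mathbb R$. Fix a countable generating family $\{A_n\}_{n\ge1}$ for $\mathcal B$. Define
\[
 T:\Omega\to[0,1],\qquad T(\omega):=\sum_{n=1}^\infty 2\cdot 3^{-n}\,\mathbf 1_{A_n}(\omega),
\]
the Cantor-type ternary code of the indicator sequence. Each partial sum is $\mathcal B$-measurable, so $T$ is $\mathcal B$-measurable as a pointwise limit. The ternary digits lie in $\{0,2\}$, so the indicator sequence $(\mathbf 1_{A_n}(\omega))_n$ is a Borel function of $T(\omega)$ on the Cantor set. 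Hence each $A_n=T^{-1}(B_n)$ for the Borel set $B_n=\{t\in C:\text{$n$-th ternary digit of $t$ is }2\}$, where $C$ is the Cantor set.

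We then define the classifier
\[
 \mathcal C(x,y,z):=C^{\mathrm{ord}}\bigl(T(x),T(y),T(z)\bigr).
\]
This rule is measurable on $\Omega^3$ with the product sigma-algebra, because $C^{\mathrm{ord}}$ is Borel on $\mathbb R^3$ (it is defined by finitely many order comparisons) and $(x,y,z)\mapsto(T(x),T(y),T(z))$ is $\mathcal B^{\otimes3}$-measurable. The rule is fixed once $\{A_n\}$ is fixed, so it does not depend on $P$ or $Q$. Under the balanced experiment, $(T(X),T(Y),T(Z))$ is exactly the balanced experiment for the pushforwards $P\circ T^{-1}$ and $Q\circ T^{-1}$ on $\mathbb R$. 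The label $L$ and all independence relations are preserved, so the accuracy of $\mathcal C$ equals the accuracy of $C^{\mathrm{ord}}$ for the pair $(P\circ T^{-1},Q\circ T^{-1})$.

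The key step is showing that $P\neq Q$ implies $P\circ T^{-1}\neq Q\circ T^{-1}$. Consider $\sigma(T):=\{T^{-1}(B):B\text{ Borel}\}$. It is a sigma-algebra contained in $\mathcal B$ and, by the first paragraph, it contains every $A_n$. Hence $\sigma(T)=\mathcal B$. If the pushforwards coincided, then $P(T^{-1}B)=Q(T^{-1}B)$ for all Borel $B$, so $P=Q$ on $\sigma(T)=\mathcal B$, a contradiction. We therefore get distinct pushforwards, and Theorem~\ref{theo:order_rule_general} gives accuracy strictly greater than $1/2$.

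We expect the main technical point to be exactly this measurability and recovery claim, namely writing $A_n$ as $T^{-1}$ of a Borel set. We would handle it by noting that the digit map $d_n$ on $C$ satisfies $\{t\in C: d_n(t)=2\}=C\cap\bigcup_{\text{finitely many}}I$, a finite union of closed intervals from the $n$-th stage of the Cantor construction, which is plainly Borel. Because the gaps between the pieces are strictly positive, the ternary expansion is unique, so this set is well defined. Importantly, $T$ need not be injective when $\mathcal B$ fails to separate points; only $\sigma(T)=\mathcal B$ is needed.

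For the separable metric case, we take $\{A_n\}$ to be a countable base, which generates the Borel sigma-algebra, so the space is countably generated.
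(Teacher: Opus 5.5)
Your proposal is correct and follows essentially the same route as the paper: the same ternary encoding $T(\omega)=\sum_n 2\cdot 3^{-n}\mathbf 1_{A_n}(\omega)$, the identification $\sigma(T)=\mathcal B$, distinctness of the pushforward laws, and an appeal to Theorem~\ref{theo:order_rule_general}. The only difference is cosmetic. You exhibit the Borel set recovering $A_n$ through stage-$n$ Cantor intervals and uniqueness of $\{0,2\}$-expansions. The paper instead takes it as the compact image $c(\{b:b_n=1\})$ of a cylinder set under the continuous injective coding map.
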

\begin{proof}
Fix a generating sequence $(B_n)_{n\ge1}$ for $\mathcal B$, repeating sets if the generating family is finite, and define the measurable map
\[
 T(x)=\sum_{n=1}^{\infty}\frac{2\mathbf1_{B_n}(x)}{3^n}\in[0,1].
\]
To verify that the encoding retains the whole sigma-algebra, consider
\[
 c:\{0,1\}^{\mathbb N}\longrightarrow[0,1],\qquad
 c(b)=\sum_{n\ge1}\frac{2b_n}{3^n}.
\]
Uniform convergence makes $c$ continuous in the product topology. If two sequences first differ at coordinate $j$, their images differ in absolute value by at least
\[
 \frac2{3^j}-\sum_{n>j}\frac2{3^n}=\frac1{3^j}>0,
\]
so $c$ is injective. The product space is compact; hence, for each $n$, the set
\[
 K_n=c\bigl(\{b:b_n=1\}\bigr)
\]
is compact and therefore Borel in $\mathbb R$. Injectivity gives $T^{-1}(K_n)=B_n$. Thus $\mathcal B\subseteq\sigma(T)$; the reverse inclusion follows from measurability, so $\sigma(T)=\mathcal B$.

Let $P_T=T_\#P$ and $Q_T=T_\#Q$ be the pushforward laws on $\mathbb R$. If $P_T=Q_T$, then $P$ and $Q$ agree on every set $T^{-1}(E)$ with $E$ Borel. These preimages form precisely $\sigma(T)=\mathcal B$, so $P=Q$. Consequently $P\neq Q$ implies $P_T\neq Q_T$.

Apply the fixed order rule of Theorem~\ref{theo:order_rule_general} to $T(X),T(Y),T(Z)$. Its comparison events are measurable, and the transformed variables obey the balanced experiment with laws $P_T,Q_T$. The theorem gives accuracy strictly above $1/2$. Neither the encoding nor the classifier depends on $P,Q$.
\end{proof}

\begin{rem}
The map $T$ need not be injective on $\Omega$; equality of sigma-algebras, rather than a standard-Borel isomorphism assumption, is what the argument uses. The theorem is an existence result, not a claim of continuity, geometric invariance, or computational efficiency. In particular, a total order without suitable measurable structure is not by itself a sufficient hypothesis for the real-line proof.
\end{rem}

\subsection{Multiclass extensions}
\label{subsec:multiclass}

All of the better-than-chance two-class results  in this article extend to the uniform prior multiclass setting.
Suppose that distributions \(P_1,\ldots,P_K\), with \(K\geq 3\), are such that for
each pair \(a<b\), a given binary rule has 
accuracy
\[
\frac{1}{2}+\Gamma_{ab},
\]
where \(\Gamma_{ab}\geq 0\). One labeled observation is drawn
independently from each distribution, and independently of these references the class of the target $Y$ is chosen uniformly from $\{1,\ldots,K\}$, followed by a fresh draw from that class.
Suppose we choose an unordered pair \(\{a,b\}\) uniformly from the
\(\binom{K}{2}\) possible pairs, independently of the observations and target label, and
apply the corresponding binary rule using only the observations from
the two selected classes. The rule can be correct only if the true class
belongs to the selected pair, which occurs with probability \(2/K\).
Conditional on this event, the two selected classes have equal prior
probability. Hence the resulting multiclass accuracy is
\[
\frac{1}{\binom{K}{2}}
\sum_{a<b}
\frac{2}{K}
\left(\frac{1}{2}+\Gamma_{ab}\right)
=
\frac{1}{K}
+
\frac{4}{K^2(K-1)}
\sum_{a<b}\Gamma_{ab}.
\]
Thus, if at least one pairwise advantage is strictly positive, the
multiclass accuracy is strictly greater than \(1/K\), i.e.,  the rule
beats chance.
This construction is randomized because the class pair is sampled, even when the binary rule is deterministic. Its accuracy is at most $2/K$, since the true label must belong to that pair. This may not be optimal, but it immediately supplies a multiclass existence result.

\subsection{Unknown mixture weights and adversarial target-class selection}
\label{subsec:unknown-adversarial}

The preceding results assume that the target class is selected with equal
probability $\alpha = 0.5$ from $P$ or $Q$. We now look at what happens instead when this mixing weight $\alpha$ is fixed but unknown. (If it is fixed, known, and different to $0.5$ we can beat chance by always predicting the most likely class).
Let $\alpha\in(0,1)$ and suppose that, independently of $(X,Z)$,
\[
Y\sim
\begin{cases}
P, & \text{with probability }\alpha,\\
Q, & \text{with probability }1-\alpha.
\end{cases}
\]
We want to know if there exists a rule
that does not depend on $\alpha$, $P$, and $Q$, and yet beats chance whenever $P \neq Q$.
Let
$\psi(x,y,z)\in[0,1]$
denote the probability that the rule predicts that $Y$ was drawn
from $P$. Define
\begin{align}
B_P(\psi;P,Q)
&:=
\mathbb E_{X,Y\sim P,\; Z\sim Q}
\bigl[\psi(X,Y,Z)\bigr],
\\
B_Q(\psi;P,Q)
&:=
\mathbb E_{X\sim P,\; Y,Z\sim Q}
\bigl[1-\psi(X,Y,Z)\bigr].
\end{align}
Thus the classification accuracy under mixing weight $\alpha$ is
\begin{equation}
\operatorname{Acc}_{\alpha}(\psi;P,Q)
=
\alpha B_P(\psi;P,Q)
+
(1-\alpha)B_Q(\psi;P,Q).
\label{eq:alpha-accuracy}
\end{equation}
At $\alpha=1/2$, the preceding results show that a fixed
distribution-free rule can achieve accuracy strictly above $1/2$
whenever $P\neq Q$. This phenomenon does not extend uniformly to an
unknown mixing weight different to 1/2.

\begin{theo}
\label{theo:unknown-alpha}
Let $\Omega$ be a measurable space containing at least two points, with
every singleton measurable. There is no fixed measurable rule
$\psi:\Omega^3\to[0,1]$,
independent of $P$, $Q$, and $\alpha$, such that
$\operatorname{Acc}_{\alpha}(\psi;P,Q)>\frac12$
for every pair $P\neq Q$ and every
$\alpha\in(0,1)\setminus\{1/2\}$. In fact, $P$ and $Q$ may be chosen to be supported on the same
two-point set.
\end{theo}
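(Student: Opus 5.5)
Fix two distinct points $a,b\in\Omega$ and an arbitrary measurable rule $\psi$. We argue by contradiction and exploit the freedom in $\alpha$. Suppose $\operatorname{Acc}_\alpha(\psi;P,Q)>1/2$ for every $\alpha\neq1/2$ and every $P\neq Q$. Since \eqref{eq:alpha-accuracy} is affine in $\alpha$, letting $\alpha\downarrow0$ and $\alpha\uparrow1$ yields
\[
B_P(\psi;P,Q)\ge\tfrac12\quad\text{and}\quad B_Q(\psi;P,Q)\ge\tfrac12
\]
for every distinct pair $P,Q$. The plan is to exhibit a pair $P\neq Q$ supported on $\{a,b\}$ for which one of these class-conditional accuracies is strictly below $1/2$. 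That gives a contradiction at $\alpha$ near $0$ or near $1$.

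The construction uses a degenerate comparison pair. Take $P=\delta_a$ and $Q=\delta_b$ first. Then $B_P=\psi(a,a,b)$ and $B_Q=1-\psi(a,b,b)$, so this pair only forces $\psi(a,a,b)\ge1/2$ and $\psi(a,b,b)\le1/2$, which is no contradiction yet. Instead, use pairs where one law is a point mass and the other is a mixture.

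Let $Q=\delta_a$ and $P=(1-t)\delta_a+t\delta_b$ with $t\in(0,1]$. Then
\[
B_Q(t)=\mathbb E\bigl[1-\psi(X,a,a)\bigr],\qquad X\sim P .
\]
Write $p_{xyz}=\psi(x,y,z)$. This gives $B_Q(t)=1-(1-t)p_{aaa}-t\,p_{baa}$. For $B_P(t)$, the variables $X,Y$ are i.i.d.\ from $P$ and $Z=a$, so $B_P(t)$ is a quadratic polynomial in $t$ whose value at $t=0$ is $p_{aaa}$. Both constraints must hold for all small $t>0$. Letting $t\to0$ gives $1-p_{aaa}\ge1/2$ and $p_{aaa}\ge1/2$, hence $p_{aaa}=1/2$.

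Now expand to first order in $t$:
\[
B_Q(t)-\tfrac12=t\bigl(p_{aaa}-p_{baa}\bigr)=t\bigl(\tfrac12-p_{baa}\bigr),
\]
\[
B_P(t)-\tfrac12=t\bigl(p_{baa}+p_{aba}-1\bigr)+O(t^2).
\]
Nonnegativity for small $t$ forces $p_{baa}\le1/2$ and $p_{baa}+p_{aba}\ge1$. Running the symmetric construction with $P=\delta_a$ and $Q=(1-t)\delta_a+t\delta_b$ gives $p_{aab}\ge1/2$ and $p_{aab}+p_{aba}\le1$, again to first order.

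These linear constraints alone may be consistent; for example, $\psi$ could predict the class of the unique matching reference. When first order is not decisive, I would pass to second order. If that still fails, I would use the full quadratic in $t$ on $[0,1]$, including $t=1$, which is $P=\delta_b$, $Q=\delta_a$. Combined with the analogous constraints with $a$ and $b$ swapped, this should eventually pin all eight values $p_{xyz}$ to $1/2$.

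Finally, with $\psi\equiv1/2$ on $\{a,b\}^3$, every accuracy equals exactly $1/2$. This contradicts the strict inequality for any $P\neq Q$ on $\{a,b\}$ and any $\alpha\neq1/2$.

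The hard step is closing this finite system of inequalities. I expect the cleanest route is to let the adversary choose, for each candidate $\psi$, which class to make rare, i.e.\ $\alpha\to0$ or $\alpha\to1$, and to expand the polynomial constraints to sufficient order. If a nonconstant $\psi$ survives all those limits, I would instead pick a specific $t$ and a specific $\alpha$ near $0$ or $1$ at which that $\psi$ visibly falls below $1/2$.
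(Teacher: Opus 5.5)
There is a genuine gap: the proposal never closes the argument. After deriving your first-order constraints you say they ``may be consistent'', offer the unique-matching-reference rule as a witness, and then defer to unspecified second-order or global expansions that ``should eventually'' force $\psi\equiv\tfrac12$ on $\{a,b\}^3$. As written, no contradiction is established.

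The witness is also wrong. That rule has $p_{baa}=0$, $p_{aba}=\tfrac12$ and $p_{aab}=1$, so it violates both $p_{baa}+p_{aba}\ge1$ and $p_{aab}+p_{aba}\le1$. Directly, in your first family it gives $B_P(t)=\tfrac12-\tfrac t2+t^2<\tfrac12$ for small $t>0$.

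The missing step is already in your hands, and no higher-order terms are needed. Combining your four inequalities gives $p_{aba}\ge 1-p_{baa}\ge\tfrac12$ and $p_{aba}\le 1-p_{aab}\le\tfrac12$. Hence $p_{aba}=\tfrac12$, which in turn forces $p_{baa}=p_{aab}=\tfrac12$. Together with $p_{aaa}=\tfrac12$, and the same two families with $a$ and $b$ interchanged, all eight values equal $\tfrac12$. Every accuracy for laws on $\{a,b\}$ is then exactly $\tfrac12$, which gives the contradiction.

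If you want both laws to have support exactly $\{a,b\}$, as in the ``in fact'' clause, get your point-mass inequalities by continuity. $B_P$ and $B_Q$ are polynomials in the two weights, so $\ge\tfrac12$ on the open square off the diagonal extends to the closed square.

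Once closed this way, your route is valid but differs from the paper's. The paper works only with interior pairs $P_p,Q_q$, $p,q\in(0,1)$. It uses $b_P(p,p)+b_Q(p,p)=1$ to get $b_P=b_Q=\tfrac12$ on the diagonal. It then notes that $q\mapsto b_P(p,q)$ is affine, since $Q$ enters only through $Z$, and attains its minimum $\tfrac12$ at the interior point $q=p$, so it is identically $\tfrac12$; likewise for $p\mapsto b_Q(p,q)$. That yields $\operatorname{Acc}_\alpha\equiv\tfrac12$ directly, with no coordinate bookkeeping. Your version costs that bookkeeping but gives the stronger pointwise conclusion $\psi\equiv\tfrac12$ on $\{a,b\}^3$.
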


\noindent The proof, by contradiction, is deferred to Section~\ref{Alpha} in the Appendix.\\

\noindent We next consider what happens when the target class is allowed to depend on the labeled reference
observations. After $X\sim P$ and $Z\sim Q$ are generated independently,
an adversary observes $(X,Z)$ and chooses whether $Y$ will be generated
from $P$ or from $Q$. The target is then drawn with fresh randomness: conditional on the references and selected label, its law is $P$ or $Q$, respectively. The adversary knows $P$, $Q$, and the
classifier's decision rule, and chooses the target distribution to
minimize the classifier's conditional probability of success.
For fixed reference observations $(x,z)$, define
\begin{align}
a_P(x,z)
&:=
\int \psi(x,y,z)\,dP(y),
\label{eq:adv-aP}
\\
a_Q(x,z)
&:=
\int \bigl(1-\psi(x,y,z)\bigr)\,dQ(y).
\label{eq:adv-aQ}
\end{align}
These are the classifier's conditional success probabilities for the two choices. They are measurable functions of $(x,z)$ by bounded measurable integration, so choosing $P$ when $a_P(x,z)\le a_Q(x,z)$ and $Q$ otherwise is a measurable optimal strategy. The classifier's private random seed is not revealed to the adversary. The resulting accuracy is

\begin{equation}
\operatorname{Acc}_{\mathrm{adv}}(\psi;P,Q)
=
\mathbb E_{X\sim P,Z\sim Q}
\left[
\min\{a_P(X,Z),a_Q(X,Z)\}
\right].
\label{eq:adv-master}
\end{equation}
For comparison, under any fixed mixing weight $\alpha$,
\[
\operatorname{Acc}_{\alpha}(\psi;P,Q)
=
\mathbb E_{X\sim P,Z\sim Q}
\left[
\alpha a_P(X,Z)+(1-\alpha)a_Q(X,Z)
\right].
\]
Since
\[
\min\{u,v\}\leq \alpha u+(1-\alpha)v
\]
for every $u,v\in[0,1]$ and $\alpha\in[0,1]$, it follows that
\begin{equation}
\operatorname{Acc}_{\mathrm{adv}}(\psi;P,Q)
\leq
\operatorname{Acc}_{\alpha}(\psi;P,Q)
\qquad
\text{for every }\alpha\in[0,1].
\label{eq:adv-upper-alpha}
\end{equation}
Thus adaptive class selection is at least as difficult for the
classifier as any independent fixed-mixture protocol. In particular,
Theorem~\ref{theo:unknown-alpha} immediately implies that, for every
fixed rule $\psi$, there exist distinct finitely supported $P,Q$ such
that
$\operatorname{Acc}_{\mathrm{adv}}(\psi;P,Q)\leq 1/2$.
Indeed, choose the $\alpha,P,Q$ supplied by
Theorem~\ref{theo:unknown-alpha} and apply
\eqref{eq:adv-upper-alpha}.

We now show that against the above adversary, a fair coin is the only universally safe rule. 

\begin{theo}
\label{theo:adv-no-universal}
Let $(\Omega,\mathcal B)$ contain at least two points and have measurable singletons. Let $\psi:\Omega^3\to[0,1]$ be a fixed measurable rule. If $\psi\not\equiv1/2$, there are distinct finitely supported probability measures $P,Q$ such that
\[
 \operatorname{Acc}_{\mathrm{adv}}(\psi;P,Q)<\tfrac12.
\]
Equivalently, the only fixed rule with adversarial accuracy at least $1/2$ for every distinct finitely supported pair is $\psi\equiv1/2$.
\end{theo}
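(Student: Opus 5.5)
The strategy is to first degenerate to $P=Q$, where the adversary's advantage becomes transparent, and then perturb to a distinct pair by continuity. Suppose $\psi(x_0,y_0,z_0)\neq 1/2$ for some point $(x_0,y_0,z_0)\in\Omega^3$.

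\emph{Step 1: the case $P=Q$.} Fix a finitely supported law $R$ and set $P=Q=R$. By \eqref{eq:adv-aP}--\eqref{eq:adv-aQ}, for every reference pair $(x,z)$ we have
\[
 a_P(x,z)+a_Q(x,z)=\int\psi\,dR+\int(1-\psi)\,dR=1 .
\]
Writing $g_R(x,z):=\int\psi(x,y,z)\,dR(y)-\tfrac12$, this gives
\[
 \min\{a_P(x,z),a_Q(x,z)\}=\tfrac12-|g_R(x,z)|.
\]
Hence \eqref{eq:adv-master} yields
\[
 \operatorname{Acc}_{\mathrm{adv}}(\psi;R,R)=\tfrac12-\mathbb E_{X,Z\sim R}|g_R(X,Z)|.
\]
It therefore suffices to find $R$ with $\mathbb E|g_R|>0$.

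\emph{Step 2: choosing $R$.} Let $S=\{x_0,y_0,z_0\}$. If $S$ is a singleton, enlarge it by one further point of $\Omega$, so that $|S|\ge2$; this is possible because $\Omega$ has at least two points, and singletons are measurable. Take $R=\sum_{s\in S}r_s\delta_s$ with all $r_s>0$. Then
\[
 g_R(x_0,z_0)=\sum_{s\in S}r_s\bigl(\psi(x_0,s,z_0)-\tfrac12\bigr),
\]
which is a linear function of the weight vector $r$. Its coefficient at $s=y_0$ is nonzero, so the linear form is not identically zero. Its zero set therefore meets the open simplex of strictly positive weights in a hyperplane section, and we can choose strictly positive weights $r$ off that hyperplane. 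Then
\[
 \mathbb E|g_R|\ge r_{x_0}r_{z_0}|g_R(x_0,z_0)|>0,
\]
so $\operatorname{Acc}_{\mathrm{adv}}(\psi;R,R)<1/2$.

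\emph{Step 3: perturbing to a distinct pair.} Pick two distinct points $a,b\in S$ and, for small $\eta>0$, set $P_\eta=R+\eta(\delta_a-\delta_b)$ and $Q_\eta=R$. For $\eta<r_b$ these are distinct probability measures supported on $S$. On the finite set $S$, each of $a_P(x,z)$ and $a_Q(x,z)$ is a polynomial in $\eta$. Consequently
\[
 \operatorname{Acc}_{\mathrm{adv}}(\psi;P_\eta,Q_\eta)=\sum_{x,z\in S}P_\eta\{x\}\,R\{z\}\,\min\{a_P(x,z),a_Q(x,z)\}
\]
is a finite sum of products of polynomials with minima of polynomials, hence continuous in $\eta$. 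By Step 1 its value at $\eta=0$ is strictly below $1/2$, so it remains below $1/2$ for all sufficiently small $\eta>0$.

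\emph{Converse and main obstacle.} The converse direction is immediate: $\psi\equiv1/2$ gives $a_P=a_Q=1/2$, and hence adversarial accuracy exactly $1/2$. The main point needing care is Step 2, namely ensuring that averaging over $y\sim R$ does not cancel the deviation of $\psi$ from $1/2$. The linear-form and genericity argument handles this. Measurability plays no real role, because everything reduces to finite sums over $S$.
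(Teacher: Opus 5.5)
Your proof is correct and uses the same ingredients as the paper's: the diagonal identity $\operatorname{Acc}_{\mathrm{adv}}(\psi;R,R)=\tfrac12-\mathbb E\bigl|\int\psi(X,y,Z)\,dR(y)-\tfrac12\bigr|$, continuity in the weights to pass between $P=Q=R$ and nearby distinct pairs, and linearity of $r\mapsto\sum_j r_j\psi(x,s_j,z)$ to rule out cancellation. The paper argues by contrapositive: continuity gives $\operatorname{Acc}_{\mathrm{adv}}(\psi;R,R)\ge\tfrac12$, which forces $\sum_j r_j\psi(s_i,s_j,s_k)=\tfrac12$ for every positive $r$ and hence $\psi\equiv\tfrac12$. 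You run the same steps directly and obtain an explicit witness pair supported on $\{x_0,y_0,z_0\}$, enlarged by one point if needed.
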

\noindent The proof is in Section~\ref{adversary}; the converse is immediate because an independent fair coin has success probability $1/2$ for either selected class.

The quantifiers are essential. Theorem~\ref{theo:unknown-alpha} concerns one rule working for all distinct distributions and all unknown unbalanced priors. It does not preclude success on restricted families: Corollary~\ref{cor:common-covariance} gives equal class-conditional accuracies above $1/2$, so its 1-NN rule succeeds for every fixed independent mixture weight $\alpha\in[0,1]$ within that common-covariance Gaussian family.

\section{Conclusion}

One independently drawn labeled reference per class permits a fixed classifier with expected accuracy strictly above $1/2$ for every pair of distinct laws under an independent balanced target. This holds on every countably generated measurable space, both through random questions and through a deterministic encoding of the real-line order rule. The kernel advantage is exactly one quarter of the squared MMD. The order rule identity includes explicit atom corrections and reduces to a population Cram\'er--von Mises formula for atomless laws.

Euclidean 1-NN succeeds for all distinct nondegenerate univariate Gaussian laws and for multivariate Gaussians with distinct means and a common positive-definite covariance, but fails for some smooth densities. Clipping the kernel rule can also destroy its universal guarantee. All positive guarantees average over the references and target, with no uniform margin above chance. They do not extend simultaneously to all unknown class priors or to adaptive target-class selection: in the latter setting, only the fair coin is universally no worse than chance.

\bigskip

\noindent\textbf{Statement on AI use}.
The original 1-NN results underlying this work were obtained prior to any use of AI tools. The principal results developed subsequently---in particular the results proved directly in the main body of the paper---were also obtained and proved without AI assistance. For some of the additional results, whose correctness had first been established by our own calculations, AI tools were used to refine formal proofs. An AI tool suggested the functional data example; indeed, the comparison of Gaussian random walks and mean-reverting Gaussian AR(1) processes is a classical setting in the time-series literature. However, our rule appears original in this context. All mathematical statements, proofs, and examples in the final manuscript were independently verified by the authors, who take full responsibility for their correctness.

\newpage

\appendix
\renewcommand\thesection{\Alph{section}}
\renewcommand\thesubsection{\thesection.\Roman{subsection}}

\section{Appendix}\label{proofs}

\localtableofcontents

\subsection{The Gaussian case with equal variance (from Section \ref{equal2})}\label{equal}

\noindent\textbf{Proof}. The decision rule evaluates correctly if either (1) $Y$ is drawn from $\phi_{\mu_P,\sigma^2}$ and is closer to $X$, or (2) $Y$ is drawn from $\phi_{\mu_P+\epsilon,\sigma^2}$ and is closer to $Z$. By the Law of Total Probability, assuming a uniform prior over the two distributions, the total probability of a correct classification is:
\begin{equation}\label{longone}
\mathbb{P}(\text{Correct}) = \frac{1}{2}\,\mathbb{P}\!\big(\,|X - Y| < |Z - Y|\;\big|\; L=P\big)
+
\frac{1}{2}\,\mathbb{P}\!\big(\,|X - Y| > |Z - Y|\;\big|\; L=Q\big).
\end{equation}

\noindent\textit{Step 1: Symmetry and reduction.}
Let the two conditional probabilities in Equation~\eqref{longone} be denoted as $P^*$ and $P^{**}$ respectively:
\begin{align*}
P^* &:= \mathbb{P}\!\big(\,|X - Y| < |Z - Y|\;\big|\; L=P\big) \\
P^{**} &:= \mathbb{P}\!\big(\,|X - Y| > |Z - Y|\;\big|\; L=Q\big).
\end{align*}

The reflection $R(u)=2\mu_P+\epsilon-u$ exchanges the two Gaussian laws and preserves distances. Conditional on $L=Q$, the transformed triple $(R(Z),R(Y),R(X))$ has the same law as $(X,Y,Z)$ conditional on $L=P$, and the correct-classification event is preserved. Hence $P^*=P^{**}$, and it suffices to prove $P^*>1/2$.

Furthermore, Euclidean distance depends only on the relative distances between points. Uniformly shifting the coordinate system by $-\mu_P$ does not alter the distance inequalities. Thus, without loss of generality, we set $\mu_P = 0$. This defines the three mutually independent variables for the remainder of the proof:
\[
X \sim \mathcal{N}(0, \sigma^2), \quad Y \sim \mathcal{N}(0, \sigma^2), \quad Z \sim \mathcal{N}(\epsilon, \sigma^2).
\]
Our objective simplifies to proving $P^* = \mathbb{P}(|X - Y| < |Z - Y|) > 1/2$.

\medskip
\noindent\textit{Step 2: Algebraic reformulation.}
Since the absolute value metric maps to $\mathbb{R}_{\geq 0}$, and the function $f(x) = x^2$ is strictly monotonically increasing on $\mathbb{R}_{\geq 0}$, we square both sides while preserving the inequality:
\[
|X - Y| < |Z - Y| \iff (X - Y)^2 < (Z - Y)^2.
\]
Rearranging and factoring the difference of squares yields:
\[
(X - Y)^2 - (Z - Y)^2 < 0 \iff \bigl((X - Y) + (Z - Y)\bigr)\bigl((X - Y) - (Z - Y)\bigr) < 0.
\]
Simplifying the terms inside the parentheses gives
$(X + Z - 2Y)(X - Z) < 0$, and
multiplying by $-1$ strictly reverses the inequality:
$(Z - X)(Z + X - 2Y) > 0$.
We define the linear combinations $W := Z - X$ and $U := Z + X - 2Y$. Because $X$, $Y$, and $Z$ are independent continuous random variables, the boundary event where distances are perfectly equal ($W = 0$ or $U = 0$) has a probability of exactly zero. The proposition is thus strictly equivalent to computing $P^* = \mathbb{P}(WU > 0)$.

\medskip
\noindent\textit{Step 3: Joint distribution and independence.}
Because $X, Y, Z$ are mutually independent, normally distributed random variables, the vector $(X, Y, Z)^\top$ forms a multivariate normal distribution. The variables $W$ and $U$ are derived via an affine transformation:
\[
\begin{pmatrix} W \\ U \end{pmatrix} = \begin{pmatrix} -1 & 0 & 1 \\ 1 & -2 & 1 \end{pmatrix} \begin{pmatrix} X \\ Y \\ Z \end{pmatrix}.
\]
Because affine transformations of Gaussian vectors produce Gaussian vectors, $(W, U)^\top$ is a bivariate normal distribution. By the bilinearity of the covariance operator:
\begin{align*}
\mathrm{Cov}(W,U)
 &=\mathrm{Cov}(Z-X,Z+X-2Y)\\
 &=\mathrm{Cov}(Z,Z)+\mathrm{Cov}(Z,X)-2\,\mathrm{Cov}(Z,Y)\\
 &\quad-\mathrm{Cov}(X,Z)-\mathrm{Cov}(X,X)+2\,\mathrm{Cov}(X,Y).
\end{align*}
Because $X, Y, Z$ are mutually independent, they are uncorrelated, meaning $\mathrm{Cov}(A,B) = 0$ for any distinct pair. The equation collapses to the respective variances:
\[
\mathrm{Cov}(W, U) = \mathrm{Var}(Z) - \mathrm{Var}(X) = \sigma^2 - \sigma^2 = 0.
\]
For a bivariate normal distribution, zero covariance is a necessary and sufficient condition for independence. Thus, $W$ and $U$ are independent. 
Using the linearity of expectation and properties of variance for independent variables, the marginal distributions are:
\begin{align*}
\mathbb{E}[W] &= \epsilon - 0 = \epsilon, \quad &\mathrm{Var}(W) &= (1)^2\sigma^2 + (-1)^2\sigma^2 = 2\sigma^2, \\
\mathbb{E}[U] &= \epsilon + 0 - 2(0) = \epsilon, \quad &\mathrm{Var}(U) &= (1)^2\sigma^2 + (1)^2\sigma^2 + (-2)^2\sigma^2 = 6\sigma^2.
\end{align*}
Thus, $W \sim \mathcal{N}(\epsilon, 2\sigma^2)$ and $U \sim \mathcal{N}(\epsilon, 6\sigma^2)$.

\medskip
\noindent\textit{Step 4: Probability evaluation.}
The event $WU > 0$ occurs if and only if $W$ and $U$ share the same strict sign. Due to the independence established in Step 3, the joint probabilities factor into the product of their marginals:
\[
P^* = \mathbb{P}(WU > 0) = \mathbb{P}(W > 0)\mathbb{P}(U > 0) + \mathbb{P}(W < 0)\mathbb{P}(U < 0).
\]
Let $p := \mathbb{P}(W > 0)$. Standardizing the normal variable yields $\mathbb{P}\left(\frac{W - \epsilon}{\sqrt{2}\sigma} > \frac{-\epsilon}{\sqrt{2}\sigma}\right) = \Phi_{0,1}\left(\frac{\epsilon}{\sqrt{2}\sigma}\right)$.
Similarly, let $q := \mathbb{P}(U > 0) = \Phi_{0,1}\left(\frac{\epsilon}{\sqrt{6}\sigma}\right)$.
Substituting these definitions:
\[
P^* = pq + (1-p)(1-q).
\]
To evaluate if $P^* > 1/2$, we subtract $1/2$ and factor:
\begin{align*}
P^* - \frac{1}{2} &= pq + (1 - p - q + pq) - \frac{1}{2} \\
&= 2pq - p - q + \frac{1}{2} \\
&= \frac{1}{2}(2p - 1)(2q - 1).
\end{align*}
The standard normal cumulative distribution function $\Phi_{0,1}(x)$ is strictly monotonically increasing across $\mathbb{R}$, with $\Phi_{0,1}(0) = 1/2$. 
If $\epsilon > 0$, both arguments to $\Phi_{0,1}$ are positive, rendering $p > 1/2$ and $q > 1/2$. Consequently, $(2p - 1) > 0$ and $(2q - 1) > 0$.
If $\epsilon < 0$, both arguments are negative, rendering $p < 1/2$ and $q < 1/2$. Consequently, $(2p - 1) < 0$ and $(2q - 1) < 0$.
By hypothesis, $\epsilon \neq 0$. In all valid cases, $(2p - 1)$ and $(2q - 1)$ possess identical, non-zero signs. Their product is strictly positive.
Therefore, $P^* - 1/2 > 0 \implies P^* > 1/2$. $\square$

\subsection{Common-covariance Gaussians in \texorpdfstring{$\mathbb R^d$}{R\textasciicircum d} (from Section~\ref{equal2})}\label{general}
\begin{proof}[Proof of Corollary~\ref{cor:common-covariance}]
Translate by $-\boldsymbol\mu$ and condition first on $L=P$, so $X,Y\sim\mathcal N(\mathbf0,\Sigma)$ and $Z\sim\mathcal N(\boldsymbol\epsilon,\Sigma)$ independently. Write
\[
 W=Z-X,\qquad U=Z+X-2Y.
\]
The correct-classification event is $W^{\mathsf T}U>0$. The vectors are jointly Gaussian, with
\[
 W\sim\mathcal N(\boldsymbol\epsilon,2\Sigma),\qquad
 U\sim\mathcal N(\boldsymbol\epsilon,6\Sigma),\qquad
 \operatorname{Cov}(W,U)=\Sigma-\Sigma=0.
\]
Thus $W$ and $U$ are independent. Define
\[
 V_+=\frac{W+U/\sqrt3}{\sqrt2},\qquad
 V_-=\frac{W-U/\sqrt3}{\sqrt2},\qquad
 c_\pm=\frac{1\pm1/\sqrt3}{\sqrt2}.
\]
The cross-covariance of $V_+,V_-$ is zero, so these vectors are independent. Each has covariance $2\Sigma$, and its mean is $c_\pm\boldsymbol\epsilon$. Moreover,
\[
 W^{\mathsf T}U=\frac{\sqrt3}{2}
        \bigl(\|V_+\|^2-\|V_-\|^2\bigr).
\]
We verify a scalar comparison explicitly. If $G_m\sim\mathcal N(m,s^2)$ with $s>0$, then for $r>0$ and $m\ge0$,
\[
 \mathbb P(|G_m|\le r)
 =\Phi_{0,1}((r-m)/s)+\Phi_{0,1}((r+m)/s)-1.
\]
For $m>0$ its derivative with respect to $m$ is
\[
 \frac1s\bigl[\phi_{0,1}((r+m)/s)-\phi_{0,1}((r-m)/s)\bigr]<0,
\]
because $|r+m|>|r-m|$. The law of $G_m^2$ depends only on $|m|$; hence increasing $|m|$ strictly decreases its cdf at every positive argument.

Choose an orthogonal matrix $O$ with $O^{\mathsf T}\Sigma O=\operatorname{diag}(\lambda_1,\ldots,\lambda_d)$, where each $\lambda_i>0$, and put $\delta=O^{\mathsf T}\boldsymbol\epsilon$. Orthogonality preserves Euclidean norms. The coordinates of each $O^{\mathsf T}V_\pm$ are independent, have variances $2\lambda_i$, and have means $c_\pm\delta_i$. Since $c_+>c_->0$, every squared plus-coordinate stochastically dominates the corresponding squared minus-coordinate, strictly at each positive argument whenever $\delta_i\neq0$. At least one such coordinate exists.

Replacing the summands one at a time shows that $A=\|V_+\|^2$ stochastically dominates $B=\|V_-\|^2$. This dominance is strict at every positive argument: at a coordinate with $\delta_i\neq0$, condition on the sum $R$ of the other independent nonnegative coordinates. For every $t>0$, the event $R<t$ has positive probability (or $R=0$ if $d=1$), and on that event the strict scalar cdf inequality applies at $t-R>0$. Subsequent replacements preserve the inequality. Thus
\[
 F_A(t)<F_B(t)\qquad(t>0).
\]
Both $A$ and $B$ are atomless, positive almost surely, and independent. Consequently,
\[
 \mathbb P(A>B)=\int(1-F_A(t))\,dF_B(t)
   >\int(1-F_B(t))\,dF_B(t)=\tfrac12.
\]
The last equality follows by comparing two independent, identically distributed, atomless copies of $B$.

Finally, the reflection $v\mapsto\boldsymbol\epsilon-v$ exchanges the two class distributions and preserves distances. Together with exchanging the two reference roles, it shows that the success probability conditional on $L=Q$ equals the one conditional on $L=P$. Both exceed $1/2$, proving the corollary. Ties have probability zero because $A,B$ are independent and atomless.
\end{proof}

\subsection{The Gaussian case with unequal variances (from Section~\ref{different2})}\label{unequal}
\begin{proof}[Proof of Theorem~\ref{theo2}]
Translate so that $\mu_P=0$, and write $a=\sigma_X^2>0$, $b=\sigma_Z^2>0$. Let $p_{a,b}(\epsilon)$ be the 1-NN success probability conditional on $L=P$. Under this conditioning,
\[
 X,Y\sim\mathcal N(0,a),\qquad Z\sim\mathcal N(\epsilon,b)
\]
independently. With $W=Z-X$ and $U=Z+X-2Y$, success means $WU>0$. Set
\[
 v=a+b,\qquad w=5a+b,\qquad c=b-a,\qquad
 \rho=\frac{c}{\sqrt{vw}}.
\]
Then $(W,U)$ is jointly Gaussian with means $(\epsilon,\epsilon)$, variances $(v,w)$, and covariance $c$. Since
\[
 vw-c^2=4a(a+2b)>0,
\]
we have $|\rho|<1$, and the boundary $WU=0$ has probability zero.\\

\noindent\emph{Step 1: Each conditional accuracy is minimized at zero mean separation.}
Let $\Phi_2(s,t;\rho)$ be the cdf of a centered standard bivariate Gaussian with correlation $\rho$, and write $s=\epsilon/\sqrt v$, $t=\epsilon/\sqrt w$. Central symmetry and inclusion--exclusion give
\begin{equation}\label{eq:gaussian-sign-cdf}
 p_{a,b}(\epsilon)
 =1-\Phi_{0,1}(s)-\Phi_{0,1}(t)+2\Phi_2(s,t;\rho).
\end{equation}
The conditional-normal representation
\[
 \Phi_2(s,t;\rho)=\int_{-\infty}^{s}\phi_{0,1}(x)
       \Phi_{0,1}\!\left(\frac{t-\rho x}{\sqrt{1-\rho^2}}\right)\,dx
\]
and its symmetric counterpart give the two partial derivatives of $\Phi_2$. Differentiating \eqref{eq:gaussian-sign-cdf} therefore yields
\begin{equation}\label{eq:gaussian-sign-derivative}
\begin{split}
 p'_{a,b}(\epsilon)
 ={}&\frac{\phi_{0,1}(s)}{\sqrt v}
       \left[2\Phi_{0,1}\!\left(\frac{t-\rho s}{\sqrt{1-\rho^2}}\right)-1\right]\\
 &+\frac{\phi_{0,1}(t)}{\sqrt w}
       \left[2\Phi_{0,1}\!\left(\frac{s-\rho t}{\sqrt{1-\rho^2}}\right)-1\right].
\end{split}
\end{equation}
For $\epsilon>0$, both cdf arguments are strictly positive, because
\[
 \frac{t-\rho s}{\sqrt{1-\rho^2}}
 =\frac{\epsilon(v-c)}{\sqrt v\sqrt{vw-c^2}}>0,
 \qquad
 \frac{s-\rho t}{\sqrt{1-\rho^2}}
 =\frac{\epsilon(w-c)}{\sqrt w\sqrt{vw-c^2}}>0,
\]
with $v-c=2a$ and $w-c=6a$. Hence $p'_{a,b}(\epsilon)>0$ for $\epsilon>0$. Negating all observations proves $p_{a,b}(-\epsilon)=p_{a,b}(\epsilon)$. Thus
\[
 p_{a,b}(\epsilon)\ge p_{a,b}(0),
\]
with strict inequality when $\epsilon\neq0$. Exchanging the classes and translating gives success probability $p_{b,a}(-\epsilon)=p_{b,a}(\epsilon)$ conditional on $L=Q$.

\noindent\emph{Step 2: Average the equal-mean probabilities.}
At $\epsilon=0$, represent the standardized pair as
\[
 (W/\sqrt v,U/\sqrt w)=(G_1,\rho G_1+\sqrt{1-\rho^2}G_2),
\]
where $G_1,G_2$ are independent standard normals. Their joint density depends only on the radius, so their angle is uniform. The two projection directions have angle $\theta=\arccos\rho$. The signs disagree on two sectors of total angle $2\theta$, giving
\[
 p_{a,b}(0)=1-\frac{\arccos\rho}{\pi}
           =\frac12+\frac{\arcsin\rho}{\pi}.
\]
The balanced accuracy at zero separation is therefore
\begin{equation}\label{eq:unequal-zero-mean}
 \frac12+\frac1{2\pi}\left[
 \arcsin\frac{b-a}{\sqrt{(a+b)(5a+b)}}+
 \arcsin\frac{a-b}{\sqrt{(a+b)(a+5b)}}\right].
\end{equation}
If $b>a$, the first arcsine argument is positive and strictly larger than the absolute value of the second, since $5a+b<a+5b$. Oddness and strict monotonicity of $\arcsin$ make the bracket strictly positive. If $a>b$, exchange $a,b$. Thus \eqref{eq:unequal-zero-mean} exceeds $1/2$ whenever $a\neq b$. Step~1 shows that the balanced accuracy for any $\epsilon$ is at least this value. This proves the theorem, including $\epsilon=0$.
\end{proof}

\subsection{1-NN counter-example in \texorpdfstring{$\mathbb{R}^3$}{R\textasciicircum 3} (from Section~\ref{counterex})}\label{counter}

Suppose:
\begin{enumerate}
    \item $P$ is a continuous uniform distribution located entirely on the 2d plane, forming a unit circle centered at the origin:
    \[ P = \mathrm{Uniform}(\{(a, b, 0) \in \mathbb{R}^3 \mid a^2 + b^2 = 1\}). \]
    \item $Q$ is a discrete distribution consisting of two point masses located on the orthogonal third coordinate axis at a distance of $0.6$ from the origin, each with probability $0.5$:
    \[ Q=\tfrac12\delta_{(0,0,0.6)}+\tfrac12\delta_{(0,0,-0.6)}. \]
\end{enumerate}
\begin{figure}[htb!]
\begin{center}
\includegraphics[width=0.8\textwidth]{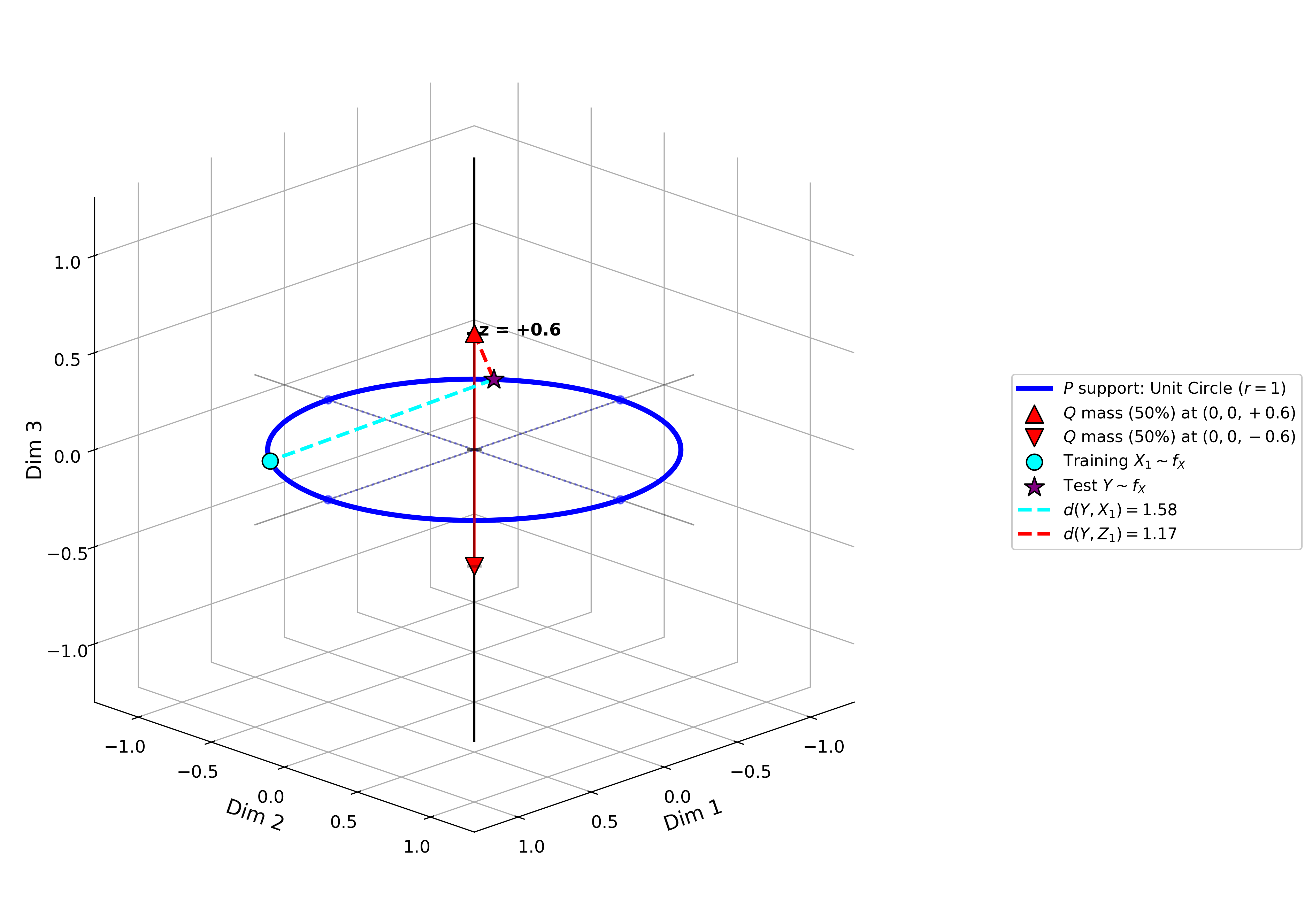}
\caption{Distributions where the 1-NN rule does on average worse than chance  in dimension 3. Points from $P$ are uniformly drawn from the unit circle in the 2d plane, while points from $Q$ are drawn in the third dimension with probability 0.5 from either $(0,0,0.6)$ or $(0,0,-0.6)$. An example trial in which the rule fails is shown: $y$ was drawn from $P$ while $z$ was drawn from the mass at $(0,0,0.6)$.}
\label{counterfigure}
\end{center}
\end{figure}
Figure~\ref{counterfigure} shows what happens.
Essentially, whenever $Y$ is drawn from $P$, less than half the time it will be closer to the point $X$ (on the same circle as $Y$ is) than to the point $Z$ above or below the circle (which is always the same fixed distance to all points on the circle). If instead $Y$ is drawn from $Q$, half the time it will be the same exact point as $Z$, and the other half of the time it will not---and thus closer to \emph{any} point on the circle. The resulting accuracy is exactly $\frac14+\frac{\arccos(0.32)}{2\pi}\approx0.44815854<1/2$, as shown below.\\

\noindent\textbf{Proof}. The expected accuracy of the 1-NN rule is the probability of correct classification, given by the law of total probability:
\[ \mathrm{Acc} = \frac{1}{2} \mathbb{P}(\text{Correct} \mid L=P) + \frac{1}{2} \mathbb{P}(\text{Correct} \mid L=Q) . \]

\noindent \textbf{Case 1: $Y \sim P$}:
The 1-NN rule classifies $Y$ correctly if the squared Euclidean distance to the same-class sample is strictly less than the distance to the opposite-class sample: $d(Y, X_1)^2 < d(Y, Z_1)^2$.
Because $Y$ lies on the unit circle in the $xy$-plane (where the third coordinate is $0$) and $Z_1$ lies on the orthogonal vertical axis at $(0, 0, \pm 0.6)$, the squared distance is deterministic due to the orthogonality of the subspaces:
\[ d(Y, Z_1)^2 = (x - 0)^2 + (y - 0)^2 + (0 - (\pm 0.6))^2 = 1 + 0.36 = 1.36 . \]
The points $Y$ and $X_1$ are independently drawn from the unit circle. The angle $\theta$ between them is uniformly distributed over $[0, \pi]$. By the law of cosines, their squared distance is:
\[ d(Y, X_1)^2 = 1^2 + 1^2 - 2(1)(1)\cos(\theta) = 2 - 2\cos(\theta) . \]
The classification is correct if:
\[ 2 - 2\cos(\theta) < 1.36 \implies 2\cos(\theta) > 0.64 \implies \cos(\theta) > 0.32  .\]
Since $\theta \sim \mathrm{Uniform}([0, \pi])$, the probability of this condition being met is:
\[ \mathbb{P}(\text{Correct} \mid L=P) = \frac{\arccos(0.32)}{\pi} \approx 0.396 . \]

\noindent \textbf{Case 2: $Y \sim Q$}:
Here, the 1-NN rule is correct if $d(Y, Z_1)^2 < d(Y, X_1)^2$. By the same orthogonality argument as in Case 1, the distance to the opposite-class sample is deterministic:
\[ d(Y, X_1)^2 = 1.36 .\]
For the distance to the same-class sample, $Y$ and $Z_1$ are drawn independently from $Q$.
\begin{itemize}
    \item With probability $0.5$, they land on the same pole, yielding $d(Y, Z_1)^2 = 0$.
    \item With probability $0.5$, they land on opposite poles, yielding $d(Y, Z_1)^2 = (2 \times 0.6)^2 = 1.44$.
\end{itemize}
Thus, $d(Y, Z_1)^2 < 1.36$ holds only when $Y$ and $Z_1$ are on the same pole. Therefore:
\[ \mathbb{P}(\text{Correct} \mid L=Q) = 0.5 . \]

\noindent \textbf{Total Expected Accuracy}.
Substituting the conditional probabilities back into the total accuracy equation yields:
\[ \mathrm{Acc} = \frac{1}{2}\left( \frac{\arccos(0.32)}{\pi} \right) + \frac{1}{2}(0.5).\]
Since $\arccos(0.32)<\pi/2$, the exact accuracy is strictly less than $1/2$. The result can be trivially extended to $\mathbb{R}^d$ for $d \geq 4$ by concatenating $d-3$ copies of 0 onto the 3-dimensional data points. 
 \hfill\(\square\)
 
\begin{rem}
For any independent target prior $(\alpha,1-\alpha)$, the exact accuracy in this example is
\[
 \alpha\frac{\arccos(0.32)}\pi+(1-\alpha)\frac12.
\]
This is strictly below $1/2$ for $\alpha>0$ and equal to $1/2$ at $\alpha=0$.
\end{rem}

\subsection{1-NN is always better than the randomized rule (with the standard \mbox{Gaussian} kernel) for equal-variance Gaussians in \texorpdfstring{$\mathbb{R}$}{R} (from Section~\ref{better2})}\label{better}

\subsubsection{Derivation of the 1-NN advantage for equal-variance Gaussians in \texorpdfstring{$\mathbb{R}$}{R}}

As shown in the proof of Theorem~\ref{theo1} in Section~\ref{equal} of the Appendix, the 1-NN rule classifies $Y$ correctly when the product of the linear combinations $W = Z - X$ and $U = Z + X - 2Y$ is strictly positive. Under the equal variance assumption, the covariance between $W$ and $U$ is exactly zero, rendering them independent. Their respective marginal distributions are $W \sim \mathcal{N}(\epsilon, 2\sigma^2)$ and $U \sim \mathcal{N}(\epsilon, 6\sigma^2)$. 
Calculating the probability of correct classification based on these independent marginals yields:
\begin{equation}
P_{\text{1NN}} = \Phi\left(\frac{\epsilon}{\sqrt{2}\sigma}\right)\Phi\left(\frac{\epsilon}{\sqrt{6}\sigma}\right) + \left(1-\Phi\left(\frac{\epsilon}{\sqrt{2}\sigma}\right)\right)\left(1-\Phi\left(\frac{\epsilon}{\sqrt{6}\sigma}\right)\right).
\end{equation}
By subtracting the baseline chance of $0.5$ and converting the standard normal cumulative distribution functions to error functions via the identity $\Phi(z) - 0.5 = \frac{1}{2}\text{erf}(z/\sqrt{2})$, the absolute advantage over chance becomes:
\begin{equation}
\text{Advantage}_{\text{1NN}} = \frac{1}{2} \text{erf}\left(\frac{\epsilon}{2\sigma}\right) \text{erf}\left(\frac{\epsilon}{2\sqrt{3}\sigma}\right).
\end{equation}

\subsubsection{Derivation of the randomized rule's advantage in \texorpdfstring{$\mathbb{R}$}{R} for equal-variance Gaussians}

The randomized rule has probability of correct classification
\[
P_{\mathrm{Rand}}
=
\frac{1}{2}
+
\frac{1}{4}\operatorname{MMD}_k^2(P,Q).
\]
To evaluate the MMD for the Gaussian kernel
$k(u,v)=\exp\bigl(-(u-v)^2\bigr)$,
we must compute expectations of the form
\[
\mathbb{E}[k(A,B)]
=
\mathbb{E}\left[\exp\bigl(-(A-B)^2\bigr)\right],
\]
where $A$ and $B$ are independent Gaussian random variables.
Since the kernel depends on $A$ and $B$ only through their difference, define
$D:=A-B$.
If
$A\sim\mathcal{N}(\mu_A,\sigma_A^2)$
and
$B\sim\mathcal{N}(\mu_B,\sigma_B^2)$
are independent, then
\[
D\sim\mathcal{N}(\mu_D,\sigma_D^2),
\qquad
\mu_D=\mu_A-\mu_B,
\qquad
\sigma_D^2=\sigma_A^2+\sigma_B^2.
\]
Therefore,
\begin{align*}
\mathbb{E}\left[e^{-D^2}\right]
&=
\int_{-\infty}^{\infty}
e^{-d^2}
\frac{1}{\sqrt{2\pi\sigma_D^2}}
\exp\left(
-\frac{(d-\mu_D)^2}{2\sigma_D^2}
\right)\,dd.
\end{align*}
Completing the square in the exponent gives
\[
-d^2-\frac{(d-\mu_D)^2}{2\sigma_D^2}
=
-\frac{1+2\sigma_D^2}{2\sigma_D^2}
\left(
d-\frac{\mu_D}{1+2\sigma_D^2}
\right)^2
-
\frac{\mu_D^2}{1+2\sigma_D^2}.
\]
Hence
\begin{align*}
\mathbb{E}\left[e^{-D^2}\right]
&=
\frac{\exp\left(-\frac{\mu_D^2}{1+2\sigma_D^2}\right)}
{\sqrt{2\pi\sigma_D^2}}
\int_{-\infty}^{\infty}
\exp\left[
-\frac{1+2\sigma_D^2}{2\sigma_D^2}
\left(
d-\frac{\mu_D}{1+2\sigma_D^2}
\right)^2
\right]\,dd \\
&=
\frac{1}{\sqrt{1+2\sigma_D^2}}
\exp\left(
-\frac{\mu_D^2}{1+2\sigma_D^2}
\right).
\end{align*}
In the present equal-variance setting, every difference appearing in the
three MMD terms has variance
\[
\sigma_D^2=2\sigma^2.
\]
Consequently,
\[
\mathbb{E}[k(A,B)]
=
\frac{1}{\sqrt{1+4\sigma^2}}
\exp\left(
-\frac{\mu_D^2}{1+4\sigma^2}
\right),
\]
where the value of $\mu_D$ depends on whether $A$ and $B$ are drawn from the
same distribution or from different distributions.
We now evaluate the three required MMD components:

\begin{enumerate}
    \item \textbf{Internal similarity of $X$}: Let $A, B \sim \mathcal{N}(0, \sigma^2)$. Then $\mu_D = 0$ and
    \[ \mathbb{E}_{P \times P}[k] = \frac{1}{\sqrt{1 + 4\sigma^2}} \exp(0) = \frac{1}{\sqrt{1 + 4\sigma^2}}. \]
    
    \item \textbf{Internal similarity of $Z$}: Let $A, B \sim \mathcal{N}(\epsilon, \sigma^2)$. Then $\mu_D = 0$ and
    \[ \mathbb{E}_{Q \times Q}[k] = \frac{1}{\sqrt{1 + 4\sigma^2}} \exp(0) = \frac{1}{\sqrt{1 + 4\sigma^2}}. \]
    
    \item \textbf{Cross-similarity of $X$ and $Z$}: Let $A \sim \mathcal{N}(0, \sigma^2)$ and $B \sim \mathcal{N}(\epsilon, \sigma^2)$. Then $\mu_D = -\epsilon$ and
    \[ \mathbb{E}_{P \times Q}[k] = \frac{1}{\sqrt{1 + 4\sigma^2}} \exp\left( - \frac{\epsilon^2}{1 + 4\sigma^2} \right). \]
\end{enumerate}
Combining these into the  term $\text{MMD}^2 = \mathbb{E}_{P \times P}[k] - 2\mathbb{E}_{P \times Q}[k] + \mathbb{E}_{Q \times Q}[k]$ gives:
\[
\text{MMD}^2 = \frac{1}{\sqrt{1 + 4\sigma^2}} - 2 \left[ \frac{1}{\sqrt{1 + 4\sigma^2}} \exp\left( - \frac{\epsilon^2}{1 + 4\sigma^2} \right) \right] + \frac{1}{\sqrt{1 + 4\sigma^2}}.
\]
Grouping terms and multiplying by the scaling factor $\frac{1}{4}$ to extract the randomized rule's advantage over chance ($P_{\text{Rand}} - 1/2$) gives:
\begin{equation}
\text{Advantage}_{\text{Rand}} = \frac{1 - \exp\left(-\frac{\epsilon^2}{1+4\sigma^2}\right)}{2\sqrt{1+4\sigma^2}}.
\end{equation}

\subsubsection{Proof of the comparison}
\begin{proof}[Proof of Corollary~\ref{cor:gaussian-dominance}]
Set $x=\epsilon/(2\sigma)>0$, $u=x^2$, and $C=(1+4\sigma^2)^{-1/2}\in(0,1)$. Then $s=1-C^2\in(0,1)$, and the desired inequality is
\[
 \operatorname{erf}(x)\operatorname{erf}(x/\sqrt3)
      >C(1-e^{-su}).
\]
We use the bound $\operatorname{erf}(z)>\sqrt{1-e^{-z^2}}$ for $z>0$, associated with normal-integral inequalities such as those of Chu~\cite{chu1955bounds}. A direct proof follows by comparing a square to its inscribed disk:
\[
\begin{split}
 \operatorname{erf}(z)^2
 &=\frac1\pi\int_{[-z,z]^2}e^{-(v^2+w^2)}\,dv\,dw\\
 &>\frac1\pi\int_{v^2+w^2\le z^2}e^{-(v^2+w^2)}\,dv\,dw
 =1-e^{-z^2}.
\end{split}
\]
Strictness holds because the square outside the disk has positive area and the integrand is positive.
Write $f(t)=1-e^{-t}$ for $t\ge0$. The preceding bound gives
\[
 \operatorname{erf}(x)\operatorname{erf}(x/\sqrt3)
 >\sqrt{f(u)f(u/3)}.
\]
We have $f(u)>f(su)$ by strict monotonicity. We also claim that $f(u/3)>C^2f(su)$. If $s\le1/3$, this follows from $f(u/3)\ge f(su)>0$ and $C^2<1$. If $s>1/3$, concavity of $f$ and $f(0)=0$ imply
\[
 f(u/3)=f\!\left(\frac{su}{3s}\right)
 \ge\frac1{3s}f(su)>C^2f(su),
\]
since $3sC^2=3C^2(1-C^2)\le3/4<1$. Multiplying these positive inequalities yields
\[
 f(u)f(u/3)>C^2f(su)^2,
\]
which proves the required comparison after taking square roots.
\end{proof}

\subsection{Convergence of the clipped rule to the 1-NN rule (from Section~\ref{clipped2})}\label{clippy}

\noindent\textbf{Proof}.
For each $w>0$, define
\[
\psi_{\mathrm{clip},w}(X,Y,Z)
:=
\max\left\{
0,
\min\left\{
1,
\frac{1}{2}
+
w\bigl(k(X,Y)-k(Y,Z)\bigr)
\right\}
\right\},
\]
and let
\[
\Delta:=k(X,Y)-k(Y,Z).
\]
For every fixed realization of $(X,Y,Z)$, as $w\to\infty$ we have
\[
w\Delta\to
\begin{cases}
+\infty, & \Delta>0,\\
-\infty, & \Delta<0,\\
0, & \Delta=0.
\end{cases}
\]
Consequently,
\[
\psi_{\mathrm{clip},w}(X,Y,Z)
\longrightarrow
\psi_\infty(X,Y,Z),
\]
where
\[
\psi_\infty(X,Y,Z)
:=
\begin{cases}
1, & \Delta>0,\\
0, & \Delta<0,\\
\frac{1}{2}, & \Delta=0.
\end{cases}
\]
For the Gaussian kernel
\[
k(u,v)=\exp\bigl(-\|u-v\|^2\bigr),
\]
the function $t\mapsto e^{-t}$ is strictly decreasing. Hence
\[
\Delta>0
\iff
\|X-Y\|<\|Y-Z\|,
\]
\[
\Delta<0
\iff
\|X-Y\|>\|Y-Z\|,
\]
and
\[
\Delta=0
\iff
\|X-Y\|=\|Y-Z\|.
\]
Thus $\psi_\infty(X,Y,Z)$ is precisely the probability of predicting that
$Y$ was generated from $P$ under the 1-NN rule with uniform random
tie-breaking.
Let $L\in\{P,Q\}$ denote the distribution from which $Y$ was generated, so
that
\[
\mathbb{P}(L=P)=\mathbb{P}(L=Q)=\frac{1}{2},
\]
with
\[
Y\mid\{L=P\}\sim P,
\qquad
Y\mid\{L=Q\}\sim Q.
\]
For each $w>0$, define
\[
A_w
:=
\mathbf{1}_{\{L=P\}}\psi_{\mathrm{clip},w}(X,Y,Z)
+
\mathbf{1}_{\{L=Q\}}
\bigl(1-\psi_{\mathrm{clip},w}(X,Y,Z)\bigr).
\]
Conditional on $(X,Y,Z,L)$, the quantity $A_w$ is exactly the probability
that the clipped randomized classifier makes the correct prediction.
Therefore,
\[
\mathbb{E}[A_w]
\]
is the expected accuracy of that classifier.
Similarly, define
\[
A_\infty
:=
\mathbf{1}_{\{L=P\}}\psi_\infty(X,Y,Z)
+
\mathbf{1}_{\{L=Q\}}
\bigl(1-\psi_\infty(X,Y,Z)\bigr).
\]
Conditional on $(X,Y,Z,L)$, the quantity $A_\infty$ is exactly the
probability that the 1-NN rule with uniform random tie-breaking makes the
correct prediction. Hence
$\mathbb{E}[A_\infty]$
is the expected accuracy of that rule.
Since
\[
\psi_{\mathrm{clip},w}(X,Y,Z)
\longrightarrow
\psi_\infty(X,Y,Z)
\]
for every realization of $(X,Y,Z)$, it follows that
$A_w\longrightarrow A_\infty$
for every realization of $(X,Y,Z,L)$. Moreover,
$0\leq A_w\leq 1$
for every $w>0$. Thus the constant random variable $1$ is an integrable
dominating function.
For any sequence $(w_n)_{n\geq1}$ satisfying $w_n\to\infty$, the dominated
convergence theorem therefore gives
\[
\lim_{n\to\infty}\mathbb{E}[A_{w_n}]
=
\mathbb{E}[A_\infty].
\]
Since this holds for every sequence $w_n\to\infty$, we conclude that
\[
\lim_{w\to\infty}\mathbb{E}[A_w]
=
\mathbb{E}[A_\infty].
\]
Therefore, the expected accuracy of the clipped randomized rule converges
to the expected accuracy of the 1-NN rule with uniform random tie-breaking.
\hfill$\square$

 \subsection{Proof that the random question expected accuracy beats chance (from Section~\ref{beats2})}\label{proofbeyond}
 
\noindent\textbf{Proof.}
For each \(i \in \mathbb{N}\), let \(E_i\) denote the event that the
question \(A_i\) is selected so that
$\mathbb{P}(E_i) = \pi(i)$.
Fix \(i \in \mathbb{N}\), and write
$p_i := P(A_i)$ and
$q_i := Q(A_i)$.
Conditional on \(E_i\), the probability of correct classification is
the average of the success probabilities in the two cases where \(Y\)
is drawn from \(P\) and from \(Q\):
\[
\mathbb{P}(\mathrm{Correct} \mid E_i)
=
\frac{1}{2}
\mathbb{P}(\mathrm{Correct} \mid L=P, E_i)
+
\frac{1}{2}
\mathbb{P}(\mathrm{Correct} \mid L=Q, E_i).
\]
Suppose first that \(Y \sim P\). The classifier is correct in the
following three disjoint cases:
\begin{enumerate}
    \item \(Y \in A_i\), \(X \in A_i\), and \(Z \notin A_i\), which has
    probability
    $p_i^2(1-q_i)$.

    \item \(Y \notin A_i\), \(X \notin A_i\), and \(Z \in A_i\), which has
    probability
    $(1-p_i)^2 q_i$.

    \item The classifier uses its tie-breaking coin flip and predicts
    correctly. The coin flip is used exactly when \(X\) and \(Z\) give
    the same answer to the selected question. The probability of this
    event is
    $p_i q_i + (1-p_i)(1-q_i)$,
    and the independent fair coin predicts \(P\) correctly with
    probability \(1/2\). Hence, this case contributes
    $(1/2) [   p_i q_i + (1-p_i)(1-q_i)]$.
        
\end{enumerate}
Therefore,
\begin{align*}
\mathbb{P}(\mathrm{Correct} \mid L=P, E_i)
&=
p_i^2(1-q_i)
+
(1-p_i)^2 q_i
+
\frac{1}{2}
\bigl[
    p_i q_i + (1-p_i)(1-q_i)
\bigr]
\\
&=
(p_i^2-p_i^2 q_i)
+
(q_i-2p_i q_i+p_i^2 q_i)
+
\frac{1}{2}
\bigl(
    2p_i q_i+1-p_i-q_i
\bigr)
\\
&=
p_i^2-p_i q_i-\frac{1}{2}p_i+\frac{1}{2}q_i+\frac{1}{2}.
\end{align*}
By symmetry, interchanging \(p_i\) and \(q_i\) gives the corresponding
probability when \(Y \sim Q\):
\[
\mathbb{P}(\mathrm{Correct} \mid L=Q, E_i)
=
q_i^2-q_i p_i-\frac{1}{2}q_i+\frac{1}{2}p_i+\frac{1}{2}.
\]
Averaging the two conditional probabilities yields
\begin{align*}
\mathbb{P}(\mathrm{Correct} \mid E_i)
&=
\frac{1}{2}
\left[
    p_i^2+q_i^2-2p_i q_i+1
\right]
\\
&=
\frac{1}{2}
+
\frac{1}{2}(p_i-q_i)^2.
\end{align*}
Finally, applying the law of total probability over the selected
question gives
\begin{align*}
\mathbb{P}(\mathrm{Correct})
&=
\sum_{i=1}^{\infty}
\mathbb{P}(E_i)
\mathbb{P}(\mathrm{Correct} \mid E_i)
\\
&=
\sum_{i=1}^{\infty}
\pi(i)
\left[
    \frac{1}{2}
    +
    \frac{1}{2}(p_i-q_i)^2
\right]
\\
&=
\frac{1}{2}
+
\frac{1}{2}
\sum_{i=1}^{\infty}
\pi(i)
\bigl(
    P(A_i)-Q(A_i)
\bigr)^2.
\end{align*}
Because \(P \neq Q\) and \(\mathcal{G}\) is a generating
\(\pi\)-system for \(\mathcal{B}\), the uniqueness theorem for
probability measures implies that there exists \(k \in \mathbb{N}\)
such that
\[
P(A_k) \neq Q(A_k).
\]
Since \(\pi\) has full support, \(\pi(k)>0\), and hence
\[
\pi(k)
\bigl(
    P(A_k)-Q(A_k)
\bigr)^2
>0.
\]
Every term in the sum is nonnegative, and at least one term is positive, so the sum is strictly positive.
Therefore,
\[
\mathbb{P}(\mathrm{Correct})>\frac{1}{2}.
\]
$\hfill\square$

\vspace{1cm}

\subsection{Proof of the associated corollary on the binary agreement kernel (Section~\ref{beats2})}\label{binary}

\noindent\textit{1. Equivalence with the randomized kernel classifier.}
Fix $n\in\mathbb N$. Conditional on the event that the question $A_n$ is
selected and on the observations $(X,Y,Z)$, the probability that the
random-question classifier predicts $P$ is
\[
\frac{1}{2}
+
\frac{1}{2}
\bigl(
k_n(X,Y)-k_n(Y,Z)
\bigr).
\]
Indeed, this expression equals $1$ when $Y$ gives the same answer as $X$
but not as $Z$, equals $0$ when $Y$ gives the same answer as $Z$ but not
as $X$, and equals $1/2$ in all remaining cases.
Averaging over the independently selected index $n\sim\pi$ gives
\begin{align*}
\mathbb{P}(\text{predict }P\mid X,Y,Z)
&=
\sum_{n=1}^{\infty}
\pi(n)
\left[
\frac{1}{2}
+
\frac{1}{2}
\bigl(
k_n(X,Y)-k_n(Y,Z)
\bigr)
\right]
\\
&=
\frac{1}{2}
+
\frac{1}{2}
\bigl(
k_\pi(X,Y)-k_\pi(Y,Z)
\bigr).
\end{align*}
Hence the random-question classifier is exactly the randomized kernel
classifier associated with $k_\pi$.
Each $k_n$ is $(\mathcal B\otimes\mathcal B)$-measurable, being a sum of products of measurable coordinate indicators. Pointwise convergence of the partial sums defining $k_\pi$ proves its measurability.\\

\noindent\textit{2. Bounded, positive-semidefinite kernel.}
For each $n$, define $\phi_n:\Omega\to\{-1,+1\}$ by
$\phi_n(u)=2\mathbf{1}_{A_n}(u)-1$. A direct check gives
\[
k_n(u,v)
=
\mathbf{1}_{A_n}(u)\mathbf{1}_{A_n}(v)
+
\bigl(1-\mathbf{1}_{A_n}(u)\bigr)
\bigl(1-\mathbf{1}_{A_n}(v)\bigr)
=
\frac{1+\phi_n(u)\phi_n(v)}{2},
\]
so that
\[
k_\pi(u,v)
=
\sum_{n=1}^\infty \pi(n)k_n(u,v)
=
\frac12
+
\frac12\sum_{n=1}^\infty \pi(n)\phi_n(u)\phi_n(v)
=
\frac12
+
\frac12\bigl\langle \Phi(u),\Phi(v)\bigr\rangle_{\ell^2},
\]
where
\[
\Phi(u)
:=
\bigl(\sqrt{\pi(n)}\,\phi_n(u)\bigr)_{n\ge1}.
\]
Since $\phi_n(u)^2=1$ for every $u$ and $\sum_n\pi(n)=1$, we have
$\|\Phi(u)\|_{\ell^2}=1$ for every $u\in\Omega$. Two things follow
immediately:
\begin{itemize}
\item \textit{Boundedness}: by Cauchy--Schwarz,
\[
\bigl|\langle\Phi(u),\Phi(v)\rangle\bigr|
\leq
\|\Phi(u)\|\|\Phi(v)\|
=
1,
\]
so $k_\pi(u,v)\in[0,1]$ for all $u,v$, with $k_\pi(u,u)=1$.

\item \textit{Positive semidefiniteness}: for any
$u_1,\dots,u_m\in\Omega$ and $c_1,\dots,c_m\in\mathbb R$,
\[
\sum_{i,j}c_ic_j\,k_\pi(u_i,u_j)
=
\frac12\left(\sum_i c_i\right)^2
+
\frac12
\left\|
\sum_i c_i\Phi(u_i)
\right\|_{\ell^2}^2
\geq
0.
\]
\end{itemize}
This also guarantees that every expectation below is finite for arbitrary
$P,Q$, and, since each summand $\pi(n)k_n(u,v)\geq0$, allows us to
interchange $\sum_n$ and $\mathbb E[\cdot]$ by Tonelli's theorem.

\medskip
\noindent\textit{3. Term-by-term MMD.}
For a fixed set $A_n$, write
\[
p_n=P(A_n),
\qquad
q_n=Q(A_n).
\]
The expected values are
\begin{align*}
\mathbb{E}_{P\times P}[k_n]
&=
p_n^2+(1-p_n)^2
=
2p_n^2-2p_n+1,
\\
\mathbb{E}_{P\times Q}[k_n]
&=
p_nq_n+(1-p_n)(1-q_n)
=
2p_nq_n-p_n-q_n+1,
\\
\mathbb{E}_{Q\times Q}[k_n]
&=
q_n^2+(1-q_n)^2
=
2q_n^2-2q_n+1.
\end{align*}
Substituting these into the MMD expansion yields
\[
\mathbb{E}_{P\times P}[k_n]
-
2\mathbb{E}_{P\times Q}[k_n]
+
\mathbb{E}_{Q\times Q}[k_n]
=
2p_n^2-4p_nq_n+2q_n^2
=
2(p_n-q_n)^2.
\]
By the Tonelli justification in Step 2, summing over $n$ weighted by
$\pi(n)$ gives
\[
\mathrm{MMD}_{k_\pi}^2(P,Q)
=
2\sum_{n=1}^\infty
\pi(n)(p_n-q_n)^2.
\]
Substituting this identity into Theorem~\ref{theo:borel_accuracy} gives
\[
\mathbb{P}(\mathrm{Correct})
=
\frac{1}{2}
+
\frac{1}{4}\mathrm{MMD}_{k_\pi}^2(P,Q).
\]

\smallskip
\noindent\textit{4. Characteristic kernel.}
If $P=Q$, then clearly
\[
\mathrm{MMD}_{k_\pi}(P,Q)=0.
\]
Conversely, suppose $P\neq Q$. Because $\mathcal G$ is a generating
$\pi$-system for $\mathcal B$, there exists $n\in\mathbb N$ such that
\[
P(A_n)\neq Q(A_n).
\]
Since $\pi$ has full support,
\[
\pi(n)\bigl(P(A_n)-Q(A_n)\bigr)^2>0.
\]
It follows from
\[
\mathrm{MMD}_{k_\pi}^2(P,Q)
=
2\sum_{n=1}^{\infty}
\pi(n)\bigl(P(A_n)-Q(A_n)\bigr)^2
\]
that
\[
\mathrm{MMD}_{k_\pi}(P,Q)>0.
\]
Hence
\[
\mathrm{MMD}_{k_\pi}(P,Q)=0
\iff
P=Q,
\]
so $k_\pi$ is characteristic.
\hfill$\square$


\subsection{Random questions for predicting graph distribution class (from Section~\ref{beats2})}\label{Graphs}

This example considers the finite space of all labeled graphs on a fixed set of $n$ vertices. Let $V = \{1, 2, \dots, n\}$ be the vertex set, and let $E_{\max}$ be the set of all $m = \binom{n}{2}$ possible edges. The sample space $\Omega$ is the set of all $2^m$ possible labeled graphs on $V$, equipped with the discrete metric and its corresponding Borel $\sigma$-algebra, namely the power set $2^\Omega$.
For any subset of edges $S \subseteq E_{\max}$, define the event $A_S$ as the set of all graphs that contain every edge in $S$:
\[
A_S
=
\{G \in \Omega : S \subseteq E(G)\}.
\]
We define our dictionary of questions as the collection of all such edge-inclusion events:
\[
\mathcal{G}_{\mathrm{inc}}
=
\{A_S : S \subseteq E_{\max}\}.
\]
This collection $\mathcal{G}_{\mathrm{inc}}$ satisfies the two necessary properties required to apply the random-question framework.
First, $\mathcal{G}_{\mathrm{inc}}$ is a $\pi$-system. Because the vertices are labeled, the edges have fixed identities. If a graph $G$ contains the edge set $S_1$ and the edge set $S_2$, then it contains their union. Thus, the intersection of any two sets in $\mathcal{G}_{\mathrm{inc}}$ is another set in the collection:
\[
A_{S_1} \cap A_{S_2}
=
\{G \in \Omega : S_1 \subseteq E(G) \text{ and } S_2 \subseteq E(G)\}
=
A_{S_1 \cup S_2}.
\]
Second, $\mathcal{G}_{\mathrm{inc}}$ generates the Borel $\sigma$-algebra. It is sufficient to show that the $\sigma$-algebra generated by $\mathcal{G}_{\mathrm{inc}}$ contains every singleton graph $\{G\}$. Let $G$ be a specific labeled graph with edge set $E(G)$. The event $A_{E(G)}$ contains $G$ together with every supergraph of $G$. We isolate the singleton $\{G\}$ by removing the events corresponding to adding at least one edge from $E_{\max} \setminus E(G)$:
\[
\{G\}
=
A_{E(G)}
\setminus
\left(
\bigcup_{e \in E_{\max} \setminus E(G)}
A_{E(G) \cup \{e\}}
\right).
\]
Since $\sigma$-algebras are closed under finite unions and complements, it follows that $\{G\} \in \sigma(\mathcal{G}_{\mathrm{inc}})$. Hence $\mathcal{G}_{\mathrm{inc}}$ generates the full power set:
$\sigma(\mathcal{G}_{\mathrm{inc}})
=
2^\Omega$.
Consequently, if $P$ and $Q$ are two distinct probability distributions over labeled graphs on $V$, then there exists at least one edge subset $S^* \subseteq E_{\max}$ such that
$P(A_{S^*}) \neq Q(A_{S^*})$.
Fix $r \in (0,1/2)$ and define a full-support probability mass function $\pi$ on the $2^m$ sets in $\mathcal{G}_{\mathrm{inc}}$ by
\[
\pi(A_S)
=
r^{|S|}(1-r)^{m-|S|},
\qquad
S \subseteq E_{\max}.
\]
Equivalently, a random question is generated by including each possible edge independently with probability $r$. Every set $A_S \in \mathcal{G}_{\mathrm{inc}}$ therefore receives positive probability, while choosing $r<1/2$ places greater weight on questions involving smaller labeled edge sets. Theorem~\ref{theo:borel_accuracy} consequently guarantees that the random-question classifier has expected accuracy strictly greater than $1/2$ whenever $P \neq Q$.\\

\noindent\textbf{Illustrative example and exact benchmark.}
Let $P=\operatorname{ER}(n,p)$ and $Q=\operatorname{ER}(n,q)$ be Erd\H{o}s--R\'enyi graphs \cite{erdos1959random,gilbert1959random}, with $m=\binom n2$, and include each possible query edge independently with probability $r$. If $M=|S|$, then $M\sim\operatorname{Binomial}(m,r)$ and
\[
 P(A_S)=p^M,\qquad Q(A_S)=q^M.
\]
By Theorem~\ref{theo:borel_accuracy} and the binomial identity $\mathbb E[t^M]=(1-r+rt)^m$,
\begin{equation}\label{eq:graph-exact}
\begin{split}
 \operatorname{Acc}
 =\frac12+\frac12\big[&(1-r+rp^2)^m+(1-r+rq^2)^m\\
                     &-2(1-r+rpq)^m\big].
\end{split}
\end{equation}
The probability $I$ that the sampled question distinguishes the references is
\[
 I=(1-r+rp)^m+(1-r+rq)^m-2(1-r+rpq)^m,
\]
because conditional on $S$ that probability is $p^M(1-q^M)+(1-p^M)q^M$. When $I>0$, the accuracy on these trials is
\[
 \mathbb P(\mathrm{Correct}\mid I_X\neq I_Z)
   =\frac12+\frac{\operatorname{Acc}-1/2}{I},
\]
since all other trials are resolved by a fair coin. For $n=25$, $m=300$, $p=0.4$, $q=0.6$, and $r=0.005$, the expected query size is $mr=1.5$ and these exact formulas give
\mbox{$\operatorname{Acc} \approx 0.5134778754$,} $I \approx 0.3162514960$, and
 $\mathbb P(\mathrm{Correct}\mid I_X\neq I_Z) \approx0.5426175864$.
These are population benchmarks and do not rely on simulation.

This Erd\H{o}s--R\'enyi example is deliberately simple. In this
setting, the two graph distributions differ only in their edge
probabilities, and the number of edges in a graph is therefore highly
informative about its class. Indeed, when the two edge probabilities are
known, the Bayes classifier depends on \(Y\) only through \(|E(Y)|\) and
 has accuracy close to one for $n=25$, $p=0.4$, and $q=0.6$. The likelihood ratio is monotone in the edge count and crosses $1$ at $150$ edges. By symmetry, with $B\sim\operatorname{Binomial}(300,0.4)$ the Bayes accuracy is $\mathbb P(B<150)+\tfrac12\mathbb P(B=150)\approx0.999764394$. Thus, an
edge-count-based classifier can substantially outperform the
random-question rule in this particular example.

The purpose of the random-question rule is different. It does not assume
that the graph distributions are Erd\H{o}s--R\'enyi, or that edge count is
the relevant statistic. For example, two distinct graph distributions
may have the same edge-count distribution while assigning different
probabilities to particular labeled edge configurations. In that case,
the joint distribution of
$\bigl(|E(X)|,|E(Y)|,|E(Z)|\bigr)$
does not depend on the class generating \(Y\), and hence every classifier
based only on these edge counts has expected accuracy \(1/2\) under equal
class priors. By contrast, because the edge-inclusion events in
\(\mathcal G_{\mathrm{inc}}\) generate the full power set \(2^\Omega\), a
random-question classifier whose sampling distribution has full support
on \(\mathcal G_{\mathrm{inc}}\) has expected accuracy strictly greater
than \(1/2\) for every pair of distinct distributions \(P\neq Q\).

An explicit equal-edge-count example is as follows: take three labeled vertices and let $P$ put all mass on the graph with the single edge $\{1,2\}$, while $Q$ puts all mass on the graph with the single edge $\{1,3\}$. Every observation has one edge, so edge counts contain no class information. Only the query sets $S=\{\{1,2\}\}$ and $S=\{\{1,3\}\}$ distinguish the two laws; each has probability $r(1-r)^2$. The random-question accuracy is therefore $1/2+r(1-r)^2>1/2$.


\subsection{Proof that the deterministic order rule beats chance (from Section~\ref{subsec:order_rule})}\label{ordered}

\noindent\textbf{Proof of the absolutely continuous lemma.}
Define
$D=F-G$ 
and
$S=F+G$.
Since \(P\) and \(Q\) are absolutely continuous, \(D\) and \(S\) are
absolutely continuous, with
$D'=f-g$ and
$S'=f+g$
almost everywhere.
Let
\[
\pi_P
=
\mathbb P\bigl(C^{\mathrm{ord}}(X,Y,Z)=P\mid L=P\bigr)
\]
and
\[
\pi_Q
=
\mathbb P\bigl(C^{\mathrm{ord}}(X,Y,Z)=P\mid L=Q\bigr).
\]
When \(Y\sim P\), conditioning on \(X\) in the event \(X<Z\), and on
\(Z\) in the event \(X>Z\), gives
\[
\pi_P
=
\int_{\mathbb R}
F(t)\bigl(1-G(t)\bigr)f(t)\,dt
+
\int_{\mathbb R}
\bigl(1-F(t)\bigr)^2g(t)\,dt.
\]
Similarly, when \(Y\sim Q\),
\[
\pi_Q
=
\int_{\mathbb R}
G(t)\bigl(1-G(t)\bigr)f(t)\,dt
+
\int_{\mathbb R}
\bigl(1-F(t)\bigr)\bigl(1-G(t)\bigr)g(t)\,dt.
\]
The overall probability of correct classification is
\[
\mathbb P(\mathrm{Correct})
=
\frac12\pi_P+\frac12(1-\pi_Q).
\]
Consequently,
\begin{align*}
2\mathbb P(\mathrm{Correct})-1
&=
\pi_P-\pi_Q
\\
&=
\int_{\mathbb R}
D(t)\bigl(1-G(t)\bigr)f(t)\,dt
-
\int_{\mathbb R}
D(t)\bigl(1-F(t)\bigr)g(t)\,dt
\\
&=
\int_{\mathbb R}D(t)\bigl(f(t)-g(t)\bigr)\,dt
+
\int_{\mathbb R}
D(t)\bigl(F(t)g(t)-G(t)f(t)\bigr)\,dt.
\end{align*}
Since \(D'=f-g\), the first term is
\[
\int_{\mathbb R}D(t)D'(t)\,dt
=
\frac12
\bigl[D(t)^2\bigr]_{-\infty}^{+\infty}
=
0,
\]
because
$D(-\infty)=D(+\infty)=0$.
Moreover,
\[
F=\frac{S+D}{2},
\qquad
G=\frac{S-D}{2},
\qquad
f=\frac{S'+D'}{2},
\qquad
g=\frac{S'-D'}{2},
\]
and hence
\[
Fg-Gf
=
\frac12\bigl(DS'-SD'\bigr)
\]
almost everywhere. It follows that
\begin{align*}
2\mathbb P(\mathrm{Correct})-1
&=
\frac12
\int_{\mathbb R}D(t)^2S'(t)\,dt
-
\frac12
\int_{\mathbb R}S(t)D(t)D'(t)\,dt
\\
&=
\frac12
\int_{\mathbb R}D(t)^2S'(t)\,dt
-
\frac14
\int_{\mathbb R}S(t)\bigl(D(t)^2\bigr)'\,dt.
\end{align*}
Ordinary integration by parts gives
\[
\int_{\mathbb R}S(t)\bigl(D(t)^2\bigr)'\,dt
=
\bigl[S(t)D(t)^2\bigr]_{-\infty}^{+\infty}
-
\int_{\mathbb R}S'(t)D(t)^2\,dt.
\]
The boundary term vanishes because \(D(t)\to0\) as
\(t\to\pm\infty\), while \(0\leq S(t)\leq2\). Therefore,
\[
\int_{\mathbb R}S(t)\bigl(D(t)^2\bigr)'\,dt
=
-
\int_{\mathbb R}S'(t)D(t)^2\,dt.
\]
Substituting this identity gives
\[
2\mathbb P(\mathrm{Correct})-1
=
\frac34
\int_{\mathbb R}D(t)^2S'(t)\,dt.
\]
Since \(S'=f+g\) almost everywhere,
\[
\mathbb P(\mathrm{Correct})
=
\frac12
+
\frac38
\int_{\mathbb R}
D(t)^2\bigl(f(t)+g(t)\bigr)\,dt.
\]
It remains to prove strict positivity when \(P\neq Q\). In that case
\(F\neq G\), so there exists \(x\in\mathbb R\) for which
\(D(x)\neq0\). The cases \(D(x)>0\) and \(D(x)<0\) are symmetric, so
suppose that \(D(x)>0\), and set
\[
c=\frac{D(x)}{2}>0.
\]
Since \(D(t)\to0\) as \(t\to-\infty\), continuity of \(D\) implies that
the set
\[
\{t\leq x:D(t)=c\}
\]
is nonempty. Let
$a=\sup\{t\leq x:D(t)=c\}$.
Then \(a<x\), \(D(a)=c\), and
$D(t)>c$
for every $t\in(a,x]$.
Furthermore,
\begin{align*}
(P-Q)((a,x])
&=
F(x)-F(a)-G(x)+G(a)
\\
&=
D(x)-D(a)
\\
&=
c
>
0.
\end{align*}
It follows that
$(P+Q)((a,x])>0$.
Consequently,
\begin{align*}
\int_{\mathbb R}
D(t)^2\,d(P+Q)(t)
&\geq
\int_{(a,x]}
D(t)^2\,d(P+Q)(t)
\\
&\geq
c^2(P+Q)((a,x])
\\
&>
0.
\end{align*}
Therefore,
\[
\mathbb P(\mathrm{Correct})>\frac12.
\]
\hfill$\square$

\vspace{1cm}

\noindent\textbf{The deterministic order rule for arbitrary distributions.}
We now derive the exact accuracy of the deterministic order rule when
\(P\) and \(Q\) are arbitrary probability distributions on
\(\mathbb R\). In this setting, atoms may be present, so the precise
placement of the strict and weak inequalities in the rule matters.
Let \(F\) and \(G\) denote the cumulative distribution functions of
\(P\) and \(Q\). For any right-continuous function \(H\) with left
limits, write
\[
H_-(t)=\lim_{s\uparrow t}H(s)
\]
and
\[
\Delta H(t)=H(t)-H_-(t).
\]
Define
\[
D(t)=F(t)-G(t)
\qquad\text{and}\qquad
D_-(t)=F_-(t)-G_-(t).
\]
Thus,
\[
\Delta D(t)
=
P(\{t\})-Q(\{t\}).
\]
Recall that the order rule predicts \(P\) precisely on the event
\[
\{X<Z,\ Y\leq X\}
\cup
\{X>Z,\ Y>Z\},
\]
and predicts \(Q\) otherwise.

\begin{theo}
\label{theo:order_rule}
Let \(P\) and \(Q\) be arbitrary probability distributions on
\(\mathbb R\), and let
\[
X\sim P,
\qquad
Z\sim Q,
\qquad
Y\sim\frac12P+\frac12Q
\]
be independent. Then the probability of correct classification of the
deterministic order rule is
\begin{align}
\mathbb P(\mathrm{Correct})
=
\frac12
&+
\frac14
\int_{\mathbb R}
D(t)^2\,d(P+Q)(t)
\notag\\
&+
\frac18
\int_{\mathbb R}
D_-(t)^2\,d(P+Q)(t)
\notag\\
&+
\frac18
\sum_{t\in\mathbb R}
\bigl(2-F(t)-G(t)\bigr)
\bigl(\Delta D(t)\bigr)^2.
\label{eq:order_rule_general_exact}
\end{align}
Only countably many terms in the sum are nonzero, and the sum is
finite. Moreover,
\[
\mathbb P(\mathrm{Correct})=\frac12
\]
if and only if \(P=Q\).
\end{theo}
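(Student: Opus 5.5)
The plan is to redo the computation in the proof of Lemma~\ref{lem:order_rule_ac} with Lebesgue--Stieltjes integrals, keeping track of every inequality so that the atoms are handled correctly. The steps are: (i) an exact formula for $\pi_P-\pi_Q$ valid for arbitrary laws; (ii) a rewriting in the coordinates $D=F-G$, $S=F+G$; (iii) integration by parts with jump corrections to reach \eqref{eq:order_rule_general_exact}; (iv) the strictness claim.

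\textbf{Step (i): the prediction probability for a general target law.} Let $R$ be the law of $Y$, with cdf $H$. The rule predicts $P$ on $\{X<Z,\,Y\le X\}\cup\{X>Z,\,Y>Z\}$.
\begin{itemize}
\item On the first event, condition on $X=t$. This gives $\mathbb P(Z>t)\,\mathbb P(Y\le t)=(1-G(t))H(t)$.
\item On the second event, condition on $Z=t$. This gives $(1-F(t))(1-H(t))$.
\end{itemize}
These are exact even with atoms, because the weak and strict inequalities match right-continuous cdfs. Hence
\[
\mathbb P(\text{predict }P)=\int H(1-G)\,dP+\int(1-F)(1-H)\,dQ .
\]
Taking $H=F$ and $H=G$ and subtracting gives
\[
2\mathbb P(\mathrm{Correct})-1=\pi_P-\pi_Q=\int D(1-G)\,dP-\int D(1-F)\,dQ .
\]

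\textbf{Step (ii): rewriting in $D$ and $S$.} Split the right-hand side as
\[
\int D\,dD+\Bigl(\int DF\,dQ-\int DG\,dP\Bigr).
\]
Substitute $dP=\tfrac12(dS+dD)$, $dQ=\tfrac12(dS-dD)$, $F=\tfrac12(S+D)$, $G=\tfrac12(S-D)$. The bracket becomes
\[
\tfrac12\int D^2\,dS-\tfrac12\int SD\,dD ,
\]
and $dS=d(P+Q)$.

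\textbf{Step (iii): integration by parts with jumps.} For right-continuous functions of bounded variation, I will use the jump-corrected product rules.
\begin{itemize}
\item $0=[D^2]_{-\infty}^{\infty}=\int(D+D_-)\,dD$. This gives $\int D\,dD=\tfrac12\sum_t(\Delta D(t))^2$.
\item Apply the product rule to $SD^2$, which vanishes at $\pm\infty$. Write $d(D^2)=(D+D_-)\,dD$, and write the mixed terms at atoms explicitly using $S=S_-+\Delta S$ and $D=D_-+\Delta D$. This expresses $\int SD\,dD$ through $\int D^2\,dS$, $\int D_-^2\,dS$ and a sum over atoms of terms in $(\Delta D)^2$ weighted by $S$ or $S_-$.
\end{itemize}
Collecting everything, I expect the coefficients to come out as $\tfrac12$ on $\int D^2\,d(P+Q)$ and $\tfrac14$ on $\int D_-^2\,d(P+Q)$. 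The jump sum should assemble into $\tfrac14\sum_t(2-S(t))(\Delta D(t))^2$, after combining the $\tfrac12\sum_t(\Delta D)^2$ from the first bullet with the $S$-weighted jump terms. Halving then gives \eqref{eq:order_rule_general_exact}.

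As a consistency check, in the atomless case $D_-=D$ and the jump sum vanishes, so the formula collapses to the $\tfrac38$ of Lemma~\ref{lem:order_rule_ac}. The jump sum runs only over the countably many atoms of $P+Q$. It is finite because $0\le 2-S\le 2$ and $\sum_t|\Delta D(t)|\le 2$.

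\emph{This bookkeeping is the main obstacle.} The risk is a sign or coefficient slip in the cross terms at atoms. I will first verify the final identity directly on a one-atom example, such as $P=\delta_0$, $Q=\delta_1$, before trusting the general algebra.

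\textbf{Step (iv): strictness.} All three correction terms are nonnegative, since $2-F-G\ge 0$. If $P=Q$ then $D\equiv 0$ and the accuracy is exactly $\tfrac12$.

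Conversely, suppose $P\ne Q$ and all terms vanish. Pick $x$ with $D(x)\ne 0$, and let $s=\sup\bigl(\operatorname{supp}(P+Q)\cap(-\infty,x]\bigr)$. This set is nonempty, since otherwise $D(x)=0$.
\begin{itemize}
\item $D$ is constant on $[s,x]$, because $(P+Q)$ puts no mass on $(s,x]$. So $D(s)=D(x)\ne 0$.
\item If $s$ is an atom of $P+Q$, then $\int D^2\,d(P+Q)\ge D(s)^2(P+Q)\{s\}>0$.
\item Otherwise, right-continuity and continuity of $D$ at $s$ give $|D|>|D(x)|/2$ on a neighbourhood of $s$. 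That neighbourhood has positive $(P+Q)$-mass because $s$ lies in the support, so again $\int D^2\,d(P+Q)>0$.
\end{itemize}
Hence the accuracy exceeds $\tfrac12$, which also proves Theorem~\ref{theo:order_rule_general}.
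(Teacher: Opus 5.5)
Your proposal is correct. Steps (i)--(iii) follow the paper's proof: the same conditioning formulas for $\pi_P,\pi_Q$, the same decomposition $J=\int D\,dD+\tfrac12\int D^2\,dS-\tfrac12\int SD\,dD$, and the same jump-corrected integration by parts. The coefficients you predict for $J$, namely $\tfrac12$, $\tfrac14$ and $\tfrac14(2-S)$, are exactly what the paper obtains.

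The bookkeeping you flag as the main obstacle is lighter than you fear if you use the one-sided rule $d(UV)=U\,dV+V_-\,dU$ with $U=S$ and $V=D^2$. Then $d(SD^2)=S(2D-\Delta D)\,dD+D_-^2\,dS$, which gives directly
\[
2\int SD\,dD=\sum_t S(t)\bigl(\Delta D(t)\bigr)^2-\int D_-^2\,dS .
\]
This avoids splitting $S=S_-+\Delta S$, which would produce cross terms in $\Delta S\,\Delta D$ that you would then have to recombine.

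Where you genuinely differ is Step (iv).
\begin{itemize}
\item The paper uses the vanishing of both $\int D^2\,d(P+Q)$ and $\int D_-^2\,d(P+Q)$ to get $D=D_-=0$ at every atom, so $D$ has no jumps. It then works on the connected components of the open set $\{D\neq0\}$. These components are $(P+Q)$-null, so $D$ is constant on each, and endpoint or limit values force that constant to be $0$.
\item You instead show directly that $\int D^2\,d(P+Q)>0$ whenever $P\neq Q$, via the last support point $s\le x$ and an atom versus non-atom case split.
\end{itemize}
Your route is shorter and proves slightly more. The first correction term alone certifies strictness, with no need for the $D_-$ term or a continuity step; this mirrors the sup argument the paper uses for the atomless lemma. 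The paper's route avoids the notion of support.

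The one fact you should state explicitly is that the complement of $\operatorname{supp}(P+Q)$ is $(P+Q)$-null. This holds on $\mathbb R$ by second countability. It is what gives both the nonemptiness of your set and $(P+Q)((s,x])=0$.
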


\vspace{0.5cm}

\noindent\textbf{Proof.}
All integrals below are Lebesgue--Stieltjes integrals. Let
\[
\pi_P
=
\mathbb P\bigl(
C^{\mathrm{ord}}(X,Y,Z)=P
\mid L=P
\bigr)
\]
and
\[
\pi_Q
=
\mathbb P\bigl(
C^{\mathrm{ord}}(X,Y,Z)=P
\mid L=Q
\bigr).
\]
Suppose first that \(Y\sim P\). On the event \(X<Z\), the rule predicts
\(P\) when \(Y\leq X\). Conditioning on \(X=t\), this event has
probability
\[
F(t)\bigl(1-G(t)\bigr).
\]
On the event \(X>Z\), the rule predicts \(P\) when \(Y>Z\).
Conditioning on \(Z=t\), this event has probability
\[
\bigl(1-F(t)\bigr)^2.
\]
Therefore,
\[
\pi_P
=
\int_{\mathbb R}
F(t)\bigl(1-G(t)\bigr)\,dP(t)
+
\int_{\mathbb R}
\bigl(1-F(t)\bigr)^2\,dQ(t).
\]
Similarly, when \(Y\sim Q\),
\[
\pi_Q
=
\int_{\mathbb R}
G(t)\bigl(1-G(t)\bigr)\,dP(t)
+
\int_{\mathbb R}
\bigl(1-F(t)\bigr)\bigl(1-G(t)\bigr)\,dQ(t).
\]
Since the two possible target classes have equal prior probability,
\[
\mathbb P(\mathrm{Correct})
=
\frac12\pi_P+\frac12(1-\pi_Q).
\]
Consequently, if
\[
J=2\mathbb P(\mathrm{Correct})-1,
\]
then
\begin{align}
J
&=
\pi_P-\pi_Q
\notag\\
&=
\int_{\mathbb R}
D(t)\bigl(1-G(t)\bigr)\,dF(t)
-
\int_{\mathbb R}
D(t)\bigl(1-F(t)\bigr)\,dG(t).
\label{eq:order_general_advantage}
\end{align}
Define
\[
S=F+G.
\]
Expanding \eqref{eq:order_general_advantage} gives
\begin{align*}
J
&=
\int_{\mathbb R}D\,dF
-
\int_{\mathbb R}DG\,dF
-
\int_{\mathbb R}D\,dG
+
\int_{\mathbb R}DF\,dG
\\
&=
\int_{\mathbb R}D\,dD
+
\int_{\mathbb R}D(F\,dG-G\,dF).
\end{align*}
Since
\[
F=\frac12(S+D),
\qquad
G=\frac12(S-D),
\]
linearity of the corresponding Lebesgue--Stieltjes measures gives
\[
F\,dG-G\,dF
=
\frac12(D\,dS-S\,dD).
\]
Hence
\begin{equation}
\label{eq:order_general_clean}
J
=
\int_{\mathbb R}D\,dD
+
\frac12\int_{\mathbb R}D^2\,dS
-
\frac12\int_{\mathbb R}SD\,dD.
\end{equation}
We now evaluate the first and third terms in
\eqref{eq:order_general_clean}. For right-continuous functions of
bounded variation, integration by parts gives
\[
d(UV)=U\,dV+V_-\,dU.
\]
Applying this identity to \(D^2\) yields
\[
d(D^2)
=
(D+D_-)\,dD
=
(2D-\Delta D)\,dD.
\]
Because
\[
D(-\infty)=D(+\infty)=0,
\]
integration over \(\mathbb R\) gives
\[
0
=
2\int_{\mathbb R}D\,dD
-
\int_{\mathbb R}\Delta D\,dD.
\]
The function \(\Delta D\) is nonzero only at the jump points of \(D\),
and
\[
dD(\{t\})=\Delta D(t).
\]
It follows that
\begin{equation}
\label{eq:order_general_DdD}
\int_{\mathbb R}D\,dD
=
\frac12
\sum_{t\in\mathbb R}
\bigl(\Delta D(t)\bigr)^2.
\end{equation}
Next, applying integration by parts to the product \(SD^2\) gives
\[
d(SD^2)
=
S\,d(D^2)+D_-^2\,dS.
\]
Using
\[
d(D^2)=(2D-\Delta D)\,dD
\]
and
\[
\bigl[SD^2\bigr]_{-\infty}^{+\infty}=0,
\]
we obtain
\[
0
=
2\int_{\mathbb R}SD\,dD
-
\int_{\mathbb R}S\Delta D\,dD
+
\int_{\mathbb R}D_-^2\,dS.
\]
Therefore,
\begin{align}
2\int_{\mathbb R}SD\,dD
&=
\int_{\mathbb R}S\Delta D\,dD
-
\int_{\mathbb R}D_-^2\,dS
\notag\\
&=
\sum_{t\in\mathbb R}
S(t)\bigl(\Delta D(t)\bigr)^2
-
\int_{\mathbb R}D_-(t)^2\,dS(t).
\label{eq:order_general_SDdD}
\end{align}
Substituting \eqref{eq:order_general_DdD} and
\eqref{eq:order_general_SDdD} into
\eqref{eq:order_general_clean} gives
\begin{align*}
J
&=
\frac12
\sum_t\bigl(\Delta D(t)\bigr)^2
+
\frac12
\int_{\mathbb R}D(t)^2\,dS(t)
\\
&\qquad
-
\frac14
\sum_tS(t)\bigl(\Delta D(t)\bigr)^2
+
\frac14
\int_{\mathbb R}D_-(t)^2\,dS(t)
\\
&=
\frac12
\int_{\mathbb R}D(t)^2\,dS(t)
+
\frac14
\int_{\mathbb R}D_-(t)^2\,dS(t)
\\
&\qquad
+
\frac14
\sum_t
\bigl(2-S(t)\bigr)
\bigl(\Delta D(t)\bigr)^2.
\end{align*}
Since
\[
dS=d(P+Q),
\qquad
S=F+G,
\]
and
\[
\mathbb P(\mathrm{Correct})
=
\frac12+\frac12J,
\]
we obtain \eqref{eq:order_rule_general_exact}.
We next verify that the jump sum is well defined and finite. The set of
jump points of \(D\) is contained in the union of the sets of atoms of
\(P\) and \(Q\), and is therefore at most countable. Moreover,
\[
0\leq 2-F(t)-G(t)\leq2.
\]
Writing
\[
\Delta F(t)=P(\{t\}),
\qquad
\Delta G(t)=Q(\{t\}),
\]
we have
\[
\bigl(\Delta D(t)\bigr)^2
=
\bigl(\Delta F(t)-\Delta G(t)\bigr)^2
\leq
\Delta F(t)+\Delta G(t).
\]
Consequently,
\begin{align*}
\sum_t
\bigl(2-F(t)-G(t)\bigr)
\bigl(\Delta D(t)\bigr)^2
&\leq
2\sum_t
\bigl(\Delta F(t)+\Delta G(t)\bigr)
\\
&\leq4.
\end{align*}
Thus the sum is finite.
All three terms added to \(1/2\) in
\eqref{eq:order_rule_general_exact} are nonnegative. If \(P=Q\), then
\(D=0\), and hence
\[
\mathbb P(\mathrm{Correct})=\frac12.
\]
Conversely, suppose that
\[
\mathbb P(\mathrm{Correct})=\frac12.
\]
The nonnegativity of the terms in
\eqref{eq:order_rule_general_exact} implies
\[
\int_{\mathbb R}D(t)^2\,d(P+Q)(t)=0
\]
and
\[
\int_{\mathbb R}D_-(t)^2\,d(P+Q)(t)=0.
\]
Let
\[
\mu=P+Q.
\]
It follows that
\[
D=0
\qquad\text{and}\qquad
D_-=0
\]
\(\mu\)-almost everywhere.
If \(\Delta D(t)\neq0\), then \(t\) is an atom of \(P\) or \(Q\), so
\[
\mu(\{t\})>0.
\]
The preceding almost-everywhere equalities would therefore imply
\[
D(t)=D_-(t)=0,
\]
contradicting
\[
\Delta D(t)=D(t)-D_-(t)\neq0.
\]
Thus \(D\) has no jumps and is continuous.
Let
\[
U=\{t\in\mathbb R:D(t)\neq0\}.
\]
Since \(D\) is continuous, \(U\) is open. Since \(D=0\)
\(\mu\)-almost everywhere,
\[
\mu(U)=0.
\]
Let \(I\) be a connected component of \(U\). For any \(s<t\) in \(I\),
we have
\[
(s,t]\subseteq I
\]
and hence
\[
\mu((s,t])=0.
\]
Therefore,
\[
D(t)-D(s)
=
(P-Q)((s,t])
=
0.
\]
Thus \(D\) is constant on \(I\). Since \(I\subseteq U\), this constant
would have to be nonzero. If \(I\) has a finite endpoint, continuity of
\(D\) forces the constant to be zero at that endpoint. If \(I\) is
unbounded, the same conclusion follows from
\[
D(-\infty)=D(+\infty)=0.
\]
Both possibilities are contradictions. Hence \(U\) is empty, so
\[
D=0
\]
everywhere. Therefore \(F=G\), and consequently \(P=Q\).
This proves that equality holds if and only if \(P=Q\), and completes
the proof.
\hfill $\square$


\subsection{Proof that no fixed rule beats $1/2$ for an unknown unbalanced prior on the class of $Y$ (from 
Section~\ref{subsec:unknown-adversarial})}\label{Alpha}
 
Suppose, toward a contradiction, that such a rule $\psi$ exists.
Fix two distinct points $u,v\in\Omega$ and, for $p,q\in(0,1)$, let
\[
P_p:=p\delta_u+(1-p)\delta_v,
\qquad
Q_q:=q\delta_u+(1-q)\delta_v.
\]
Then $P_p\neq Q_q$ whenever $p\neq q$. Write
\[
b_P(p,q):=B_P(\psi;P_p,Q_q),
\qquad
b_Q(p,q):=B_Q(\psi;P_p,Q_q).
\]
For every $p\neq q$, the assumed guarantee gives
\[
\alpha b_P(p,q)+(1-\alpha)b_Q(p,q)>\frac12
\]
for every $\alpha\in(0,1)\setminus\{1/2\}$. Letting
$\alpha\uparrow1$ and $\alpha\downarrow0$ yields
\begin{equation}
b_P(p,q)\geq\frac12,
\qquad
b_Q(p,q)\geq\frac12
\qquad (p\neq q).
\label{eq:alpha-classwise}
\end{equation}
Both functions are continuous in $(p,q)$, so these inequalities also
hold when $p=q$. But when $p=q$, all three observations have the same
distribution, and hence
\[
b_P(p,p)+b_Q(p,p)=1.
\]
Therefore
\[
b_P(p,p)=b_Q(p,p)=\frac12.
\]
Now fix $p\in(0,1)$. In $b_P(p,q)$, the parameter $q$ enters only
through $Z\sim Q_q$, so $q\mapsto b_P(p,q)$ is affine. By
\eqref{eq:alpha-classwise},
\[
b_P(p,q)\geq\frac12
\]
for $q\neq p$, while $b_P(p,p)=1/2$. An affine function attaining its
minimum at an interior point must be constant, so
\[
b_P(p,q)=\frac12
\qquad\text{for every }q\in(0,1).
\]
Similarly, for fixed $q$, the function $p\mapsto b_Q(p,q)$ is affine,
and therefore
\[
b_Q(p,q)=\frac12
\qquad\text{for every }p\in(0,1).
\]
Thus, for every distinct $p,q$ and every $\alpha\in(0,1)$,
\[
\operatorname{Acc}_{\alpha}(\psi;P_p,Q_q)
=
\alpha\frac12+(1-\alpha)\frac12
=
\frac12,
\]
contradicting the assumed strict inequality.
\hfill$\square$


\subsection{Proof that an adversary cannot be uniformly beaten (from 
Section~\ref{subsec:unknown-adversarial})}\label{adversary}

\noindent\textbf{Proof.}
We prove the contrapositive of Theorem~\ref{theo:adv-no-universal}. Suppose that
\[
\operatorname{Acc}_{\mathrm{adv}}(\psi;P,Q)
\geq \frac12
\]
for every pair of distinct finitely supported probability measures
\(P\neq Q\) on \(\Omega\).
Fix an arbitrary finite set
\[
S=\{s_1,\ldots,s_m\}\subseteq\Omega,
\qquad m\geq2.
\]
For probability vectors
\[
p=(p_1,\ldots,p_m),
\qquad
q=(q_1,\ldots,q_m),
\]
define
\[
P_p:=\sum_{i=1}^m p_i\delta_{s_i},
\qquad
Q_q:=\sum_{i=1}^m q_i\delta_{s_i}.
\]
Since the points \(s_i\) are distinct and their singletons are
measurable, \(P_p=Q_q\) if and only if \(p=q\).
The adversarial accuracy corresponding to these measures is
\begin{align}
A_S(p,q)
:={}&
\operatorname{Acc}_{\mathrm{adv}}(\psi;P_p,Q_q)
\notag\\
={}&
\sum_{i,k=1}^m p_iq_k
\min\left\{
\sum_{j=1}^m p_j\psi(s_i,s_j,s_k),
\sum_{j=1}^m q_j\bigl(1-\psi(s_i,s_j,s_k)\bigr)
\right\}.
\label{eq:adv-finite-simplex}
\end{align}
The function \(A_S\) is continuous in \(p\) and \(q\), since it is a
finite sum formed from additions, multiplications, and the minimum of two
linear functions.
Now let
$r=(r_1,\ldots,r_m)$
be any probability vector with \(r_j>0\) for every \(j\), and define
$R:=P_r=Q_r$.
Choose a sequence \(\varepsilon_n>0\) such that
\[
\varepsilon_n\longrightarrow0
\qquad\text{and}\qquad
\varepsilon_n<\min\{r_1,r_2\}.
\]
Set
\[
q^{(n)}
=
r+\varepsilon_n(e_1-e_2),
\]
where \(e_1,e_2\) are the first two standard basis vectors. Each
\(q^{(n)}\) is a probability vector, \(q^{(n)}\neq r\), and
$q^{(n)}\longrightarrow r$.
By the assumed lower bound,
\[
A_S(r,q^{(n)})\geq\frac12
\]
for every \(n\). Continuity of \(A_S\) therefore gives
\[
A_S(r,r)\geq\frac12.
\]
For \(x,z\in S\), define
\[
g_R(x,z)
:=
\int\psi(x,y,z)\,dR(y).
\]
When \(P=Q=R\), choosing either label produces the same distribution for
\(Y\). Hence
\[
a_P(x,z)=g_R(x,z),
\qquad
a_Q(x,z)=1-g_R(x,z).
\]
It follows that
\begin{align}
\operatorname{Acc}_{\mathrm{adv}}(\psi;R,R)
&=
\sum_{i,k=1}^m
r_ir_k
\min\{g_R(s_i,s_k),1-g_R(s_i,s_k)\}
\notag\\
&=
\frac12
-
\sum_{i,k=1}^m
r_ir_k
\left|
g_R(s_i,s_k)-\frac12
\right|
\leq
\frac12.
\label{eq:adv-equal-distributions}
\end{align}
Together with the preceding lower bound, this forces equality. Since
\(r_ir_k>0\) for every \(i,k\), we obtain
\[
g_R(s_i,s_k)=\frac12
\qquad
\text{for every }i,k.
\]
Equivalently,
\begin{equation}
\sum_{j=1}^m
r_j\psi(s_i,s_j,s_k)
=
\frac12
\qquad
\text{for every }i,k.
\label{eq:adv-affine-identity}
\end{equation}
This identity holds for every strictly positive probability vector \(r\).
Fix \(i,k\), and choose two distinct indices \(j,\ell\). Starting from
any strictly positive probability vector \(r\), replace it by
$r+\varepsilon(e_j-e_\ell)$
for sufficiently small nonzero \(\varepsilon\). Applying
Equation~\eqref{eq:adv-affine-identity} to both probability vectors and
subtracting gives
\[
\psi(s_i,s_j,s_k)
=
\psi(s_i,s_\ell,s_k).
\]
Thus, for fixed \(i,k\), all the values
\[
\psi(s_i,s_1,s_k),\ldots,\psi(s_i,s_m,s_k)
\]
are equal. Equation~\eqref{eq:adv-affine-identity} then implies that their
common value is \(1/2\). Therefore,
\[
\psi(s_i,s_j,s_k)=\frac12
\qquad
\text{for every }i,j,k.
\]
Finally, any prescribed triple \((x,y,z)\in\Omega^3\) is contained in
some finite set \(S\subseteq\Omega\) having at least two points. Applying
the preceding argument to that set gives
\[
\psi(x,y,z)=\frac12.
\]
Since the triple was arbitrary,
$\psi\equiv 1/2$ on
$\Omega^3$.
This proves the contrapositive. Conversely, $\psi\equiv1/2$ gives $a_P=a_Q=1/2$ for every pair of laws, establishing the stated equivalence.
\hfill\(\square\)

\end{document}